\documentclass[12pt,a4paper,reqno]{amsart}
\usepackage{fullpage}
\usepackage{mathpazo}
\usepackage[english]{babel}

\usepackage[utf8]{inputenc} 
\usepackage[T1]{fontenc}    
\usepackage{hyperref}       
\usepackage{url}            
\usepackage{booktabs}       
\usepackage{amsfonts}       
\usepackage{nicefrac}       
\usepackage{microtype} 

\usepackage{csquotes}

\usepackage{amsmath,amssymb,amsthm,bbm}

\usepackage{xy}
\input xy \xyoption{all}

\usepackage{makecell}

\usepackage{cleveref}

\usepackage[dvipsnames]{xcolor}

\usepackage{tikz}    
\usepackage{pgfplots}
\usepackage{mathrsfs}  

\usepackage{enumerate}\usepackage[inline]{enumitem}

\newcommand{\ra}[1]{\renewcommand{\arraystretch}{#1}}
\usepackage{caption}

\usepackage[
 backend=biber,
 style=numeric,
 giveninits=true,
 maxcitenames = 10,
 maxbibnames = 10
 ]{biblatex}
 \DeclareSourcemap{
  \maps[datatype=bibtex]{
    \map{
      \step[fieldset=isbn, null]
      \step[fieldset=issn, null]
      \step[fieldset=doi, null]
      \step[fieldset=url, null]
      \step[fieldset=mrclass, null]
      \step[fieldset=mrnumber, null]
      \step[fieldset=note, null]
      \step[fieldset=abstract, null]
    }
    \map[overwrite=true]{
      \step[fieldsource=fjournal]
      \step[fieldset=journal, origfieldval]
    }
  }
}
\addbibresource{biblio.bib}

\newcommand{\be}{\beta}
\newcommand{\Th}{\Theta}
\let\th\relax\newcommand{\th}{\theta}

\newcommand{\La}{\Lambda}
\newcommand{\si}{\sigma}

\newcommand{\ga}{\gamma}

\newcommand{\al}{\alpha}
\newcommand{\de}{\delta}

\newcommand{\N}{\mathbb{N}}

\newcommand{\R}{\mathbb{R}}
\newcommand{\C}{\mathbb{C}}

\let\P\relax \newcommand{\P}{\mathbb{P}}

\let\aa\relax \newcommand{\aa}{\mathcal{A}}

\newcommand{\rr}{\mathcal{R}}
\newcommand{\xx}{\mathcal{X}}

\newcommand{\mm}{\mathcal{M}}
\newcommand{\nn}{\mathcal{N}}
\newcommand{\hh}{\mathcal{H}}
\newcommand{\bb}{\mathcal{B}}
\newcommand{\ff}{\mathcal{F}}

\newcommand{\pp}{\mathcal{P}}
\newcommand{\cS}{\mathcal{S}}
\newcommand{\cF}{\mathcal{F}}

\newcommand{\D}{{\rm d}}

\newcommand{\scal}[4]{{}_{#3}\langle #1,#2\rangle_{#4}}
\newcommand{\nor}[2]{\| #1\|_{#2}}

\DeclareMathOperator{\sgn}{sgn}

\DeclareMathOperator{\TV}{TV}

\DeclareMathOperator{\Ext}{Ext}

\theoremstyle{definition}
\newtheorem{dfn}{Definition}[section]

\theoremstyle{remark}
\newtheorem{ex}[dfn]{Example}

\theoremstyle{remark}
\newtheorem{rmk}[dfn]{Remark}
\theoremstyle{plain}
\newtheorem{lem}[dfn]{Lemma}
\newtheorem{prop}[dfn]{Proposition}
\newtheorem{thm}[dfn]{Theorem}
\newtheorem{cor}[dfn]{Corollary}


\begin{document}
\title{\bf Understanding neural networks \\ with reproducing kernel Banach spaces}

\author{F.~Bartolucci}
\address[F.~Bartolucci]{SAM - Department of Mathematics, ETH Z\"urich
, Switzerland
}
\email{francesca.bartolucci@sam.math.ethz.ch}

\author{E.~De Vito}
\address[E. De Vito]{DIMA - MaLGa, Universit\`a di Genova,
Italy
}
\email{ernesto.devito@unige.it}

\author{L.~Rosasco}
\address[L. Rosasco]{DIBRIS - MaLGa, Universit\`a di Genova,
Italy
\& CBMM, MIT \& IIT}
\email{lorenzo.rosasco@unige.it}

\author{S.~Vigogna}
\address[S. Vigogna]{DIBRIS - MaLGa, Universit\`a di Genova,
Italy
}
\email{vigogna@dibris.unige.it}

\maketitle

\begin{abstract}
Characterizing the function spaces corresponding to  neural networks can provide a way to understand their properties.
In this paper we discuss how the theory of reproducing kernel Banach spaces can be used to tackle this challenge.
In particular, we prove a representer theorem for a wide class of  reproducing kernel Banach spaces that admit a suitable integral representation 
and include one hidden layer neural networks of possibly infinite width. 
Further,  we show that, for a suitable class of ReLU activation functions, the norm in the corresponding reproducing kernel Banach space can be 
 characterized in terms of the inverse Radon transform of a bounded real measure, with norm given by the total variation norm of the measure. 
 Our analysis simplifies and extends recent results in \cite{savarese2019infinite,ongie2019function,parhi2021banach}.
\end{abstract}

\smallskip
\noindent \textbf{Keywords.} neural networks, reproducing kernel Banach spaces, representer theorems, Radon transform.


\section{Introduction}

Neural networks provide a flexible and effective class of machine learning models, by recursively composing  linear and nonlinear functions. 
The models thus obtained correspond to nonlinearly parameterized functions, and typically require non convex optimization procedures \cite{Goodfellow-et-al-2016}. 
While this does not prevent good empirical performances, it makes understanding neural network properties considerably  complex. 
Indeed,  characterizing what function classes can be well represented/approximated by neural networks is a classic problem \cite{pinkus_1999,JMLR:v18:14-546,savarese2019infinite,ongie2019function,parhi2021banach,gribonval2021approximation}, but it is still not fully understood. Moreover, networks with large numbers of parameters are often practically successful, seemingly contradicting the idea that models should be simple to be learned from data \cite{benunderstanding,belkin2019reconciling}.
This observation raises the question of in what sense the complexity of the models  is explicitly or implicitly controlled. From a functional perspective, the answer corresponds to understanding what norms can be defined and controlled on the spaces of functions defined by neural networks. 

Among neural networks, there is one model where the above questions become considerably more  amenable to study, namely   neural networks with only one hidden layer.
In this case,
 functions can be seen to be parameterized by measures, with networks with finitely many hidden units corresponding to  atomic measures \cite{JMLR:v18:14-546}. The remarkable advantage of this framework is that the parameterization in terms of measures is linear, and functional calculus considerably simplifies. This observation is at the base of the connection between neural networks and Gaussian processes \cite{neal2012bayesian}, as well as  random features \cite{NIPS2007_013a006f}, which allows to bring to bear the powerful machinery of reproducing kernel Hilbert spaces \cite{MR51437}.
However, starting at least from \cite{barron1993universal,barron1994approximation}, it is clear that norms other than Hilbertian can be defined that might better capture the inductive biases induced by neural networks. For example, for   functions parameterized by absolutely continuous measures, the $L^1$ norm of the corresponding densities can be considered. More generally, functional norms can be defined in terms of total variations of the corresponding measures. The study in \cite{JMLR:v18:14-546} provides a clear  discussion on this perspective. 

The extension from a Hilbert to a Banach setting opens a number of questions. We discuss two that are relevant to our study. The first one is related to the characterization of the solution of empirical minimization problems, the so-called representer theorem.
It is well known that, in a Hilbert setting, minimizers always lie in a finite dimensional subspace. Each solution is a linear combination of the reproducing kernel associated to the Hilbert space evaluated at the training set points \cite{kimeldorf1970correspondence, kimeldorf1971some, scholkopf2001generalized}. This result has immediate computational implications and is at the base of kernel methods \cite{scholkopf2002learning}. A natural question is then how these results extend to a Banach space of functions  defined by neural networks. A number of recent results tackles this question \cite{unser2020unifying,parhi2021banach}. A main difficulty is that the Banach spaces defined by neural networks are non-reflexive, and their definition requires some care. In this context, our first contribution is that we systematically use the machinery of reproducing kernel Banach spaces \cite{JMLR:v10:zhang09b,Lin2019OnRK} to simplify and analyze the construction of such spaces. In the Hilbert setting, feature maps and positive definite kernels can both be  equivalently used to define functions spaces with the reproducing property. For  non-reflexive Banach spaces,  only feature maps provide a natural approach. While a reproducing kernel can be defined, it is typically neither symmetric nor positive definite. Instead, we show that,  introducing appropriate feature maps, function spaces defined by neural networks can be seen to define reproducing kernel Banach spaces of functions admitting a suitable integral representation. Through this characterization and the application of a recent technical result in  \cite{MR4040623}, we can immediately derive a representer theorem. This result can be contrasted to \cite{parhi2021banach}, and, as discussed later, allows  dealing more directly with some technical issues.
We note in passing that representer theorems for neural networks have different implications than analogous  results in the Hilbert setting. Unlike the Hilbert setting, they do not have immediate computational consequences, 
{but have interesting  implications from a conceptual point of view. Indeed, they imply that finite networks suffice to solve empirical minimization problems. Further, they imply an upper bound on the amount of  the amount of  overparameterization required. }

A second line of inquiry regards the characterization of the functions and the norms corresponding to neural networks. 
Once again, it is instructive to look at the Hilbert setting. A main example of reproducing kernel Hilbert spaces are Sobolev spaces with  smoothness  sufficiently high for the embedding theorem to hold. In this case, the norm in the reproducing kernel Hilbert space can be characterized in terms of a suitable pseudo-differential operator, with the associated reproducing kernel being the corresponding Green function \cite{wendland2004scattered}. Again, the  question is whether similar characterizations can be derived for reproducing kernel Banach spaces defined by neural networks.
A recent line of works shows that results  in this direction can be derived when considering the rectified linear activation function (ReLU) in the network units.
A first result in this direction is derived in \cite{savarese2019infinite} for univariate functions, and then developed in \cite{ongie2019function} for the general multivariate case. In particular, this latter paper shows that the corresponding Banach semi-norm can be characterized using the Radon
transform. These  results are further developed in \cite{parhi2021banach}, where semi-norms are defined  in terms of the Radon transform in order to prove a representer theorem for one hidden layer neural networks with (generalized) ReLU activation function. In particular, the definition of the semi-norm precedes and is in function of proving the representer theorem. Here we contribute to this line of work, refining and extending such results, as well as providing different derivations. Indeed, we show that an analogous yet finer Radon characterization holds true for the reproducing kernel Banach spaces corresponding to neural networks with   (generalized) ReLU activation functions. Our construction shows that the characterization of the Banach space structure is independent of the representer theorem. Moreover, our approach provides a natural norm regularizer, thus avoiding semi-norms with resulting topological issues. Using a norm instead of a semi-norm also prevents the addition of null space elements ({\it i.e.} polynomials) to the neural network minimizers. 
We end noting that, while working on the characterization of the regularizer norm of ReLU neural networks, we also contribute to Radon transform theory, extending the classical inversion formulae to larger spaces of Lizorkin distributions.

The paper is organized as follows. In \Cref{sec:back} we recall the main ideas and results about learning with kernels. In \Cref{sec:representer} we give a short introduction to reproducing kernel Banach spaces (RKBS) and their characterization in terms of feature maps.
Then, we introduce a class of integral RKBS
that can model one hidden layer neural networks,
and establish a representer theorem for such a class in \Cref{subsec:representer}.
In \Cref{sec:reluradon} we focus on the special case of one hidden layer neural networks with (generalized) ReLU activation function.
In particular, in \Cref{sec:main2} we characterize the corresponding  norm by means of the Radon
transform.
In~\Cref{sec:radon} we review the theory of the Radon transform,
and we prove extensions of the classical Radon inversion formulae to Lizorkin distributions. \Cref{sec:proof-main} contains the proofs of the main results of \Cref{sec:main2}. {In Sections~ \ref{sec:discussion-3}, \ref{sec:discussion-2} and \ref{sec:discussion-4} we discuss and compare our results with \cite{parhi2021banach} and with previous work on representer theorems and Radon distributional theory.} Finally, in \Cref{appendix} we collect some variational results that we use to prove our representer theorem.

\vspace{.5cm}

\paragraph{\bf Notation.}
If $ x, y \in \mathbb{R}^d $, $ x\cdot y $ denotes their scalar product and $|x|$ denotes the Euclidean norm.
The length of a multi-index
$m \in \mathbb{N}^d $ is denoted by $ |m| = m_1 + \ldots + m_d $.
Furthermore, if $x=(x_1,\ldots,x_d)\in \mathbb{R}^d$ and
$m=(m_1,\ldots,m_d)\in\N^d$, we use the notation $ x^{m} = x_1^{m_1}
\cdots x_d^{m_d}$ and
$\partial^{m}=\partial_x^{m}=\partial_{x_1}^{m_1}
\dots\partial_{x_d}^{m_d}$. We denote by $S^{d-1} $ the unit
sphere in $\R^d$. The dual pairing between a locally convex
topological space $ {\mathcal A}$ and its topological dual space 
${\mathcal A'}$ is denoted by ${_{\mathcal A'}\langle\: \cdot\:,
  \:\cdot \:\rangle_{\mathcal A} }.$ For simplicity, we also write
the pairings without specifying the dual pair ${\mathcal A}, {\mathcal A'}$ whenever it is clear from the context. The Fourier transform $\mathcal F $ is defined for $ \varphi \in L^1(\R^d) $ by
\begin{equation*}
\mathcal F \varphi({\omega})= \frac{1}{(2\pi)^{d/2}}\int_{\R^d} \varphi(x) e^{-i\,
x\cdot\omega } \D{x} , \qquad  \omega \in \R^d ,
\end{equation*}
and it extends to $L^2(\R^d)$ in the usual way. 

If $\bb$ is a Banach space, we denote by $\nor{\cdot}{\bb}$ the
corresponding norm.  If $\mm$ and $\nn$ are two  subspaces of $\bb$, we write
$ \bb = \mm + \nn $ to mean that
\[
  \bb =\{ m+n\colon m\in \mm ,\ n\in \nn\} , \qquad \mm \cap \nn =\{0\},
  \]
and we denote by $P_{\mm}$ and $P_{\nn}$ the corresponding projections
\[
P_\mm, P_\nn: \bb \to \bb , \qquad P_{\mm}(m+n)= m, \quad P_{\nn}(m+n)= n, 
\]
so that $I=P_\mm+P_\nn$. If $\mm$ and $\nn$ are two Banach spaces, we write
$ \bb = \mm \oplus \nn $ to mean that product space $\mm\times \nn$ endowed with the $\ell^1$-norm
\[
\nor{m+n}{\bb} = \nor{m}{\mm}+\nor{n}{\nn} .
\]


\section{Background: learning with ERM and RKHS}\label{sec:back}
In this section, we provide some background useful for the developments in later sections.
In particular, we recall the main ideas behind learning via empirical risk minimization (ERM) and the need of incorporating a bias in the search of a solution space. Further, we recall the basic ideas and results related to  considering reproducing kernel Hilbert spaces (RKHS) as solution spaces, in particular the representer theorem and the interpretation of the bias induced by the RKHS norm.

\subsection{Background: learning with ERM}
The basic problem of supervised learning is to estimate a function $f:\xx\to \R$ of interest, given a (training) set of input/output pairs $D=(x_1,y_1), \dots, (x_n,y_N) \in (\xx\times \R)^N$. The problem is formalized in the setting of statistical learning theory \cite{vapnik1998statistical,Cucker02onthe,gyorfi},  by assuming that $\xx\times \R$ is a probability space with distribution $P$ and that the training set is sampled identically and independently,  that is  $D\sim P^N$. Then, the function of interest is the one minimizing the expected risk
$$
\min_{f\in \mathcal T} {\mathcal L}(f), \quad {\mathcal L} (f)= \int L(y,f(x))dP(x,y),
$$
where $L:\R\times \R\to [0,\infty)$ is a given  loss function. 
Here,  the minimization is thought over the largest space  ${\mathcal T}$ over which the expected risk is defined.
We note that the expected risk can be interpreted as an idealization of the notion of test error. 
In practice, the minimization of the expected risk is unfeasible for at least two reasons. 
The first one is that the measure $P$ is known only through the training set $D$. The second one is that, in practice, the search of a solution needs to be  restricted to some class of functions $\hh\subset {\mathcal T}$, called hypothesis space. The natural approach is then to consider the empirical risk minimization
$$
\min_{f\in \hh}\widehat {\mathcal L}(f), \quad \widehat {\mathcal L}(f)= \frac 1 n \sum_{i=1}^NL(y_i,f(x_i)). 
$$
While the choice of $\hh$ might seem as a strong restriction, there are example of spaces such that 
$$
\min_{f\in \hh} {\mathcal L}(f)= \min_{f\in {\mathcal T}} {\mathcal L}(f),
$$
sometimes called universal classes of function \cite{steinwart2008support,carmeli2010vector}. As pointed out later, functions spaces used in both kernel methods and neural networks can be shown to have this property.
In this case, ERM is often modified considering 
$$
\min_{f\in \hh} 
\widehat  {\mathcal L}  (f) +  J(f),
$$
where $J:\hh\to \R$ is a functional, called regularizer. 
The idea is that the regularizer should enforce  a bias in the search of a solution in $\hh$ and  
help finding stable solutions. Next,  we discuss a classic example of hypothesis spaces and regularizers, useful in our discussion. 

\subsection{RKHS, representer theorem and regularizers}
We next consider the hypothesis space to be a RKHS. We begin recalling
a general  definition of RKHS and useful equivalent characterizations. 
\begin{dfn}\label{defn:rkhs}
 Let $\xx$ be a set. A \emph{reproducing kernel Hilbert space} (RKHS) $\mathcal{H}$ over  $ \xx $
 is a Hilbert space of functions $f:\xx\to\R$ such that:
 \begin{enumerate}[label=\textnormal{(\roman*)}]
 \item  as a vector space, $\hh$ is endowed with the pointwise
   operations of sum and multiplication by a scalar;
 \item for all $x\in\xx$, there is a constant $ C_x > 0 $ such that
   \begin{equation}
|f(x)|\leq C_x\|f\|_{\hh}, \qquad \forall f \in \hh .\label{eq:repro}
\end{equation}
 \end{enumerate}
\end{dfn}
The property \eqref{eq:repro}
states that, for every $ x \in \xx $,
the point evaluation functional
$ \operatorname{ev}_x : \hh \to \R $,
$ \operatorname{ev}_x f = f(x) $,
is continuous.
By the Riesz representation theorem,
\eqref{eq:repro} is thus equivalent to
the existence, for all $ x \in \xx $,
of an element $ K_x \in \hh $ such that
$ f(x) = \langle f , K_x \rangle_\hh $
for all $ f \in \hh $.
This observation leads to the following
more practical characterization of RKHS
\cite{MR51437}.
\begin{prop} \label{prop:RK}
 A Hilbert space $\hh$ of functions on $ \xx$ is a RKHS
 if and only if
 there exists a function  $K:\xx \times \xx \to \R$ such that  for all $x\in \xx$
  \begin{enumerate}[label=\textnormal{(\roman*)}]
  \item  $K(x,\cdot)\in \hh$, 
  \item  $ f(x)=\langle f, K(x,\cdot)\rangle_\hh, ~\forall f \in \hh$.
\end{enumerate}
\end{prop}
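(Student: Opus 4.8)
The plan is to prove the two implications separately, leaning on the Riesz representation observation already recorded immediately before the statement.

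For the forward implication, I would suppose $\hh$ is a RKHS in the sense of Definition~\ref{defn:rkhs}. Then, for each fixed $x \in \xx$, the bound \eqref{eq:repro} says precisely that the evaluation functional $\operatorname{ev}_x \colon \hh \to \R$ is bounded, hence continuous and linear. By the Riesz representation theorem there is a unique element $K_x \in \hh$ with $f(x) = \langle f, K_x\rangle_\hh$ for every $f \in \hh$. I would then simply define $K \colon \xx \times \xx \to \R$ by $K(x,\cdot) := K_x$, that is $K(x,y) = K_x(y)$; properties (i) and (ii) then hold by construction.

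For the converse, I would suppose a function $K$ satisfying (i) and (ii) is given, and verify \eqref{eq:repro} directly. For any $x \in \xx$ and $f \in \hh$, property (ii) together with the Cauchy--Schwarz inequality yields $|f(x)| = |\langle f, K(x,\cdot)\rangle_\hh| \le \nor{K(x,\cdot)}{\hh}\,\nor{f}{\hh}$, so that $C_x := \nor{K(x,\cdot)}{\hh}$, which is finite by (i), witnesses the reproducing bound. Hence every point evaluation is continuous and $\hh$ is a RKHS.

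I do not expect a genuine obstacle: the whole content is the Riesz representation theorem in one direction and Cauchy--Schwarz in the other, both already essentially set up by the discussion preceding the statement. The only two points that deserve a word of care are that $K_x$ is \emph{uniquely} determined, so that $K$ is well defined as a function on $\xx \times \xx$, and that the forward and backward constructions yield the same kernel, which is immediate from the uniqueness clause in the Riesz theorem.
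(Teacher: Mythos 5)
Your proof is correct and follows exactly the argument the paper itself sketches in the paragraph preceding the statement (Riesz representation for the forward direction, Cauchy--Schwarz for the converse); the paper defers the details to \cite{MR51437} rather than writing them out, but your two implications, including the remark on uniqueness of $K_x$, are precisely the standard proof intended there.
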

 The function $K$ is called the \emph{reproducing kernel} and item (ii) is called the \emph{reproducing property}.
It is easy to check that every reproducing kernel is symmetric and positive 
definite.  The reproducing kernel, often just called the kernel, is a key quantity uniquely 
associated to each RKHS. 
In the following, we will see how kernels are useful to characterize the solutions of corresponding ERM problems.  More generally,  it is possible to prove  a converse of the above result showing that each symmetric positive definite kernel can be used to  define a unique RKHS \cite{MR51437}. Here, we omit this characterization and recall another one which is popular in machine learning.
\begin{prop} \label{prop:feat}
 A space $\hh$ of functions on $ \xx$ is a RKHS
 if and only if
 there exist a Hilbert space $\mathcal{F}$ and a map $ \phi : \xx \to \mathcal{F} $ such that
 \begin{enumerate}[label=\textnormal{(\roman*)}]
  \item \label{it:rkhs-rep_prop}
  $ \hh = \{ f_w : w \in \mathcal{F} \} $ where $ f_w (x) = \langle \phi(x), w \rangle_{\mathcal{F}} $; \\
  \item \label{it:rkhs-norm}
  $ \| f \|_\hh = \inf \{ \| w \|_{\mathcal{F} }: w \in \mathcal{F} , f = f_w \} $.
\end{enumerate}
\end{prop}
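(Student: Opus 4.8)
The plan is to prove the two implications separately. The forward direction is essentially a restatement of the reproducing property, while the backward direction is where the real work lies: we must manufacture a Hilbert space structure on $\hh$ out of the data $(\mathcal F, \phi)$ and verify that it reproduces the norm in \ref{it:rkhs-norm}.

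For the forward implication, assume $\hh$ is a RKHS, so that by \Cref{prop:RK} it admits a reproducing kernel $K$. I would simply take $\mathcal F = \hh$ and set $\phi(x) = K(x,\cdot)$. The reproducing property then gives $f_w(x) = \langle \phi(x), w\rangle_\hh = \langle K(x,\cdot), w\rangle_\hh = w(x)$, so the map $w \mapsto f_w$ is the identity on $\hh$. Item \ref{it:rkhs-rep_prop} holds tautologically, and since $f = f_w$ forces $w = f$, the infimum in \ref{it:rkhs-norm} reduces to $\|f\|_\hh$.

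For the backward implication, suppose $\mathcal F$ and $\phi$ are given. The key object is the linear map $W \colon \mathcal F \to \hh$, $W w = f_w$, which is surjective by \ref{it:rkhs-rep_prop}. Its kernel $N = \{ w \in \mathcal F : \langle \phi(x), w\rangle_\mathcal F = 0 \ \forall x \in \xx\}$ is an intersection of kernels of bounded functionals, hence a closed subspace of $\mathcal F$. Restricting $W$ to the orthogonal complement $N^\perp$ produces a linear bijection $W|_{N^\perp} \colon N^\perp \to \hh$: injectivity follows since $N \cap N^\perp = \{0\}$, and surjectivity because any preimage $w$ of $f$ can be split as $w = P_N w + P_{N^\perp} w$ with $W(P_{N^\perp} w) = f$. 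Because $f = f_{w'}$ exactly when $w - w' \in N$, the infimum in \ref{it:rkhs-norm} is the minimal-norm element of the coset $w' + N$, which is precisely the orthogonal projection, so $\|f\|_\hh = \|P_{N^\perp} w'\|_\mathcal F$ and $W|_{N^\perp}$ is isometric. Transporting the inner product of $N^\perp$ through this isometry turns $\hh$ into a Hilbert space whose norm coincides with \ref{it:rkhs-norm}; completeness is inherited from the closed subspace $N^\perp$, and the operations are pointwise since $f_{w_1} + \lambda f_{w_2} = f_{w_1 + \lambda w_2}$ by linearity of $\langle \phi(x), \cdot\rangle_\mathcal F$. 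Finally, for the reproducing inequality \eqref{eq:repro}, Cauchy--Schwarz gives $|f(x)| = |\langle \phi(x), w_f\rangle_\mathcal F| \le \|\phi(x)\|_\mathcal F \, \|f\|_\hh$, so $C_x = \|\phi(x)\|_\mathcal F$ works and $\hh$ is a RKHS.

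I expect the main obstacle to be confirming that the infimum in \ref{it:rkhs-norm} genuinely defines a \emph{Hilbertian} norm rather than merely a Banach one. The decisive observation is that this infimum equals the quotient norm on $\mathcal F / N$, which is realized isometrically by the orthogonal complement $N^\perp$; it is exactly this orthogonal structure --- available because $\mathcal F$ is a Hilbert space --- that allows one to pull back an inner product and conclude that $\hh$ is Hilbertian. In a general Banach feature space this step would fail, which is precisely why, as the paper notes, only feature maps (and not kernels) extend cleanly to the non-reflexive Banach setting considered later.
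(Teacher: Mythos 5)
Your proof is correct. Note that the paper does not actually prove \Cref{prop:feat}: it is recalled as a standard fact, and the only argument given in the text is for its Banach analogue, \Cref{prop:RKBS}. Your forward direction (canonical feature map $\phi(x)=K(x,\cdot)$, under which $w\mapsto f_w$ is the identity) matches the forward direction of that proof exactly. Your backward direction supplies more than the paper's Banach argument does, and necessarily so: in \Cref{prop:RKBS} the paper merely asserts that the infimum in item (ii) defines a Banach norm (justified afterwards via the quotient $\ff/\mathcal N$) and then checks continuity of point evaluation, whereas in the Hilbert case one must additionally verify that this quotient norm is \emph{Hilbertian}. Your identification of the minimal-norm representative in each coset $w+N$ with the orthogonal projection onto $N^\perp$, making $W|_{N^\perp}$ an isometric bijection along which the inner product is transported, is exactly the step that has no analogue in the Banach proof and is the reason the Hilbert statement can be phrased with $\phi:\xx\to\ff$ rather than $\phi:\xx\to\ff'$. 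The closing Cauchy--Schwarz estimate $|f(x)|\le\|\phi(x)\|_{\ff}\|f\|_{\hh}$ is the same continuity check the paper performs in the Banach setting. No gaps.
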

The map  $ \phi $ is called a \emph{feature map} and  $\mathcal{F}$  a \emph{feature space}.
By the above result, each function in a RKHS can be seen as a hyperplane in the feature space. 
The linear parameterization of a RKHS is explicit in the above characterization.

An extension of Definition~\ref{defn:rkhs} and its equivalent characterizations will be useful in the following, while considering neural nets. 
It will also be useful to recall two immediate consequences. The first is a representer theorem that characterizes the solution of  the ERM regularized with the squared RKHS norm. 
\begin{thm} \label{thm:rkhs-representer}
Assume $\hh$ is a RKHS and, for every $ y \in \R$, the
function $ L(y,\cdot) $ is convex.
Then, the problem
$$
\min_{f\in \hh} 
\widehat {\mathcal L}  (f) +  \nor {f}{\hh}^2
$$
has  a unique minimizer $f^*$ such that, for all $x\in \xx$,
$$
f^*(x)=\sum_{i=1}^n K(x,x_i)c_i, \quad c_i\in \R.
$$
\end{thm}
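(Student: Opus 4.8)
The plan is to exploit the orthogonal geometry of the Hilbert space together with the reproducing property, reducing the problem to a finite-dimensional one. I would set $V = \spn\{K(x_1,\cdot),\dots,K(x_n,\cdot)\}\subseteq\hh$, a subspace of dimension at most $n$, and decompose $\hh = V\oplus V^\perp$ orthogonally, writing each $f\in\hh$ as $f = f_V + f_\perp$. The heart of the argument is that the data-fitting term sees only $f_V$, while the regularizer strictly penalizes any nonzero $f_\perp$; uniqueness then follows from strict convexity of the penalized objective.

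First I would record the key identity. By the reproducing property (item (ii) of \Cref{prop:RK}) and the symmetry of $K$, for each data point
$$ f(x_i) = \langle f, K(x_i,\cdot)\rangle_\hh = \langle f_V, K(x_i,\cdot)\rangle_\hh = f_V(x_i), $$
since $K(x_i,\cdot)\in V$ while $f_\perp\perp V$. Hence $\widehat{\mathcal L}(f) = \widehat{\mathcal L}(f_V)$ depends only on the $V$-component. On the other hand, the Pythagorean identity gives $\nor{f}{\hh}^2 = \nor{f_V}{\hh}^2 + \nor{f_\perp}{\hh}^2 \geq \nor{f_V}{\hh}^2$, with equality if and only if $f_\perp = 0$. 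Combining the two facts, the value of the objective at $f$ is at least its value at $f_V$, and strictly larger unless $f_\perp = 0$; therefore any minimizer must lie in $V$, and so has the claimed form $f^*(x) = \sum_{i=1}^n c_i\, K(x,x_i)$.

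It remains to establish existence and uniqueness, and here the finite-dimensional reduction does the work. Because $L(y,\cdot)$ is a finite convex function on $\R$, it is automatically continuous, so $\widehat{\mathcal L}$ is continuous; being a nonnegative composition of convex maps with the linear evaluation functionals $f\mapsto f(x_i)$, it is also convex. Restricted to the finite-dimensional space $V$, the objective $F(f)=\widehat{\mathcal L}(f)+\nor{f}{\hh}^2$ is continuous, coercive (the quadratic term dominates while $\widehat{\mathcal L}\geq 0$), and strictly convex (a convex term plus the strictly convex $\nor{\cdot}{\hh}^2$). Hence $F$ attains a unique minimum on $V$ at some $f^*$, and by the previous paragraph $f^*$ is in fact the unique global minimizer over all of $\hh$.

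I expect the only genuine subtlety to be the existence claim: convexity of $L$ alone does not, in an infinite-dimensional Hilbert space, guarantee that a coercive convex functional attains its infimum. The device of first minimizing over the finite-dimensional $V$ — legitimate precisely because of the orthogonality identity above — sidesteps this issue entirely, so that the remaining work is the routine verification of the continuity, coercivity, and strict convexity used in the reduction.
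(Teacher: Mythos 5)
Your proof is correct and is precisely the standard orthogonal-projection argument of Kimeldorf--Wahba and Sch\"olkopf et al., which the paper itself does not reproduce but simply cites for this background theorem. The one point worth keeping explicit, which you do handle, is that existence is obtained by first passing to the finite-dimensional subspace $V$ (where coercivity plus continuity suffice) rather than arguing directly in $\hh$; this is exactly the step whose failure in the Banach setting motivates the paper's later use of weak$^*$ compactness and the Bredies--Carioni theorem, as the discussion following \Cref{cor:repr-theor-rkbs} points out.
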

The above result is remarkable since it implies that the minimization over an infinite dimensional space can 
be replaced with a finite dimensional one. Indeed, this is the key observation behind kernel methods \cite{scholkopf2002learning}. 
We end this section recalling that  for several reproducing kernels the nature of the regularizers induced by the corresponding squared RKHS  norm can be interpreted via  an equivalent characterization. 
\begin{ex}[Differential operators and Sobolev spaces]
Let  $\xx= \R^d$ and  $k(x,x')=e^{-\|x-x'\|}$ the Laplacian kernel. 
Then, for $s=d/2+1/2$, it can be shown that 
$$
\nor{f}{\hh}^2\asymp\nor{f}{2}^2+ \nor{\Delta^{s/2} f}{2}^2,
$$
where $\nor{f}{2}= \int |f(x)|^2dx$, and $\Delta$ is the Laplace Beltrami operator.
Through  the above characterization, functions with a small RKHS norm can be seen to be more regular.
Similar reasoning can also be shown to apply to other translation invariant kernels. 
Interestingly, for all these examples the corresponding RKHS are universal \cite{JMLR:v7:micchelli06a}.
\end{ex}

In the following we discuss  the question of whether the  above results apply or can be extended to neural networks, and discuss several implications.

\section{RKBS  of Neural networks and representer theorem}
\label{sec:representer}

In this section,  we discuss how different function spaces can be associated to neural networks. In particular, we discuss how RKBS can be used towards this end, and corresponding representer theorems derived.  We first recall the basic expression for neural networks with one hidden layer and illustrate the benefit of considering the limit in which the hidden layer can   have infinite width.


\subsection{Infinite wide neural networks are linearly parameterized over measures}

As mentioned before, a main obstacle towards  studying  function spaces defined by neural networks is their nonlinear parameterization.  Starting from a linear function $w \cdot x$, Proposition ~\ref{prop:feat} shows how  linearly parameterized  nonlinear functions can be  obtained applying a non linear map to the input $w \cdot \phi(x)$. 
In neural networks instead, a continuous nonlinear function  $ \si : \R \to \R$ is applied also to the parameters by considering $\si ( w\cdot x  )$. Indeed, this latter expression is a simplified model of a neuron. 
A one hidden layer neural network is a function obtained as  linear combination of several neurons
\begin{equation} \label{eq:nn2}
f(x) = \sum_{k=1}^K \al_k \si ( w_k \cdot x - b_k ) ,
\end{equation}
where  $ w_k \in \R^d $ and $  b_k \in \R $ are  called the weights.
The above expression can be developed considering further compositions to obtain ``deeper''
 multilayer architectures. In this paper, we restrict our attention to one hidden layer networks. 
 In the following, we discuss how functions spaces of neural networks can be defined 
 very generally considering an extension of RKHS, namely RKBS. We first
 illustrate   some  basic ideas, in particular a suitable reparameterization of neural networks in terms of measures. 

We  use the short hand notation $\rho(x, \theta)=
\si ( w \cdot x - b ) $, where $\theta = (w,b)$.
The key idea is  to consider the limit for large $K$ in equation~\eqref{eq:nn2},  that is 
\begin{equation}\label{eq:infnn2}
 \sum_{k=1}^K  \rho(x, \theta_k) c_k\quad \mapsto \quad   \int \rho(x, \theta) d \mu(\theta).  
 \end{equation}
 The latter expression is the limit where the hidden layer has an infinite number of neurons. Note that, if $\mu=\sum_{k=1}^K \delta_{\theta_k} c_k$ then  
 $$\int \rho(x, \theta) d \mu(\theta)= \sum_{k=1}^K  \rho(x, \theta_k) c_k.
$$
Considering the integral in equation~\eqref{eq:infnn2} requires some care,  and in the next few sections we will discuss how function spaces can be defined with the aid of RKBS. We first discuss a simplified setting to illustrate some basic intuitions. 

\begin{ex}[Compact parameter spaces and densities]
We let  $\theta\in \Theta $,  where the parameter $\Theta$ is a compact subset of $\R^d$, and restrict to measures that are absolutely continuous with respect to the Lebesgue measure $d\theta$, so that $\mu(\theta)= p(\theta)d\theta$.
Then, equation~\eqref{eq:infnn2} becomes
\begin{equation}\label{eq:infnn}
f_\mu(x)= \int \rho(x,\theta)p(\theta)d\theta.
 \end{equation}
 The above expression shows how functions are linearly parameterized by measures/ densities, and it is easy to see that they form a linear space. Different structures and in particular 
different norms can be considered, for example
$\nor{f}{\hh} = \nor{p}{{L^2(\Theta)}}$
or 
$\nor{f}{\bb} = \nor{p}{{L^1(\Theta)}}$.
It can be proved   \cite{JMLR:v18:14-546,rudi2017generalization} that the first choice 
corresponds to considering a RKHS $\hh$ with kernel 
$$
K(x,x')= 
 \int \rho(x,\theta)\rho(x',\theta) d\theta. 
$$
Indeed, this result is at the base of well known  connections between neural networks with RKHS, and in particular random features \cite{NIPS2007_013a006f},  but also with Gaussian processes \cite{neal2012bayesian}.
The norm $\nor{f}{\bb} = \nor{p}{{L^1(\Theta)}}$, instead, can be shown to  define a Banach space \cite{JMLR:v18:14-546}, and clearly
$
\hh\subset \bb. 
$
Hence, in general, we can expect the space $\bb$ to have better approximation properties than $\mathcal{H}$.
We remark that,
while surely enlightening,
this setting has at least two major limitations:
first, the parameter space of commonly used neural networks is never compact;
second, restricting to absolutely continuous measures excludes atomic measures,
and therefore (finite width) neural networks.
\end{ex}
The above example shows that,  while a connection between RKHS and neural nets is possible, going  beyond a Hilbert setting might be needed depending on the kind of structures we consider on the function space of neural networks. The fact that Banach spaces of neural networks are larger function spaces suggests that it could be interesting to explore this setting.
Interestingly, recent results also suggest that the gradient descent training of neural networks  might be controlling implicitly the norm in $\bb$ \cite{chizat2020implicit}. Indeed, we will show next that certain Banach spaces are naturally associated to neural networks. Towards this end, we first recall the basic facts about RKBS.


\subsection{Reproducing kernel Banach spaces}
 Since \cite{JMLR:v10:zhang09b}, several definitions of RKBS have been 
proposed.  Here, we adopt a fairly minimal definition,
and refer to \cite{Lin2019OnRK} for a comprehensive overview.
Among all possible equivalent definitions of RKHS,
the one in \Cref{defn:rkhs} generalizes seamlessly to the Banach case.
Indeed, it suffices to replace ``Hilbert'' with ``Banach''.
\begin{dfn}
 Let $\xx$ be a set. A \emph{ reproducing kernel Banach space} (RKBS) $\mathcal{B}$ over  $ \xx $
 is a Banach space $\bb$ of functions $f:\xx\to\R$ such that:
 \begin{enumerate}[label=\textnormal{(\roman*)}]
 \item  as a vector space, $\bb$ is endowed with the pointwise
   operations of sum and multiplication by a scalar;
 \item for all $x\in\xx$, there is a constant $ C_x > 0 $ such that
   \begin{equation}
|f(x)|\leq C_x\|f\|_{\bb}, \qquad \forall f \in \bb .\label{eq:42}
\end{equation}
 \end{enumerate}
\end{dfn}
As for RKHS, the property~\eqref{eq:42}  is equivalent to
the fact that for every $ x \in \xx $ there exists an element $ \operatorname{ev}_x \in
\mathcal{B}' $ such that
\begin{equation}
   \label{eq:1}
   f(x)= \scal{\operatorname{ev}_x}{f}{\bb'}{\bb}, \qquad \forall f\in \bb.
 \end{equation}
However, unlike for RKHS,
this does not lead to a natural notion of reproducing kernel,
and thus to a characterization as in \Cref{prop:RK},
because in general $\bb$ is not isomorphic to its dual.
Interestingly, the characterization of \Cref{prop:feat} in terms of feature maps generalizes naturally \cite{MR3716734, Lin2019OnRK}.
We report the proof for the sake of completness. 
\begin{prop} \label{prop:RKBS}
 A space $\bb$ of functions on $ \xx$ is a RKBS
 if and only if
 there exist a Banach space $\mathcal{F}$ and a map $ \phi : \xx \to \mathcal{F}' $ such that
 \begin{enumerate}[label=\textnormal{(\roman*)}]
  \item \label{it:rkbs-rep_prop}
  $ \bb = \{ f_\mu : \mu \in \mathcal{F} \} $ where $ f_\mu (x) = {}_{\mathcal{F}'}\langle \phi(x), \mu \rangle_{\mathcal{F}} $; \\
  \item \label{it:rkbs-norm}
  $ \| f \|_\bb = \inf \{ \| \mu \|_{\mathcal{F} }: \mu \in \mathcal{F} , f = f_\mu \} $.
\end{enumerate}
\end{prop}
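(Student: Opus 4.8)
The plan is to prove the two implications separately, mirroring the Hilbert-space argument of \Cref{prop:feat} but replacing Riesz duality—which is unavailable in a non-reflexive Banach space—by a quotient construction on $\mathcal{F}$. For the necessity direction ($\Rightarrow$), the key observation is that an RKBS already carries a canonical feature map. Given that $\bb$ is an RKBS, I would take $\mathcal{F} = \bb$ and define $\phi(x) = \operatorname{ev}_x$, which lies in $\bb' = \mathcal{F}'$ precisely by the continuity \eqref{eq:1} of the evaluation functionals. Then for $\mu \in \bb$ one has $f_\mu(x) = {}_{\bb'}\langle \operatorname{ev}_x, \mu\rangle_\bb = \mu(x)$, so $f_\mu = \mu$ and the representation property~\ref{it:rkbs-rep_prop} holds with $\bb = \{f_\mu : \mu \in \bb\}$. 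For the norm identity~\ref{it:rkbs-norm}, the constraint $f = f_\mu$ forces $\mu = f$, so the infimum runs over the singleton $\{f\}$ and equals $\|f\|_\bb$. This direction is essentially immediate once the right feature space is identified.

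The substance lies in the sufficiency direction ($\Leftarrow$). Here I would introduce the linear map $T: \mathcal{F} \to \R^{\xx}$, $T\mu = f_\mu$, whose range is $\bb$ by~\ref{it:rkbs-rep_prop}. The crucial point is that its null space $N = \ker T = \bigcap_{x \in \xx} \ker\big({}_{\mathcal{F}'}\langle \phi(x), \cdot\rangle_\mathcal{F}\big)$ is closed, being an intersection of kernels of the bounded functionals $\phi(x) \in \mathcal{F}'$. Consequently $\mathcal{F}/N$ is a Banach space, and I would verify that $T$ descends to an isometric isomorphism $\mathcal{F}/N \to \bb$: the quotient norm $\|[\mu]\|_{\mathcal{F}/N} = \inf_{\nu \in N}\|\mu - \nu\|_\mathcal{F}$ coincides term by term with the infimum in~\ref{it:rkbs-norm}, since $\mu - \nu$ ranges exactly over the preimages of $f_\mu$. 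This transports the Banach structure of $\mathcal{F}/N$ onto $\bb$. Continuity of the point evaluations then follows from $|f_\mu(x)| = |{}_{\mathcal{F}'}\langle \phi(x), \mu\rangle_\mathcal{F}| \le \|\phi(x)\|_{\mathcal{F}'}\|\mu\|_\mathcal{F}$; taking the infimum over all $\mu$ with $f_\mu = f$ yields $|f(x)| \le \|\phi(x)\|_{\mathcal{F}'}\|f\|_\bb$, so~\eqref{eq:42} holds with $C_x = \|\phi(x)\|_{\mathcal{F}'}$. The compatibility of the pointwise vector-space operations with the structure inherited from $\mathcal{F}/N$ is immediate from the linearity of $T$.

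The main obstacle—really the only point requiring care—is checking that the formula in~\ref{it:rkbs-norm} is a genuine norm rather than a seminorm and that $\bb$ is complete; both reduce to the single structural fact that $N$ is closed, which is exactly what makes the quotient construction valid. Everything else is routine. I would also note in passing that this argument explains why the Hilbert statement does not transfer verbatim: there the feature space is identified with its dual through Riesz, whereas here $\phi$ must take values in $\mathcal{F}'$ and $\bb$ is recovered as a \emph{quotient} of $\mathcal{F}$ rather than as $\mathcal{F}$ itself.
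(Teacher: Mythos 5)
Your proof is correct and follows essentially the same route as the paper: the canonical feature map $\phi(x)=\operatorname{ev}_x$ with $\mathcal{F}=\bb$ for necessity, and the bound $|f_\mu(x)|\le\|\mu\|_{\mathcal{F}}\|\phi(x)\|_{\mathcal{F}'}$ followed by an infimum for sufficiency. The paper simply asserts that the norm in~\ref{it:rkbs-norm} makes $\bb$ a Banach space, whereas you justify this via the quotient $\mathcal{F}/N$ — which is precisely the construction the paper records in the remark immediately following the proposition, so your added detail is consistent with, not divergent from, the paper's argument.
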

\begin{proof}
 Let $\bb$ be a RKBS of functions on $ \xx $.
 Define $ \mathcal{F}= \bb $ and the canonical feature map
 \[
\phi: \xx \to \bb', \qquad \phi(x)=\operatorname{ev}_x,
   \]
where $\operatorname{ev}_x$ is defined by~\eqref{eq:1}, so that
$f_\mu=\mu$ for all $\mu\in \bb$. Both claims in the statement  are
clear.

Conversely, suppose we have a Banach space $\mathcal{F}$ and a map $ \phi : \xx \to \mathcal{F}' $, and define a vector space $\mathcal{B}$ of functions on $ \xx $ as in \ref{it:rkbs-rep_prop}. Then, the norm in \ref{it:rkbs-norm} makes $\bb$ a Banach space.
Moreover, in view of \ref{it:rkbs-rep_prop},
for every $ f \in \bb $ there exists $ \mu \in \mathcal{F} $ such that $ f = f_\mu $,
and $ | f(x) | = | f_\mu(x) | \le \|\mu\|_\mathcal{F} \|\phi(x)\|_{\mathcal{F}'} $.
Thus, for every $ x \in \xx $,
$$
|f(x)| \le \inf_{\mu\in \mathcal{F} , f = f_\mu} \|\mu\|_\mathcal{F} \|\phi(x)\|_{\mathcal{F}'} = \| f \|_\bb \| \phi(x) \|_{\mathcal{F}'} ,
$$
which shows that point evaluation is continuous on $\bb$.
\end{proof}
Some comments are in order. As mentioned above, Proposition~\ref{prop:RKBS} gives a recipe to construct RKBS starting from a Banach space $\mathcal{F}$ and a map $ \phi : \xx \to \mathcal{F}' $. In analogy to RKHS, we call  $ \phi $ a \emph{feature map} and  $\mathcal{F}'$  a \emph{feature space}.
As in the Hilbert setting, we note that  feature maps are in general not unique. Finally, we add a technical remark. 
\begin{rmk}
The RKBS $\bb$ is isometrically isomorphic to the quotient space $\ff/\mathcal
N$, where $\mathcal N$ is the closed subspace 
\[
  \mathcal N = \{ \mu\in\ff :  f_\mu (x) =0 \quad \forall x\in\xx \},
\]
and the isometry is given by
\[
W_\phi: \ff/\mathcal N\to  \bb,\qquad W_\phi( [\mu])= f_\mu ,
  \]
where $[\mu]$ is the coset of $\mu$. Since  the dual of
$\ff/\mathcal N$ can be identified with the 
  closed subspace 
  \[
\mathcal N^\perp=\{ \omega \in \ff' : \scal{\omega}{\mu}{\ff'}{\ff}=0
\,\forall \mu \in \mathcal N\}\subseteq \ff',
    \]
    then  by duality $ \bb'$  is  isometrically isomorphic to
    $\mathcal N^\perp$. In particular, 
  \begin{equation}
 W_\phi' \operatorname{ev}_x = \phi(x),  \qquad x\in\xx,\label{eq:10}
\end{equation}
where $ W_\phi' : \bb' \to \mathcal{N}^\perp $ denotes the dual map.
\end{rmk}

Next, we  describe a class of RKBS parametrized by the space of bounded
measures,  which is a variant of an example in \cite{JMLR:v18:14-546}. 
This  RKBS is the example relevant to discuss   spaces of functions defined by neural networks.

\subsection{A class of integral RKBS}\label{sec:class-integr-repr}
We fix  a (Hausdorff) locally compact second countable topological
space~$\Th$, that can be seen as the parameter space. Then,  we denote by  $
\mm(\Th)$ the Banach space of bounded measures defined on the Borel 
$\sigma$-algebra of $\Th$, and endow  $\mm(\Th)$ with the total
variation norm $\nor{\cdot}{\TV}$. Since $\Th$ is second countable,
the elements of $\mathcal M(\Th)$ are finite Radon measures and Markov-Riesz
representation  theorem ensures that  $\mm(\Th)$ can be identify with  the
dual of $\operatorname{C}_0(\Th)$, the Banach space of continuous
functions going to zero at  infinity endowed with the $\sup$~norm
$\nor{\cdot}{\infty}$.
Then the TV norm is written as
\begin{equation}
  \label{eq:11}
  \nor{\mu}{\TV}=
\sup\{  \scal{\mu}{ \psi}{}{} :
\psi\in\operatorname{C}_0(\Theta) , \|\psi\|_\infty\leq 1\}.
\end{equation}
Keys to our construction are 
a function $ \rho : \xx \times \Th \to \R $ and a measurable function $ \be : \Th \to \R $ satisfying
the following conditions:
\begin{enumerate}[label=\textnormal{(\roman*)}] 
\item for all $ x \in \xx $
\begin{equation}
    \label{eq:7}
    \sup_{\theta\in\Th} |\rho(x,\th)  \beta(\th)|=D_x< \infty,
  \end{equation}
  for some $D_x>0$;
 \item for all $ x \in \xx $, the function $\rho(x,\cdot)$ is measurable.
\end{enumerate}
Given the above definition we next define a RKBS a functions with a suitable integral representation and that can be seen to be parameterized in terms of measures on the parameter space. As discussed later this yields a direct connection with one hidden layer neural networks with possibly infinite width.
Towards this end, we define 
$ $ 
%
the feature map
$$
 \phi : \xx \to \mm(\Th)',
 \qquad \scal{\mu}{\phi(x)}{\mathcal M(\Theta)}{\mathcal M(\Theta)'}=
 \int_\Th \rho(x,\th) \be(\th) \D \mu(\th)  , 
 $$
 which is well defined because of \eqref{eq:7}. Then, by~\Cref{prop:RKBS} the
 feature map $\phi$  defines a RKBS $\bb$ explicitly given by
\begin{subequations}\label{eq:9}
  \begin{align}
    \bb  & = \{ f_\mu : \mu \in \mm(\Th) \} \label{eq:RKBS}, \\
    f_\mu (x) & = \int_\Th \rho(x,\th) \be(\th) \D \mu(\th)  \label{eq:RKBS1}, \\
    \nor{f}{\bb} & = \inf\,\{ \nor{\mu}{\TV}   : f_\mu=f\} .  \label{eq:RKBS2} 
  \end{align}
\end{subequations}
We add several  remarks. 
First,  we comment on  the nature of the functions $\rho$ and $\beta$.
\begin{rmk}[Reproducing kernel and activation functions]
The function $\rho$ is a {\em reproducing kernel} in
the sense of \cite[Definition~2.1]{Lin2019OnRK}
(see \cite[Section 3.4]{Lin2019OnRK}).
Clearly, it is always possible to include $\beta$ in
the definition of the kernel $\rho$.
However, we prefer to regard $\{\rho(\cdot,\theta)\}_\theta$
as a family of basis functions
({\it e.g.} as identified by the choice of an activation function in neural networks),
and $\beta$ as a smoothing function needed to ensure that the integral in~\eqref{eq:RKBS1} converges for all $\mu$.
\end{rmk}
As we comment next, the introduction of the smoothing function is crucial. 
\begin{rmk}[Smoothing function $\beta$] \label{rmk:beta}
Condition~\eqref{eq:7}  (with the measurability
assumption) is necessary and sufficient to ensure that the integral
in~\eqref{eq:RKBS1} converges for all bounded measures,
and thus that all the elements of the hypothesis space have an integral representation.
In \cite{parhi2021banach}
$\beta$ is not introduced,
and in fact their Lemma~21 provides an integral representation only for rapidly
decreasing measures.
Then, the authors assume that such a representation extends to a bounded operator.
Note, however, that
the extension 
of an integral operator is not necessarily integral.
For example, the $L^2$ extension of the Fourier transform does not admit an integral
representation.
On a related note,
\cite{unser2017splines} considers hypothesis spaces
with integral representation
by imposing a growth condition on the integral kernel (see Theorem 3 therein).
In our setting, such a kernel would correspond to the product of $\rho$ and $\beta$.
Since we need to keep $\rho$ free of growth conditions
(in order to plug in relevant examples of activation functions),
we charge $\beta$ with a decay condition.
In particular, our strategy allows to seamlessly deal with neural networks defined by ReLU activation functions.
\end{rmk}
By choosing  the measure $\mu$ having finite support, {\it i.e.}
\[
  \mu = \sum_{k=1}^K a_k\, \delta_{\theta_k} , \qquad  a_k \in \R , \quad
\th_k \in \Th,
\]
where $\delta_\theta$ is the Dirac measure at point $\theta$. It
follows that the elements of the form  
  \begin{equation}
f_\mu = \sum_{k=1}^K \al_k \rho(\cdot,\th_k) , \qquad \al_k=a_k\beta(\theta_k) \in \R , \quad
\th_k \in \Th, \label{eq:4}
\end{equation}
belong to $\bb$. Note that the smoothing function $\beta$ is included
in the vector coefficient $(\al_1,\ldots,\alpha_K)$, so that it does
not affect to the dependence of the function $f_\mu$ to the input
variable $x\in\xx$.  Functions as
in~\eqref{eq:4} are the main ingredient of many learning algorithms,
as for example kernel methods and one hidden layer neural networks,
see~\Cref{ex:kernel} and~\Cref{ex:nn} below.
As observed earlier, equation \eqref{eq:RKBS1} provides a
pointwise integral representation of the elements of $\bb$. However, by~\eqref{eq:4},
for each
$\theta\in\Theta$
  \begin{equation}
f_\theta=f_{\delta_{\theta}}= \rho(\cdot,\th) \be(\th) \in \bb, \qquad
\nor{f_\theta}{\bb}\leq \nor{\delta_{\theta}}{\TV}=1, \label{eq:14}
\end{equation}
  then
  \begin{equation}
    \label{eq:8}
    f_\mu = \int_\Th f_{\theta} \ \D \mu(\th) ,
  \end{equation}
  where the integral is in the Bochner sense provided that 
  $\theta\mapsto f_\theta$ is measurable as a map from  $\Th$ to $\bb$.  Finally, observe that~\eqref{eq:10} can be written as
  \[
W_\phi'\operatorname{ev}_x = \rho(x,\cdot)\beta  \in \mathcal M(\Theta)'.
    \]
    
\subsection{Representer theorem}\label{subsec:representer}

{We now derive a general representer theorem for the class of RKBS given
by~\eqref{eq:9}. As discussed next,  this amounts to providing explicit 
characterization of the solutions to  empirical minimization problems in machine learning and beyond.  
Following the setting described in \Cref{sec:back},
we consider the problem
\begin{equation} \label{eq:problem}
 \inf_{f\in \bb} \left(\frac{1}{N} \sum_{i=1}^N L ( y_i , f(x_i) ) +
   \|f\|_{\bb}  \right) .
\end{equation}
We are interested in the case where 
 the hypothesis space is the RKBS given
by~\eqref{eq:9} and $ \|  \cdot \|_\bb$ is the corresponding norm. With this choice, even existence of a solution  is non trivial
since in general $\bb$ is non-reflexive, so that the closed balls are not even weakly
compact. In the following we establish conditions under which minimizers exist, and derive a general representer theorem. }

First, we need a  result showing that  \eqref{eq:problem} can be reformulated 
as a minimization over the space of measures $\mathcal M(\Th)$.  The key observation is that  $\mathcal M(\Th)$ can be endowed with the weak$^*$ topology,
with respect to which the closed balls are indeed compact.
\begin{prop}\label{prop:existence}
Take $\rho:\xx\times\Th\to\R$,
$\beta:\Th \to \R$ satisfying~\eqref{eq:7}, and set $\bb$ as the
corresponding RKBS defined in~\eqref{eq:9}.
Then
\begin{equation*} 
 \inf_{f\in \bb} \left(\frac{1}{N} \sum_{i=1}^N L ( y_i , f(x_i) ) +
   \|f\|_{\bb}  \right) =\inf_{\mu\in \mathcal M(\Th)}
 \left(\frac{1}{N} \sum_{i=1}^N L ( y_i , f_\mu(x_i) ) +    \nor{\mu}{\TV}  \right).
\end{equation*}
Furthermore, if $\mu^*$ is any minimizer of 
  \begin{equation}
    \label{eq:3}
    \inf_{\mu\in \mathcal M(\Th)} \left(\frac{1}{N} \sum_{i=1}^N L ( y_i , f_\mu(x_i) ) +
   \nor{\mu}{\TV}  \right),
\end{equation}
then $f^*=f_{\mu^*}$ is a minimizer of  problem~\eqref{eq:problem}.
  \end{prop}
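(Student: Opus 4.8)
The plan is to exploit the quotient structure of $\bb$ recorded in the remark after \Cref{prop:RKBS}: the map $\mu\mapsto f_\mu$ is a surjection of $\mm(\Th)$ onto $\bb$, the norm $\nor{f}{\bb}$ is by \eqref{eq:RKBS2} the infimum of $\nor{\mu}{\TV}$ over all representatives $\mu$ of $f$, and the empirical loss term depends on $\mu$ only through the function $f_\mu$. Writing $E(f)=\frac1N\sum_{i=1}^N L(y_i,f(x_i))$ for the data-fitting functional on $\bb$, the two objectives read $\Phi(f)=E(f)+\nor{f}{\bb}$ and $\Psi(\mu)=E(f_\mu)+\nor{\mu}{\TV}$. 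The key point I would stress is that $E(f_\mu)$ sees only the pointwise values of $f_\mu$ (each $f_\mu(x_i)$ being finite by \eqref{eq:7}), hence is invariant across the representatives $\mu$ of a fixed element of $\bb$.

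First I would prove $\inf_f\Phi\le\inf_\mu\Psi$. For every $\mu\in\mm(\Th)$ one has $f_\mu\in\bb$ and $\nor{f_\mu}{\bb}\le\nor{\mu}{\TV}$, directly from the infimum defining the quotient norm in \eqref{eq:RKBS2}; hence $\Phi(f_\mu)=E(f_\mu)+\nor{f_\mu}{\bb}\le E(f_\mu)+\nor{\mu}{\TV}=\Psi(\mu)$, and taking the infimum over $\mu$ gives the inequality. For the reverse bound $\inf_\mu\Psi\le\inf_f\Phi$, I would fix $f\in\bb$ and $\eps>0$, use \eqref{eq:RKBS2} to select a representative $\mu$ with $f_\mu=f$ and $\nor{\mu}{\TV}\le\nor{f}{\bb}+\eps$, and conclude $\Psi(\mu)=E(f)+\nor{\mu}{\TV}\le\Phi(f)+\eps$; letting $\eps\to0$ and taking the infimum over $f$ yields the claim. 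Combining the two inequalities proves the equality of infima.

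For the ``furthermore'' part, suppose $\mu^*$ minimizes $\Psi$ and set $f^*=f_{\mu^*}\in\bb$. Using $\nor{f^*}{\bb}\le\nor{\mu^*}{\TV}$ together with the equality of infima just established, I get $\Phi(f^*)=E(f^*)+\nor{f^*}{\bb}\le E(f_{\mu^*})+\nor{\mu^*}{\TV}=\Psi(\mu^*)=\inf_\mu\Psi=\inf_f\Phi$. Since $f^*\in\bb$ forces $\Phi(f^*)\ge\inf_f\Phi$, equality holds and $f^*$ is a minimizer of \eqref{eq:problem}.

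The argument is essentially bookkeeping on the quotient norm, and I do not anticipate a genuine obstacle. The only point demanding care is the reverse inequality: because $\nor{f}{\bb}$ is defined as an infimum that need not be attained, I must pass to an $\eps$-nearly-optimal representative rather than an exact one. I also note that the weak$^*$ compactness mentioned before the statement is \emph{not} needed here; it enters only when proving existence of a minimizer $\mu^*$, whereas the present claim is conditional on such a minimizer being given.
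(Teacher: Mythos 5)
Your proposal is correct and follows essentially the same route as the paper: both arguments are pure bookkeeping on the quotient norm \eqref{eq:RKBS2}, using that the empirical loss depends on $\mu$ only through $f_\mu$ and that $\nor{f_\mu}{\bb}\le\nor{\mu}{\TV}$. The paper packages the equality of infima as a single chain of nested infima rather than two inequalities with an $\eps$-optimal representative, and for the ``furthermore'' part it first derives $\nor{\mu^*}{\TV}=\nor{f_{\mu^*}}{\bb}$ where you invoke the already-established equality of infima, but these are only presentational differences.
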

  \begin{proof}
By definition of $\bb$, we have
\begin{align*}
 \inf_{f \in \bb} \left(\frac{1}{N} \sum_{i=1}^N L ( y_i , f(x_i) ) + \|f\|_{\bb} \right)
 & = \inf_{\mu \in \mm(\Th)} \left(\frac{1}{N} \sum_{i=1}^N L ( y_i , f_\mu(x_i) ) + \|f_\mu\|_{\bb}\right) \nonumber \\
 & = \inf_{\mu \in \mm(\Th)} \left(\frac{1}{N}  \sum_{i=1}^N L ( y_i , f_\mu(x_i) ) + \inf_{ \substack{ \nu \in \mm \\ f_\nu = f_\mu } } \| \nu \|_{\TV} \right)\nonumber \\
 & = \inf_{ \substack{ \mu , \nu \in \mm(\Th) \\ f_\nu = f_\mu } }\left(
 \frac{1}{N} \sum_{i=1}^N L ( y_i , f_\mu(x_i) ) + \| \nu \|_{\TV} \right)\nonumber \\
 & = \inf_{ \nu \in \mm(\Th) }\left(\frac{1}{N} 
 \sum_{i=1}^N L ( y_i , f_\nu(x_i) ) + \| \nu \|_{\TV} \right) . 
\end{align*}
Now let assume that $\mu^*$ is a minimizer
of~\eqref{eq:3}.
Then, for all $\nu\in\mathcal M(\Theta)$,
\[
\left(\frac{1}{N} 
 \sum_{i=1}^N L ( y_i , f_{\mu^*}(x_i) ) + \| \mu^* \|_{\TV} \right)
\leq \left(\frac{1}{N} 
 \sum_{i=1}^N L ( y_i , f_\nu(x_i) ) + \| \nu \|_{\TV} \right).
\]
Fix $\mu\in \mathcal M(\Theta)$ and take the infimum over all $\nu$
such that $f_\nu=f_\mu$, then
\[
\left(\frac{1}{N} 
 \sum_{i=1}^N L ( y_i , f_{\mu^*}(x_i) ) + \| \mu^* \|_{\TV} \right)
\leq \left(\frac{1}{N} 
 \sum_{i=1}^N L ( y_i , f_\mu(x_i) ) + \| f_\mu \|_{\bb} \right).
\]
With the choice $\mu=\mu_*$,  we have $\| \mu^* \|_{\TV} \leq
\| f_{\mu^*} \|_{\bb} $ and, clearly, $\| f_{\mu^*} \|_{\bb} \leq \| \mu^* \|_{\TV} 
$, so that
\[
\left(\frac{1}{N} 
 \sum_{i=1}^N L ( y_i , f_{\mu^*}(x_i) ) + \| f_{\mu^*} \|_{\bb} \right)
\leq \left(\frac{1}{N} 
 \sum_{i=1}^N L ( y_i , f_\mu(x_i) ) + \| f_\mu \|_{\bb} \right),
  \]
  which concludes the proof.
\end{proof}

The next corollary shows that the minimization
problem~\eqref{eq:3} can be regarded as two nested minimization
problems where  the external one is over a finite-dimensional vector
space. {As discussed in the following,  this result can be directly compared 
to the classic results for RKHS, highlighting similarities but also crucial differences.}
\begin{cor}\label{cor:repr-theor-rkbs}
 With the setting of~\Cref{prop:existence}, let
  \begin{equation}
 \mathcal V = \{ \mu\in\mathcal M(\Th) : f_\mu(x_i) = 0 \ \forall
 i=1,\ldots,N\} =\{ \rho(x_1,\cdot)\beta,\ldots,
   \rho(x_N,\cdot)\beta \}^\perp,\label{eq:12}
 \end{equation}
 where the orthogonal ${}^\perp$ is taken with respect to the pairing
$\scal{\cdot}{\cdot}{\mathcal M(\Th)'}{\mathcal M(\Th)}$.
Then $\mathcal V$ is a closed subspace of $\mm(\Th)$,
and there exists a finite-dimensional subspace $ \mathcal W \subset \mm(\Th) $ with
$\dim{\mathcal W}\leq N$ such that
\[
\mathcal M(\Th) = \mathcal W + \mathcal V , 
\]
 and 
  \begin{equation}
 \inf_{\mu\in \mathcal M(\Th)} \left(\frac{1}{N} \sum_{i=1}^N L ( y_i , f_\mu(x_i) ) +
   \nor{\mu}{\TV}  \right) = \inf_{\nu\in\mathcal W}\left(\frac{1}{N}
   \sum_{i=1}^N L ( y_i , f_\nu(x_i) ) +\inf_{\tau\in\mathcal V}
  \nor{\nu+\tau }{\TV}  \right).\label{eq:13}
\end{equation}
\end{cor}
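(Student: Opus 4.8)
The plan is to reduce everything to the finite-rank linear map that records the values of $f_\mu$ on the training inputs. Concretely, I would introduce the sampling operator
$$
T : \mathcal M(\Th) \to \R^N, \qquad T\mu = (f_\mu(x_1),\ldots,f_\mu(x_N)),
$$
and observe that each component $\mu \mapsto f_\mu(x_i) = \scal{\rho(x_i,\cdot)\beta}{\mu}{\mathcal M(\Th)'}{\mathcal M(\Th)}$ is a continuous linear functional, since $\rho(x_i,\cdot)\beta \in \mathcal M(\Th)'$ by condition~\eqref{eq:7}. Hence $T$ is linear and continuous, and $\mathcal V = \ker T$, which immediately gives that $\mathcal V$ is a closed subspace of $\mathcal M(\Th)$; the second description of $\mathcal V$ in~\eqref{eq:12} is just the restatement $\ker T = \{\rho(x_1,\cdot)\beta,\ldots,\rho(x_N,\cdot)\beta\}^\perp$.

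For the decomposition, I would exploit that $T$ has finite rank. Set $r = \dim \Ran T \leq N$ and pick $\mu_1,\ldots,\mu_r \in \mathcal M(\Th)$ whose images $T\mu_1,\ldots,T\mu_r$ form a basis of $\Ran T \subseteq \R^N$. Let $\mathcal W = \spn\{\mu_1,\ldots,\mu_r\}$, so that $\dim \mathcal W = r \leq N$. Given any $\mu$, write $T\mu = \sum_{j} c_j\, T\mu_j$ and set $\nu = \sum_{j} c_j \mu_j \in \mathcal W$; then $T(\mu-\nu)=0$, i.e.\ $\mu - \nu \in \ker T = \mathcal V$, proving $\mathcal M(\Th) = \mathcal W + \mathcal V$. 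The intersection is trivial: if $\mu \in \mathcal W \cap \mathcal V$ then $0 = T\mu = \sum_{j} c_j\, T\mu_j$ forces all $c_j = 0$ by linear independence of the images, whence $\mu = 0$. This establishes the decomposition in the sense of the paper's notation $\mathcal M(\Th) = \mathcal W + \mathcal V$, with the associated projections well defined.

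Finally, for~\eqref{eq:13} I would use the decomposition to reparameterize the minimization. Every $\mu$ writes uniquely as $\mu = \nu + \tau$ with $\nu \in \mathcal W$, $\tau \in \mathcal V$, and since $\tau \in \ker T$ we have $f_\mu(x_i) = f_\nu(x_i)$ for all $i$, so the data-fitting term depends only on $\nu$. Therefore
$$
\inf_{\mu\in\mathcal M(\Th)}\left(\frac1N\sum_{i=1}^N L(y_i,f_\mu(x_i)) + \nor{\mu}{\TV}\right) = \inf_{\nu\in\mathcal W}\ \inf_{\tau\in\mathcal V}\left(\frac1N\sum_{i=1}^N L(y_i,f_\nu(x_i)) + \nor{\nu+\tau}{\TV}\right),
$$
and, the loss being independent of $\tau$, pulling it outside the inner infimum yields exactly~\eqref{eq:13}.

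As for difficulty, I expect no serious obstacle: the argument is the standard fact that a closed subspace cut out by finitely many continuous functionals admits a finite-dimensional complement, combined with the elementary observation that the data-fitting term factors through $T$. The only points requiring care are verifying continuity of the functionals $\rho(x_i,\cdot)\beta$ (which is precisely what~\eqref{eq:7} guarantees) and checking triviality of the intersection $\mathcal W \cap \mathcal V$, so that one obtains a genuine complement rather than merely a sum.
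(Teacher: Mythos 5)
Your proposal is correct and follows essentially the same route as the paper: continuity of the evaluation functionals $\mu\mapsto f_\mu(x_i)$ gives that $\mathcal V$ is closed of finite co-dimension, a finite-dimensional complement $\mathcal W$ exists, and the data term depends only on the $\mathcal W$-component, so the infimum splits as in \eqref{eq:13}. The only difference is that you explicitly construct $\mathcal W$ by lifting a basis of the range of the sampling operator, a detail the paper leaves implicit.
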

\begin{proof}
  Define the map
$F:\mathcal M(\Th)\to\R$ ,
\[
F(\mu)= \left(\frac{1}{N} \sum_{i=1}^N L ( y_i , f_\mu(x_i) ) +
   \nor{\mu}{\TV}  \right).
\]
By the reproducing property~\eqref{eq:RKBS1}, the linear maps
  \[
    \mu \mapsto f_\mu(x_i) , \qquad i = 1 , \dots, N ,
  \]
  are continuous.
  Hence, $\mathcal V$ is a closed subspace of $\mathcal
  M(\Th)$ with finite co-dimension no larger than $N$,
  and therefore there is a finite
  dimensional subspace $\mathcal W$, $ \dim \mathcal W \le N $, such that
  \[
\mathcal M(\Th) = \mathcal W +  \mathcal V . 
\]
Moreover, for all $\mu=\nu+\tau$ with $\nu\in \mathcal W$ and $\tau\in
\mathcal V$ , we have
\[
F(\mu)= \frac{1}{N} \sum_{i=1}^N L ( y_i , f_\nu(x_i) ) +
   \nor{\nu+\tau }{\TV},
\]
whence \eqref{eq:13} becomes clear.
\end{proof}

\Cref{cor:repr-theor-rkbs} is closely related to \Cref{thm:rkhs-representer}.
However, there are some important
differences. The existence of the finite-dimensional subspace $\mathcal
W$ strongly depends on the fact that $\mathcal V$ has finite
co-dimension. 
Moreover, in general there is not a canonical choice
for the complement $\mathcal W$ 
and the total variation norm does not  preserve the decomposition,
{\it i.e.} in general  $\mathcal M(\Th)$ is  isomorphic to $ \mathcal
W \oplus\mathcal V$, but the isomorphism is not an isometry. For a RKHS
$\hh$, there is a canonical choice $\mathcal  W=\mathcal V^\perp$ and,
for such a choice,   $\nor{\nu+\tau}{\hh}^2 = \nor{\nu}{\hh}^2 + \nor{\tau}{\hh}^2$, so
that the inner minimization problem in~\eqref{eq:13}  has $\tau=0$ as
solution.
Further, since $\mathcal M(\Theta)$ is not reflexive, in
general $\mathcal V$ is only weakly closed (being convex), 
and it is not easy to show the existence of a minimizer for the inner minimization problem.

To overcome this issue, we  next strengthen condition~\eqref{eq:7} by
assuming that 
  \begin{equation}
    \label{eq:2}
    \rho(x,\cdot) \beta  \in \operatorname{C}_0(\Th) , \qquad \forall x\in\xx ,
  \end{equation}
  which clearly implies~\eqref{eq:7}.  This assumption is equivalent
  to assuming that the feature map 
  \[
\phi: \xx \to C_0(\Th)\subset\mm(\Th)'
\]
takes values in the pre-dual of $\mm(\Th)$ (compare with the assumption
in~\cite[Theorem~1, item 2]{unser2017splines}). Moreover, for
all $x\in\xx$,
\[
  W_\phi'\operatorname{ev}_x=  \rho(x,\cdot) \be \in \operatorname{C}_0(\Th) .
\]
We stress that, in many examples, given a function
$\rho$, it is easy to find a smoothing function $\beta$ such
that~\eqref{eq:2} holds true without modifying the form of the solutions \eqref{eq:4}
{(as functions of $x$)}.
{
On the other hand,
the choice of $\beta$ does affect the norm of the solutions, albeit in a simple way. Indeed, as seen later,  it simply corresponds to 
renormalizing the coefficients.
}
Under condition~\eqref{eq:2},
we provide a representer theorem for the RKBS defined by~\eqref{eq:9}.
More precisely, we show that ERM minimizers always exist,
and are of the form \eqref{eq:4}.
Our proof takes care of some delicate topological issues
(see \Cref{rmk:sparse-solution}).
It is based on \cite[Theorem 3.3]{MR4040623},
the statement of which is given in~\Cref{sec:bredies-carioni} for the sake of completeness.

\begin{thm} \label{thm:representer}
Assume that~\eqref{eq:2} holds true and, for every $ y \in \R$, the
function $ L(y,\cdot) $ is convex and coercive in  the second entry. 
Then, the problem
\begin{equation*}
 \inf_{f\in \bb} \left(\frac{1}{N} \sum_{i=1}^N L ( y_i , f(x_i) ) +
   \|f\|_{\bb} \right)
\end{equation*}
admits solutions $f^*$ such that, for all $ x\in \xx$,
\begin{align}
  f^*(x) & = \sum_{k=1}^K \al_k \rho(x,\th_k) , \qquad \al_k \in
         \R\setminus\{0\} , \quad \th_k \in \Th , \label{eq:28} \\
   \| f^* \|_{\bb} & \le \sum_{k=1}^K | \al_k  \beta(\th_k)^{-1}| , \label{eq:29}
\end{align}
with $ K \le N$ and $\beta(\th_k) \ne 0$ for all $k=1,\ldots,
K$. 
\end{thm}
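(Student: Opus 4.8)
The plan is to transfer the problem from the Banach space $\bb$ to the space of measures $\mm(\Th)$ and there apply the sparse-minimizer theorem \cite[Theorem~3.3]{MR4040623}. By \Cref{prop:existence}, the infimum in the statement coincides with the infimum of~\eqref{eq:3}, and if $\mu^*$ minimizes~\eqref{eq:3} then $f^* = f_{\mu^*}$ minimizes the original problem. So it is enough to exhibit a minimizer of~\eqref{eq:3} that is a finite sum of Dirac masses.

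First I would recast~\eqref{eq:3} in the abstract format of \cite[Theorem~3.3]{MR4040623}. Introduce the forward operator
\[
A : \mm(\Th) \to \R^N , \qquad A\mu = \big( f_\mu(x_1) , \ldots , f_\mu(x_N) \big) ,
\]
and the fidelity $E(v) = \frac{1}{N}\sum_{i=1}^N L(y_i, v_i)$, so that~\eqref{eq:3} becomes $\inf_{\mu} \big( E(A\mu) + \nor{\mu}{\TV} \big)$. The decisive point, and the reason assumption~\eqref{eq:2} is imposed in place of the weaker~\eqref{eq:7}, is that each coordinate $\mu \mapsto f_\mu(x_i) = \scal{\rho(x_i,\cdot)\beta}{\mu}{}{}$ is the pairing of $\mu$ with the fixed element $\rho(x_i,\cdot)\beta \in \operatorname{C}_0(\Th)$, which lies in the predual of $\mm(\Th)$; hence $A$ is continuous from the weak$^*$ topology of $\mm(\Th)$ to $\R^N$, as the cited theorem requires. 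Moreover each $L(y_i,\cdot)$ is convex, therefore continuous and lower semicontinuous on $\R$, and coercive by hypothesis, so $E$ is proper, convex, lower semicontinuous and coercive.

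With these hypotheses verified, I would apply \cite[Theorem~3.3]{MR4040623} using the total variation norm as regularizer, whose unit ball has as extreme points precisely the signed Dirac masses $\{ \pm\delta_\th : \th \in \Th \}$. The theorem then provides a minimizer of~\eqref{eq:3} of the form $\mu^* = \sum_{k=1}^K a_k \delta_{\th_k}$ with $K \le N$ and $a_k \in \R$. Putting $f^* = f_{\mu^*}$ and using~\eqref{eq:RKBS1},
\[
f^*(x) = \sum_{k=1}^K a_k\, \rho(x,\th_k)\,\beta(\th_k) ,
\]
which is~\eqref{eq:28} with $\al_k = a_k \beta(\th_k)$, while $\nor{f^*}{\bb} \le \nor{\mu^*}{\TV} = \sum_{k=1}^K |a_k| = \sum_{k=1}^K |\al_k \beta(\th_k)^{-1}|$, giving~\eqref{eq:29}.

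It remains to tidy the atoms. Dropping every index with $a_k = 0$ ensures $\al_k \ne 0$. Should any surviving $\th_k$ satisfy $\beta(\th_k) = 0$, then $f_{\delta_{\th_k}} = \rho(\cdot,\th_k)\beta(\th_k) = 0$ adds nothing to $f^*$ yet contributes $|a_k| > 0$ to the total variation, contradicting the minimality of $\mu^*$; hence $\beta(\th_k) \ne 0$ and $\al_k = a_k\beta(\th_k) \ne 0$ for all retained atoms, and $K \le N$ is preserved. The main obstacle I expect is not this bookkeeping but the faithful verification that~\eqref{eq:3} satisfies the standing assumptions of \cite[Theorem~3.3]{MR4040623} — chiefly the weak$^*$ continuity of $A$, which is exactly what~\eqref{eq:2} secures, together with the identification of the extreme points of the $\TV$-ball as signed Diracs, on which the sparsity bound $K \le N$ hinges.
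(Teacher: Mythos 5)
Your proposal is correct and follows essentially the same route as the paper: reduce to the measure-space problem via \Cref{prop:existence}, verify the hypotheses of \cite[Theorem 3.3]{MR4040623} on $\mm(\Th)$ with the weak$^*$ topology (where \eqref{eq:2} gives weak$^*$ continuity of the sampling operator), identify the extreme points of the $\TV$ ball as $\{\pm\delta_\th\}$, and translate the resulting sparse minimizer back through $\mu\mapsto f_\mu$. The only details the paper spells out that you gloss over are routine: replacing $\R^N$ by $\operatorname{range}(\aa)$ to make the operator surjective, and checking that $\nor{\cdot}{\TV}$ is weak$^*$ lower semicontinuous and coercive (Banach--Alaoglu); your minimality argument for discarding atoms with $\beta(\th_k)=0$ is a harmless variant of the paper's direct removal.
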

\begin{proof}
In view of \Cref{prop:existence} and~\eqref{eq:4},
to establish \eqref{eq:28}
it is enough to consider
the minimization problem~\eqref{eq:3} on the space $\mathcal
M(\Th)$,
and show that there exists a measure
$\mu$ with finite support of cardinality at most
$N$ that
minimizes \eqref{eq:3}. Towards this end, we apply~\Cref{thm:bredies-carioni}.

We set $U=\mathcal M(\Th)$ endowed with the weak$^*$ topology, so that
$U$ is a locally convex topological vector space. We define
\[ \aa: U \to \R^N , \qquad  (\aa \mu)_i = f_\mu(x_i)=\scal{\phi(x_i)}{\mu}{\mm(\Th)'}{\mm(\Th)}
= \scal{\mu}{\phi(x_i)}{C_0(\Th)'}{\C_0(\Th)} .\]
By \eqref{eq:2}, $\aa$ is a continuous linear
operator from $U$ to $\R^N$,  regarded as Hilbert space with respect
to the Euclidean scalar product. Furthermore,  by assumption on $L$,
the function
\[
F: \R^N \to (-\infty,+\infty], \qquad F(w)=\frac{1}{N} \sum_{i=1}^N L(y_i,w_i) , \qquad w=(w_1,\ldots,w_N)\in\R^N ,
  \]
is convex and coercive on $\R^N$ with domain $\R^N$,
thus it is continuous and, hence, lower semi-continuous.
We set $H=\operatorname{range}{\mathcal{A}}$, which is  a Hilbert space since it
a closed subspace of $\R^N$. With a slight abuse of notation, we regard
$F$ as a map defined on $H$  and $\aa$ as a map onto $H$, so that
$\aa$ becomes surjective. 
By~\eqref{eq:11}, the total variation norm, regarded as a seminorm from $U$ into
$(-\infty,+\infty]$,  is the superior envelope of lower semi-continuous
functions, hence it is weakly continuous \cite[page 11, item 4]{brezis},
its domain is $U$ and its kernel is trivial.   Furthermore, the Banach-Alaoglu theorem gives
that  the balls $ \{ \nu \in \mm(\Th) : \|\nu\|_{\TV} \le R \} $ are
weakly$^*$ compact for every $ R > 0 $, so that, according to the
definition in \cite[Assumption H1]{MR4040623}, the norm $ \| \cdot \|_{\TV} $ is
coercive on $U$.

By \Cref{thm:bredies-carioni},
the problem \eqref{eq:3} has minimizers of the form
\begin{equation*} 
 \mu = \sum_{k=1}^K a_k u_k,
 \qquad
 K \le N ,
 \quad
 a_k >  0 ,
 \quad
 \sum_k a_k = \| \mu \|_{\TV} ,
 \quad
 u_k \in \Ext (B) ,
\end{equation*}
where $B$ is the unit ball in $\mm(\Th)$ and $\Ext (B)$ is the set of
extremal points of $B$ (see~\Cref{dfn:extr-points-theor}).
Furthermore, thanks to~\Cref{lem:Ext(B)},
$$ 
\Ext (B) = \{ \pm \de_{\th} : \th \in \Th \} ,
$$
so that $\mu$ is a measure with finite support of cardinality at most
$N$.
We thus set $f=f_{\mu}$ and
\[
\alpha_k=
\begin{cases}
  a_k\beta(\th_k) & u_k=\delta_{\theta_k} \\
  -a_k\beta(\th_k) & u_k=-\delta_{\theta_k}
\end{cases}.
  \]
By~\eqref{eq:4} we have
$\alpha_k=a_k\beta(\th_k)\neq 0$ if and only if $\beta(\th_k)\neq 0$,
so that~\eqref{eq:29} holds true by removing the parameters $\th_k$
such that $\beta(\th_k)=0$,
as a consequence
of~\eqref{eq:RKBS2} and  the fact that $ \sum_k a_k = \| \nu \|_{\TV} $.
\end{proof}

\begin{rmk}
While our main motivation is supervised learning,
and thus we focus on minimizing objectives defined by loss functions,
it is clear from the working assumptions of \Cref{thm:bredies-carioni} that
\Cref{thm:representer} holds true
for more general variational problems,
arising from different choices of sampling $\aa$
and finite-data constraint $F$ (see \cite{MR4040623}).
\end{rmk}

\begin{rmk}
The above result is close to \cite[Theorem~1]{unser2017splines},
\cite[Theorem 1]{parhi2021banach}, where in both cases there is  an extra
polynomial term. It is also close to~\cite[Theorem 4.2]{MR4040623}, \cite[Section 4.1]{unser2020unifying},
that are stated for $\mm(\Th)$.
For further details and comparisons,
see Sections \ref{sec:discussion-2} and \ref{sec:discussion-3}.
\end{rmk}

\subsection{Neural Network RKBS}

We start discussing some examples illustrating how the above results specialize to neural networks (we further develop this discussion in later sections).

\begin{ex}[One hidden layer neural networks] \label{ex:nn}
Let $ \si : \R \to \R $ be a continuous (nonlinear) activation function.
A one hidden layer neural network is a function
\begin{equation} \label{eq:nn}
f(x) = \sum_{k=1}^K \al_k \si ( w_k \cdot x - b_k ) ,
\end{equation}
with $ w_k \in \R^d $ and $  b_k \in \R $.
Let $ \Th = \R^{d+1} $,
$ \rho(x,\th) = \si ( w \cdot x - b ) $ for $ \th = (w,b) $,
and pick a $ \beta $ satisfying \eqref{eq:2}.
Applying \Cref{thm:representer}, we obtain solutions of the form \eqref{eq:nn},
with $ K \le N $.
Typical examples of $\sigma$ are
sigmoidal functions,
{\it i.e.} functions satisfying $ \lim_{t\to-\infty} \si(t) = 0 $ and $ \lim_{t\to+\infty} \si(t) = 1 $,
and the widely used Rectified Linear Unit (ReLU)
$ \si(t) = \max\{0,t\} $.
It is well known that for all these choices of $\sigma$ the corresponding hypothesis classes are universal \cite{cybenko89,pinkus1999approximation}.
In Section \ref{sec:reluradon} we will be studying in full detail
the RKBS and corresponding norm associated with one hidden layer neural networks with (generalized) ReLU activation function.
\end{ex}

\begin{ex}[RBF networks \&
kernel mean embedding
] \label{ex:kernel}
Assume that $\xx$ is a compact topological space and $\kappa:\xx\times\xx
\to \R$ is a continuous semi-positive definite kernel.  For $\xx=\R^d$, a classic example is the Gaussian kernel $\kappa(x,x')=e^{-\|x-x'\|^2\gamma}$, which is also an example of Radial Basis Function (RBF) \cite{que2016back}.  Let $\hh$  be
the corresponding  reproducing kernel Hilbert space and $\bb$ be the
Banach space given by~\eqref{eq:RKBS} with the choice $\Th=\xx$,  $\rho=\kappa$ and $\beta=1$.
Equation \eqref{eq:14} gives that $f_x= \kappa(\cdot,x)=\kappa_x$ for all $x\in\xx$, so
that~\eqref{eq:8} becomes
\[
  f_\mu =\int_{\xx} \kappa_x \ \D \mu(x) \in \hh.
  \]
It is interesting to note that this is exactly the kernel mean embedding of $\mu$ (see  \cite{muandet2016kernel} and
references therein). Hence $\bb$ is a subspace of $\hh$ and,  since
 the kernel mean embedding is continuous from $\mathcal M(\Th)$ into
 $\hh$, the norm $\nor{\cdot}{\bb}$ is stronger than the norm induced
 by the scalar  product of $\hh$. For example, if the kernel $\kappa$ is
 characteristic \cite{muandet2016kernel}, the map $\mu\mapsto f_\mu$
 is injective, so that $\bb$ is isometrically isomorphic to $\mathcal
 M(\Th)$, which is not separable, whereas $\hh$ is separable since
 $\xx$ is. 
Still, \Cref{thm:representer} states the existence of solutions of
the form
\[
f = \sum_{i=1}^K \alpha_i K_{x'_i},
\qquad
x'_i \in \xx,
\]
with $ K\le N $.
Note however that \Cref{thm:representer}
does not imply that the points $ x'_i $
belong to the training set $ \{ x_i \}_{i=1}^N $.
\end{ex}
In later sections, we will further develop the study of RKBS corresponding to neural networks defined by generalized ReLU functions and characterize their norm.
Before that, we discuss the representer theorem we proved, reviewing classical as well recent related results.

\subsection{Discussion: representer theorems in learning, Banach and variational theory} \label{sec:discussion-2}

The  representer theorem originates from the work of \cite{kimeldorf1970correspondence,kimeldorf1971some}
on interpolation and smoothing problems in reproducing kernel Hilbert spaces,
and plays a key role in  kernel methods \cite{scholkopf2001generalized,scholkopf2002learning}.
In a simple form, the classical representer theorem asserts that
the solution of the regularized empirical risk minimization on a RKHS
is a finite linear combination of the kernel evaluated at the input data  points.
This result is both conceptually and practically remarkable,
since it allows to compute the solution of an infinite-dimensional models solving a finite dimensional problem.

In a broader sense, the representer theorem can also be seen as a sparsity result, showing the existence of solutions that are combinations of at most as many elements as the number of samples,
regardless of how high the dimension of the hypothesis class is.
Sparsity is an important property in machine learning (as well as in signal processing),
and  can be enforced by constraining the $\ell^1$ norm of the model parameters \cite{tibshirani1996regression,chen2001atomic}.
In a finite-dimensional model, sparsity is essentially a consequence of Carath\'eodory's convex hull theorem (see {\it e.g.} \cite[Section B.1]{rosset2004boosting}).
Sparse models naturally generalize to infinite dimensions by 
replacing the linear coefficients with the integration with respect to a measure,
and the $\ell^1$ norm with the $\TV$ norm.
Along these lines, \cite{JMLR:v18:14-546,rosset2007} consider
superpositions of infinitely many (and more than countable) features
with $ \TV $ regularization.
\cite[Theorem 1]{rosset2007} can be seen as a representer theorem for bounded features and positive measures, based on an extension of Carath\'eodory's theorem to positive measures \cite[Theorem 2]{rosset2007}.
Note that these constructions go beyond kernel methods and RKHS,
and in particular in the direction of neural networks as described in previous sections,
hence requiring different tools from functional analysis.

The approach relevant to our study is given by reproducing kernel Banach
spaces. The paper~\cite{JMLR:v10:zhang09b} introduces reflexive RKBS and
proves a representer theorem (Theorem 19) for minimal norm
interpolation on uniformly convex RKBS (assuming linearly independent
features at the sample points). A different approach is given in \cite{MR3716734}. 
Uniform convexity is assumed so that the Riesz representation theorem holds,
thus ensuring that continuous linear functionals are semi-inner products.
Using bilinear forms instead of inner products,
\cite{song2013reproducing,song2011reproducing} handle non-reflexive spaces,
and study in particular RKBS with $\ell^1$ or $\TV$ norm.
Their construction starts directly from a kernel function,
on which they impose admissibility conditions to obtain representer theorems,
see
\cite[Theorem 4.8, Corollary 4.9]{song2013reproducing}, \cite[Theorem 2.4]{song2011reproducing}.
Non-reflexive $p$-norm RKBS are constructed in \cite{xu2019generalized} via generalized Mercer kernels,
although the representer theorems require reflexivity.
Further definitions of RKBS are reviewed and unified in \cite{Lin2019OnRK}.
While the authors provide a general framework to construct RKBS and kernels by pairs of feature maps,
their representer \cite[Theorem 4.4]{Lin2019OnRK} still assumes reflexivity of the feature space.
We remark that even in the non-reflexive spaces considered in \cite{song2013reproducing,song2011reproducing}
the kernel is a function on the square of the input space,
and therefore the model can not accomodate typical basis functions parameterized by a different parameter space than the input space,
thus ruling out integral feature models \cite{JMLR:v18:14-546,rosset2007} and neural networks.

The full generality of representer theorems beyond reflexive spaces can be found in optimization and variational theory,
where they have come to mean virtually any result establishing the existence of sparse solutions to empirical minimization problems with convex regularization.
This kind of problems has  a long history.
A notable example is Radon measure recovery with $\TV$ regularization,
for which ante litteram representer theorems (for bounded domains) can be found in \cite{fisher1975spline,zuhovickii1948remarks},
stating the existence of solutions that are finite linear combinations of Dirac deltas.
The proof of these results are crucially based on the Krein--Milman theorem
and the characterization of extremal points.
A more general setting has been recently developed in \cite{unser2017splines}.
Here, the authors start from a pseudo-differential operator $\operatorname{L}$,
and consider the inverse problem over an associated native space $\mm_{\operatorname{L}}$ of functions on $\R^d$ with generalized TV seminorm $ \| \operatorname{L} \cdot \|_{\TV} $.
Then, they show that the extremal points of such a problem are $\operatorname{L}$-splines,
{\it i.e.} functions which are sparsified by $\operatorname{L}$,
plus a term in the (finite-dimensional) kernel of $\operatorname{L}$.
This point of view has been considered by \cite{parhi2021banach} and extended
from $\R^d$ to $\P^d$ with the notion of ridge spline,
of which ReLU neural networks are examples. The papers~\cite{boyer2019representer,MR4040623} introduce an extremely general variational framework that extends \cite{unser2017splines}
to inverse problems on locally convex spaces with abstract convex \cite{boyer2019representer} or seminorm \cite{MR4040623} regularization.
The corresponding representers are established:
\cite[Theorem 1]{boyer2019representer} assumes a priori the existence of minimizers and focuses on the geometry of the solution set,
whereas \cite[Theorem 3.3]{MR4040623} provides sufficient topological conditions for the existence of minimizers.

In summary, we can roughly identify three lines of work studying representer theorems:
representers for learning models (classically kernel methods, more recently neural networks),
representers for RKBS (generalizing RKHS),
and representers in variational theory.
Recently, the abstract variational framework has been applied and
reconnected to machine learning. The paper~\cite{unser2020unifying} proves a general representer theorem for dual pairs of Banach spaces,
which can be specialized to a wide range of learning problems, including sparse regularization on non-reflexive spaces (using \cite[Theorem 1]{boyer2019representer}).
In \cite{parhi2021banach}, \cite[Theorem 4.2]{MR4040623} is applied to provide a representer theorem for neural networks with ReLU (type) activation function.
In our paper, we further incorporate and exploit the ingredient of (non-reflexive) RKBS.
While the RKBS structure is implicitly present in several previous works \cite{rosset2004boosting,JMLR:v18:14-546,parhi2021banach},
its role in the explicit construction and characterization of neural network models was not clear or emphasized.
In our work, we show how such a structure allows to directly derive representer theorems for feature models and neural networks
from general variational theory.
For a detailed comparison between our results and \cite{parhi2021banach} we refer to~\Cref{sec:discussion-3}.

\section{Banach representation and Radon regularization of ReLU neural networks}\label{sec:reluradon}
In this section we discuss the RKBS associated with truncated power activation
functions, including the ReLU.
This is related to the results in \cite{parhi2021banach},
but here we follow a dual
approach and provide a finer characterization.
First, we define a hypothesis space $\bb_m$ as a RKBS parametrized
by $\mathcal M(\Theta)$ for a suitable choice of $\Th$ and $ \rho = \rho_m $. Then, we
characterize the norm of $\bb_m$ by means of the Radon transform.

\subsection{The hypothesis space}
Let $S^{d-1}$ be the unit
sphere in $\R^d$, and let
$$
 \Xi= S^{d-1}\times \R
$$
with the product
topology, which makes it a locally compact second countable space.
Given $\mu\in\mathcal M(\Xi)$,
we set $\mu^\vee\in\mm(\Xi)$ to be the bounded measure defined by
  \begin{equation*}
\mu^\vee(E)=\mu(-E) 
\end{equation*}
  for every Borel set $E \subset \Xi$.
We define the subspaces of even and odd measures as
  \begin{align*}
    & \mm(\Xi)_{\rm even} =\{ \mu\in\mm(\Xi) : \mu^\vee=\mu\} , \\
    & \mm(\Xi)_{\rm odd} =\{ \mu\in \mm(\Xi) : \mu^\vee=- \mu \}.
  \end{align*}
Furthermore,  for every $\mu\in \mm(\Xi)$, we define the even and odd part of $\mu$ as
\begin{equation*}
\mu_{\rm even}=\frac{\mu+\mu^{\vee}}{2}\in \mm(\Xi)_{\rm even},\qquad
\mu_{\rm odd}=\frac{\mu-\mu^{\vee}}{2}\in \mm(\Xi)_{\rm
  odd}. 
\end{equation*}  
Every $\mu\in \mm(\Xi)$ can be written as the sum $\mu=\mu_{\rm even}+\mu_{\rm odd}$ and this factorization is unique, so that
\[
\mm(\Xi)= \mm(\Xi)_{\text{even}}+ \mm(\Xi)_{\text{odd}}.
\]

Moreover,  for every integer $m
\ge 2$, we
define the truncated power activation  function $ \sigma_m\colon\R\to  \R$ as  
\begin{equation} \label{eq:sigma_m}
 \sigma_m (t) = \frac{1}{(m-1)!} \max\{ 0 , t \}^{m-1},\qquad t \in \R
\end{equation}
(see~\Cref{fig:ReLUs}), and the correspondingly
\begin{equation} \label{eq:rho_m}
\rho_m:\R^d \times \Xi \to \R , \qquad \rho_m(x,n,t)=\sigma_m(n\cdot x -t).
\end{equation}
Note that,  for $m=2$, $\sigma_2$ corresponds to the Rectified Linear
Unit (ReLU).

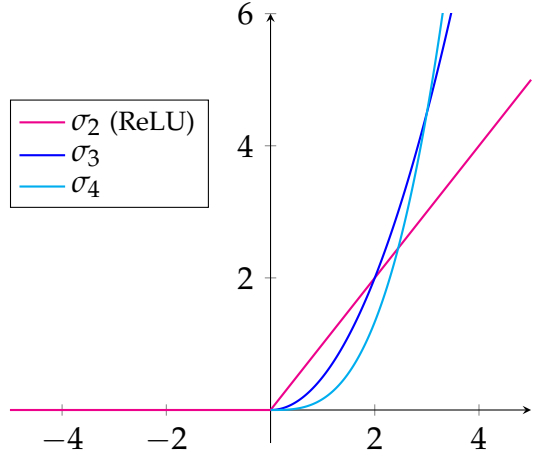
\begin{figure}[ht]
    \centering
    \begin{tikzpicture}
        \begin{axis}[ymin = -0.5, ymax=6, axis lines = middle,legend style={at={(0,0.8)},anchor=north west},legend cell align={left}]
            \addplot[domain=0:5, samples=200, thick, magenta] {x};
            \addplot[domain=0:4, samples=200, thick, blue] {1/2*x^2};
             \addplot[domain=0:4, samples=200, thick, cyan] {1/6*x^3};
            \addplot[domain=-5:0, samples=200, thick, magenta] {0};
            \addlegendentry{$ \si_2 $ \footnotesize (ReLU)}
              \addlegendentry{$ \si_3$}
              \addlegendentry{$ \si_4$}
        \end{axis}
    \end{tikzpicture}
    \caption{ReLU-type activation functions: the ReLU $\si_2$, and the truncated power functions $\si_3$ and $\si_4$.}
    \label{fig:ReLUs}
\end{figure}

We choose $\beta\in C_0(\Xi)$ such that
\begin{subequations}\label{eq:31}
  \begin{align}
    & 
      \beta(n,t)>0, \qquad \forall (n,t)\in \Xi, \label{eq:30} \\
     & \beta(-n,-t)= \beta(n,t), \qquad \forall (n,t)\in \Xi \label{eq:23} ,\\
   &  \lim_{t\to\pm\infty}  (|x|+|t|)^{m-1}\sup_{n\in S^{d-1}}
  \beta(n,t)  =0, \qquad \forall x \in \R^d \label{eq:20} .
  \end{align}
\end{subequations}
The positivity condition \eqref{eq:30} is posed
to characterize the kernel of the RKBS parametrization $ \mu \mapsto f_\mu $ (see \Cref{support}).
The symmetry requirement~\eqref{eq:23}
allows to {control} the parity
when dealing with Radon transform and measures  (see \Cref{lem:green} and \Cref{rmk:parity}).
The requirement \eqref{eq:20} ensures  that condition~\eqref{eq:2} holds true
(see \Cref{rmk:beta}),
since
  \begin{equation}
\sup_{n\in S^{d-1}}  \rho_m(x,n,t) \le \frac{1}{(m-1)!}
(|x|+|t|)^{m-1}.\label{eq:21}
\end{equation}
An example of $\beta$ satisfying the above conditions  is
\[
\beta(n,t) =\frac{1}{1+|t|^m}.
  \]

According to the framework of~\Cref{sec:class-integr-repr}, with the
choice of $\xx=\R^d$ as input space and 
$\Th=\Xi$ as parameter space,
we define  $\bb_m$ as the RKBS with
kernel $\rho_m$ and smoothing function $\beta$,  {\it i.e.}
\begin{subequations} \label{eq:15}
  \begin{align}
  & \bb_m = \{ f_\mu : \mu \in \mm(\Xi) \}  \label{eq:16} , \\
  & f_\mu (x) = \int_{\Xi} \sigma_m( n \cdot x - t)\beta(n,t) \
               \D\mu(n,t) \label{eq:17} , \\
    & \| f \|_{\bb_m} = \inf \{ \| \mu \|_{\TV} : \mu \in \mm(\Xi) , f
                     = f_\mu \}. \label{eq:normrkbsrelu}
  \end{align}
\end{subequations}

\subsection{The regularization norm} \label{sec:main2}
The next theorem  provides an alternative characterization of the norm \eqref{eq:normrkbsrelu}
by means of the  Radon transform.
A similar result is stated in
\cite{parhi2021banach}, within a different framework.
To state our result,
we first need to specify a few operators.
We list them here, and we refer to \Cref{sec:radon} for all the details.
The operator $\rr$ denotes the Radon transform from the space $\cS_0'(\R^d)$ of
Lizorkin distributions on $\R^d$ onto the space $\cS_0'(\Xi )$ of
Lizorkin distributions on the space $\Xi$ (Definitions
\ref{defn:classicalradon} and  \ref{defn:radonanddualdistributions}).
The operator $\Lambda^{d-1}$ is the Fourier multiplier defined
by~\eqref{eq:27} and \eqref{eq:La-ext}, and it is at the root of the inversion formulae for
the Radon transform
(\Cref{teo:bagkprojectionformulaclassical}
and \Cref{cor:inversionformuladistributions}).
The operator $\partial_t$ is the distributional derivative acting on the variable $t$
defined in~\Cref{prop:operatorA}.

  \begin{thm}\label{main2}
 Fix an integer $m\geq 2$. Set $\bb_m$ as the reproducing kernel 
 Banach space with $\rho_m$ as in \eqref{eq:sigma_m}, \eqref{eq:rho_m}
and $\beta$ satisfying~\eqref{eq:31}, and let $\mathcal Q_m$ and $\mathcal P_m$ be the subspaces defined by
 \begin{align*}
& \mathcal Q_m = \{ f_\tau\in\bb_m\colon\tau\in\mm(\Xi) , \ \tau^\vee=(-1)^m\tau \} ,\\
&  \mathcal P_m = \{ f_\nu\in\bb_m\colon\nu\in\mm(\Xi) , \
   \nu^\vee=(-1)^{m+1}\nu\} .
  \end{align*}
Then $\mathcal Q_m$ and $\mathcal P_m$ are closed   subspaces of
$\bb_m$  such that
\begin{align*}
  & \bb_m = \mathcal Q_m + \mathcal   P_m ,  
  \end{align*}
and
  \begin{align*}
  & \mathcal   P_m =\{p:\R^d\to\R\colon p \text{ is a polynomial of degree at most }m-1\}.
\end{align*}
Moreover:
 \begin{enumerate}[label=\textnormal{(\roman*)}]
 \item\label{item:5}
   the elements $ f \in \bb_m $ are continuous functions satisfying the growth condition
\begin{equation} \label{eq:growth_condition}
|f(x)| \leq C_f (1+|x|)^{m-1} , \qquad
     x \in \R^d ,
   \end{equation}
so that $f\in \cS_0'(\R^d)$;  
\item\label{item:6} for all  $\mu\in \mm(\Xi)$, 
setting
   \begin{equation}\label{eq:38}
     \tau= \frac{\mu+(-1)^m \mu^\vee }{2} , \qquad \nu
     =\frac{\mu+(-1)^{m+1} \mu^\vee }{2} , 
   \end{equation}
  we have
   \begin{equation*}
     P_{\mathcal Q_m} f_\mu =f_\tau ,  \qquad P_{\mathcal P_m} f_\mu =f_{ \nu}, 
   \end{equation*}
   and
  \begin{align}
  & \frac{1}{2 (2\pi)^{d-1}\beta}\partial_t^{m} \Lambda^{d-1}\mathcal{R} f_{\mu}=\tau\label{eq:24};
  \end{align}
\item for all $f\in \bb_m$ ,
\begin{align}
&   \nor{f}{\bb_m} \leq \nor{P_{\mathcal Q_m}f }{\bb_m} + \nor{P_{\mathcal P_m}f}{\bb_m} \leq 2 \nor{f}{\bb_m} \label{eq:33} , \\
 &  \nor{P_{\mathcal Q_m}f}{\bb_m}  =\nor{\frac{1}{2 (2\pi)^{d-1}\beta}\partial_t^{m} \Lambda^{d-1}\mathcal{R} f}{\TV} \label{eq:46} , \\
  & \nor{P_{\mathcal P_m} f}{\bb_m}  =\inf\{ \nor{\nu}{\TV} \colon \nu\in\mm(\Xi),\,
                   \nu^\vee=(-1)^{m+1}\nu,\, f_\nu=P_{\mathcal P_m} f \} \label{eq:47} ;
\end{align}
   \item\label{item:4} take a tempered distribution $T\in \cS'(\R^d)$ such that
   \begin{align}
    \tau & =\frac{1}{2(2\pi)^{d-1}\beta }\partial_t^{m} \Lambda^{d-1}\mathcal{R} T\in \mm(\Xi)\label{eq:100} , \\
      T&-  f_\tau  \in \mathcal P_m \label{eq:101}
   \end{align}
then  $T\in\bb_m$  and
\[
P_{\mathcal Q_m} T = f_\tau,  \qquad P_{\mathcal P_m} =f_\nu, 
  \]
  for some $\nu\in\mathcal M(\Theta)$ such that $\nu^\vee=(-1)^{m+1}\nu$.
 \end{enumerate}
\end{thm}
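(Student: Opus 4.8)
The plan is to reduce the theorem to two elementary identities and one analytic computation. The elementary identities are the truncated-power relation
\[
\sigma_m(s)+(-1)^{m+1}\sigma_m(-s)=\frac{s^{m-1}}{(m-1)!},\qquad s\in\R,
\]
and the reflection formula
\[
f_{\mu^\vee}(x)=\int_\Xi \sigma_m\big(-(n\cdot x-t)\big)\,\beta(n,t)\,\D\mu(n,t),
\]
the latter obtained from the change of variables $(n,t)\mapsto(-n,-t)$ and the symmetry \eqref{eq:23} of $\beta$. Together they show that $f_\mu+(-1)^{m+1}f_{\mu^\vee}$ is a polynomial of degree at most $m-1$ for every $\mu$. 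Applying this to the even/odd splitting $\mu=\tau+\nu$ of \eqref{eq:38} (with $\tau^\vee=(-1)^m\tau$, $\nu^\vee=(-1)^{m+1}\nu$) immediately yields that $f_\nu$ is a polynomial of degree at most $m-1$, so that $f_\mu=f_\tau+f_\nu$ provides the candidate projections $P_{\mathcal Q_m}f_\mu=f_\tau$, $P_{\mathcal P_m}f_\mu=f_\nu$ of item~\ref{item:6} and gives the inclusion $\mathcal P_m\subseteq\{\text{polynomials of degree}\le m-1\}$.

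Item~\ref{item:5} is direct: by \eqref{eq:21} and the decay \eqref{eq:20}, $\sup_{(n,t)\in\Xi}|\rho_m(x,n,t)\beta(n,t)|\le C(1+|x|)^{m-1}$, hence $|f_\mu(x)|\le C\nor{\mu}{\TV}(1+|x|)^{m-1}$; taking the infimum over representing measures gives \eqref{eq:growth_condition}, continuity follows by dominated convergence, and polynomial growth places $f$ in $\cS_0'(\R^d)$. The analytic core is the inversion identity \eqref{eq:24}. Writing $G=\tfrac{1}{2(2\pi)^{d-1}}\,\partial_t^{m}\Lambda^{d-1}\mathcal R$, I would first prove the Green's-function identity on a single atom,
\[
G\,\sigma_m(n_0\cdot x-t_0)=\tfrac12\big(\delta_{(n_0,t_0)}+(-1)^m\delta_{(-n_0,-t_0)}\big),
\]
using that $\partial_t^m\sigma_m=\delta_0$ on $\R$ (so $\sigma_m$ is the Green's function of $\partial_t^m$) and computing the action of $\mathcal R$ and of the multiplier $\Lambda^{d-1}$ on the ridge distribution $\sigma_m(n_0\cdot x-t_0)$ inside $\cS_0'$; the sign $(-1)^m$ arises because $\mathcal R f_\mu$ is invariant under $(n,t)\mapsto(-n,-t)$ whereas $\partial_t^m$ contributes the factor $(-1)^m$. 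I would then pass to general $\mu$ by testing against Lizorkin functions $\psi\in\cS_0(\Xi)$ and using the Bochner representation \eqref{eq:8}, so that $\langle Gf_\mu,\psi\rangle=\int_\Xi\langle Gf_\theta,\psi\rangle\,\D\mu(\theta)$, the interchange being justified by \eqref{eq:21} and \eqref{eq:20}; this gives $Gf_\mu=\beta\tau$, which is \eqref{eq:24}.

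The structural statements then follow from \eqref{eq:24}, the positivity \eqref{eq:30} of $\beta$, and the fact that polynomials of degree at most $m-1$ vanish as Lizorkin distributions and are hence annihilated by $G$. Applying $G$ to $f_\mu=0$ shows that the kernel of $\mu\mapsto f_\mu$ is exactly $\{\nu\in\mm(\Xi):\nu^\vee=(-1)^{m+1}\nu,\ f_\nu=0\}$; in particular every representative of a polynomial has vanishing $\mathcal Q_m$-part and is thus of $\mathcal P_m$-parity. If $f_\tau\in\mathcal Q_m$ is a polynomial, then $\beta\tau=Gf_\tau=0$ forces $\tau=0$, giving $\mathcal Q_m\cap\mathcal P_m=\{0\}$ and, via $\mu=\tau+\nu$, the decomposition $\bb_m=\mathcal Q_m+\mathcal P_m$. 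For the reverse inclusion $\{\text{polynomials of degree}\le m-1\}\subseteq\mathcal P_m$ I would use the classical fact that the ridge monomials $(n\cdot x-t)^{m-1}$ span this space; rewriting each monomial through the truncated-power identity and dividing by $\beta>0$ yields a finite $\mathcal P_m$-parity atomic measure mapping to the prescribed polynomial. Hence $\mathcal P_m$ equals the finite-dimensional, and therefore closed, space of polynomials of degree $\le m-1$, and closedness of $\mathcal Q_m$ follows once $P_{\mathcal Q_m}$ is bounded, established next.

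It remains to establish the norm identities \eqref{eq:33}--\eqref{eq:47} and item~\ref{item:4}. For \eqref{eq:46} I must show $\nor{f_\tau}{\bb_m}=\nor{\tau}{\TV}$ when $\tau^\vee=(-1)^m\tau$. By the kernel description every representative of $f_\tau$ equals $\tau+\eta$ with $\eta$ of $\mathcal P_m$-parity, so it suffices to prove $\nor{\tau+\eta}{\TV}\ge\nor{\tau}{\TV}$. Fixing a measurable fundamental domain $\Xi_+$ for $(n,t)\mapsto(-n,-t)$ and setting $a=\tau|_{\Xi_+}$, $b=\eta|_{\Xi_+}$, the additivity of $\nor{\cdot}{\TV}$ over $\Xi_+$ and $-\Xi_+$ reduces this to the parallelogram-type bound $\nor{a+b}{\TV}+\nor{a-b}{\TV}\ge2\nor{a}{\TV}$, which is the triangle inequality applied to $2a$. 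Identity \eqref{eq:47} is then immediate, since every representative of the polynomial $P_{\mathcal P_m}f$ is already of $\mathcal P_m$-parity, so the constrained and unconstrained infima agree; and \eqref{eq:33} follows from the triangle inequality together with $\nor{\tau}{\TV},\nor{\nu}{\TV}\le\nor{\mu}{\TV}$ for a near-optimal representative $\mu$ of $f$. Finally item~\ref{item:4} is a direct consequence: given $T\in\cS'(\R^d)$ with $\tau=\beta^{-1}GT\in\mm(\Xi)$ (automatically of $\mathcal Q_m$-parity) and $T-f_\tau\in\mathcal P_m$, one writes $T=f_\tau+(T-f_\tau)$ with $f_\tau\in\mathcal Q_m$ and $T-f_\tau\in\mathcal P_m$, and reads off the projections. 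I expect the main obstacle to be the atom-level Green's-function identity and its propagation to arbitrary $\mu$: the ridge distribution $\sigma_m(n_0\cdot x-t_0)$ is not integrable, so $\mathcal R$ and $\Lambda^{d-1}$ must be handled within the Lizorkin-distribution framework of \Cref{sec:radon}, and the interchange of $G$ with the Bochner integral must be justified with care.
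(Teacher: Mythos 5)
Your proposal is correct and follows the same overall architecture as the paper's proof: the parity decomposition $\mu=\tau+\nu$, the Green-type identity $\tfrac{1}{2(2\pi)^{d-1}}\partial_t^m\Lambda^{d-1}\rr f_\mu=\beta\tau$ as the analytic core, the kernel characterization via the strict positivity of $\beta$ and the Lizorkin inversion formula, and the norm identities via $\nor{\mu^\vee}{\TV}=\nor{\mu}{\TV}$. Three local choices differ from the paper and are worth noting. First, you obtain the inclusion $\mathcal P_m\subseteq\{\text{polynomials of degree}\le m-1\}$ purely from the elementary identity $\si_m(s)+(-1)^{m+1}\si_m(-s)=s^{m-1}/(m-1)!$ applied under the integral, so that $2f_\nu=\int (n\cdot x-t)^{m-1}\beta\,\D\nu/(m-1)!$ for $\nu$ of $\mathcal P_m$-parity; the paper instead deduces this from the Radon machinery (\Cref{support}, via $\tau=0\Rightarrow f_\mu=0$ in $\cS_0'(\R^d)$ plus the growth bound). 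Your derivation is more elementary and isolates exactly where the polynomial part comes from, but it does not replace the Radon argument, which you still need (and correctly use) for the converse direction $f_\mu=0\Rightarrow\tau=0$. Second, you formulate the core identity atom-first ($G$ applied to a single ridge function equals a symmetrized pair of Dirac masses) and then integrate over $\mu$, whereas the paper works dually through \Cref{prop:propReLUdirac} and \Cref{lem:green}; these are the same computation in transposed form, and your plan correctly locates the delicate points (the ridge function is handled as a Lizorkin distribution, $\partial_t^m$ is inverted by the operator $\mathcal A$, the residual polynomial must be killed by a decay argument, and the Fubini interchange needs \eqref{eq:20}--\eqref{eq:21}). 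Third, for the key inequality $\nor{\tau+\eta}{\TV}\ge\nor{\tau}{\TV}$ you use a fundamental domain for $(n,t)\mapsto(-n,-t)$ and the triangle inequality on the two halves, while the paper writes $\tau$ as the symmetrization of $\tau+\eta$ and invokes \Cref{lem:even-odd}; both are valid, the paper's being marginally shorter. Your observation that \emph{every} representative of $P_{\mathcal P_m}f$ automatically has $\mathcal P_m$-parity (so the constrained and unconstrained infima in \eqref{eq:47} coincide) is a clean shortcut. I see no gap; the only part left as a plan rather than a proof is the atom-level Radon computation, and the route you sketch for it is exactly the one the paper executes.
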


The proof of \Cref{main2} is given in \Cref{sec:proof-main}.
Here we add some comments. Assume that  $m$ is even, in particular
$m=2$ for the ReLU (for odd $m$, simply interchange ``even'' and
``odd'' in what follows).  The measures $\tau$
and $\nu$ defined by~\eqref{eq:38} are the even and odd parts of $\mu$ and
\Cref{main2} states that
  \begin{align}
\bb_m = &\{ f_\tau \colon \tau\in\mm(\Xi)_{\rm even} \} + \{
  f_\nu\colon \nu\in\mm(\Xi)_{\rm odd}\} , \label{eq:35}
  \end{align}
so that any $f\in \bb_m$ admits a unique decomposition $f=f_\tau + f_\nu$
with $\tau\in \mm(\Xi)_{\rm even}$ and $\nu\in \mm(\Xi)_{\rm odd}$.
The even part $\tau$ is uniquely determined by the Radon transform of $f$
via \eqref{eq:24},
and $\nor{f_\tau}{\bb_m}=\nor{\tau}{\TV}$,
so that $\mathcal Q_m$ is isometrically isomorphic to $\mm(\Xi)_{\rm
  even}$.  The odd part $\nu$ over-parametrizes the finite-dimensional
space $\mathcal P_m$ of polynomials of degree less than $m$
and,  in particular, $\nor{f_\nu}{\bb_m} \leq  \nor{\nu}{\TV}$. 
 Finally, 
let $L=\dim(\mathcal P_m)$, and let $p_1,\ldots ,p_L$ be an
algebraic basis of $\mathcal P_m$. Since  $L$ is finite-dimensional, there exists a dual family $q_1,\ldots ,q_L$ in $\bb_m'$ such that
\[ \scal{q_\ell}{p_{\ell'}}{\bb_m'}{\bb_m}=\delta_{\ell,\ell'} . \]
  Then, for all $f\in \bb_m$,
  \[
f_\nu = \sum_{\ell=1}^L  \scal{q_\ell}{f}{\bb_m'}{\bb_m}\, p_\ell.
    \]
\Cref{item:4} provides an equivalent characterization of $\bb_m$ as a
subspace of the space of distributions,
as it happens for Besov spaces \cite{triebel},
and it is closely related to the original approach in
\cite{parhi2021banach,unser2017splines}. Equation \eqref{eq:100} means that
there exists a bounded measure $\tau\in\mm(\Xi)_{\rm even} $ such that
\[
\frac{1}{2(2\pi)^{d-1}}\partial_t^{m} \Lambda^{d-1}\mathcal{R} T =\beta \tau
\qquad \text{in } \cS_0'(\Xi) .
\]
Thus, $f_\tau\in \mathcal Q_m\subset \bb_m\subset
\cS'(\R^d)$, and~\eqref{eq:101} is equivalent to assuming that the remainder 
$T-  f_\tau$  is a polynomial of degree less than $m$. Without assuming \eqref{eq:101}
we have the following result,
whose proof is postponed to \Cref{sec:proof-main}.
\begin{cor}\label{without}
Take a tempered distribution $T\in \cS'(\R^d)$ such that~\eqref{eq:100} holds
true.
Then there exist a unique $f\in \mathcal Q_m$ and a unique polynomial
$p$ such that $T=f+p$. 
\end{cor}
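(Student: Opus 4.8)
The plan is to read off the component $f\in\mathcal Q_m$ directly from hypothesis \eqref{eq:100}, and then to show that the remainder $T-f$ is forced to be a polynomial.

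First I would set $\tau=\frac{1}{2(2\pi)^{d-1}\beta}\partial_t^{m}\Lambda^{d-1}\mathcal R T$, which is a bounded measure by \eqref{eq:100}, and check that it carries the parity defining $\mathcal Q_m$. Indeed, the Radon transform produces distributions that are even under the antipodal map $(n,t)\mapsto(-n,-t)$, i.e. $(\mathcal R T)^\vee=\mathcal R T$; the multiplier $\Lambda^{d-1}$ acts only in $t$ through an even symbol and hence commutes with $(\cdot)^\vee$, whereas each $\partial_t$ anticommutes with it, so $\partial_t^{m}$ contributes a factor $(-1)^m$; finally $\beta^\vee=\beta$ by \eqref{eq:23}. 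Combining these gives $\tau^\vee=(-1)^m\tau$, so that $f:=f_\tau\in\mathcal Q_m$ by definition of $\mathcal Q_m$.

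Next I would show that $T-f_\tau$ is a polynomial. Put $S=T-f_\tau\in\cS'(\R^d)$ (recall $f_\tau\in\bb_m\subset\cS'(\R^d)$ since every element of $\bb_m$ is a tempered distribution satisfying the growth bound \eqref{eq:growth_condition}). By \eqref{eq:100} we have $\partial_t^{m}\Lambda^{d-1}\mathcal R T=2(2\pi)^{d-1}\beta\tau$, while applying \eqref{eq:24} with $\mu=\tau$ gives $\partial_t^{m}\Lambda^{d-1}\mathcal R f_\tau=2(2\pi)^{d-1}\beta\tau$ (since $\tau^\vee=(-1)^m\tau$, the measure on the right-hand side of \eqref{eq:24} is $\tau$ itself). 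Subtracting yields $\partial_t^{m}\Lambda^{d-1}\mathcal R S=0$ in $\cS_0'(\Xi)$. Now the operator $\partial_t^{m}\Lambda^{d-1}\mathcal R$ is injective on $\cS_0'(\R^d)$: $\mathcal R$ is a bijection onto $\cS_0'(\Xi)$, and both $\Lambda^{d-1}$ and $\partial_t^{m}$ are invertible Fourier multipliers on the Lizorkin space. Since the kernel of the canonical quotient $\cS'(\R^d)\to\cS_0'(\R^d)$ is exactly the space of all polynomials, it follows that $S$ is a polynomial $p$, and hence $T=f+p$, which settles existence.

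For uniqueness, suppose $T=f_1+p_1=f_2+p_2$ with $f_i\in\mathcal Q_m$ and $p_i$ polynomials. Then $f_1-f_2=p_2-p_1$ lies in $\mathcal Q_m\subset\bb_m$ and is a polynomial; by the growth bound \eqref{eq:growth_condition} every element of $\bb_m$ grows at most like $(1+|x|)^{m-1}$, so this polynomial has degree at most $m-1$, i.e. it belongs to $\mathcal P_m$. As the sum $\bb_m=\mathcal Q_m+\mathcal P_m$ in \Cref{main2} is direct, $\mathcal Q_m\cap\mathcal P_m=\{0\}$, forcing $f_1=f_2$ and then $p_1=p_2$. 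The main obstacle is the middle step, namely correctly handling the passage to the Lizorkin quotient: one must make precise that $\partial_t^{m}\Lambda^{d-1}\mathcal R S=0$ together with injectivity on $\cS_0'(\R^d)$ places $S$ in the kernel of $\cS'(\R^d)\to\cS_0'(\R^d)$, which is the full space of polynomials of arbitrary degree. This is exactly why no degree restriction, and hence no analogue of \eqref{eq:101}, is available here, and it is the feature distinguishing \Cref{without} from item \ref{item:4} of \Cref{main2}; the supporting technical inputs are the parity bookkeeping of the first step and the invertibility of $\Lambda^{d-1}$ and $\partial_t^{m}$ on Lizorkin distributions.
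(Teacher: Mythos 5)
Your proof is essentially the paper's own argument: define $\tau$ from \eqref{eq:100}, use \Cref{lem:green} (equivalently \eqref{eq:24} applied to $\mu=\tau$) to obtain $\partial_t^{m}\Lambda^{d-1}\mathcal{R}(T-f_\tau)=0$ in $\cS_0'(\Xi)$, and invoke the injectivity of $\partial_t^{m}\Lambda^{d-1}\mathcal{R}$ on Lizorkin distributions to conclude that $T-f_\tau$ vanishes in $\cS_0'(\R^d)$ and is therefore a polynomial. The one thing you add is an explicit uniqueness argument via the growth bound \eqref{eq:growth_condition} and $\mathcal{Q}_m\cap\mathcal{P}_m=\{0\}$, which the paper's proof leaves implicit; that part is also correct.
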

In \cite{parhi2021banach,unser2017splines}, the polynomial degree is enforced to be smaller than $m$ by requiring that $T$ is a distribution
satisfying the growth condition~\eqref{eq:growth_condition}.
Note that $ \bb_m $ satisfies \eqref{eq:growth_condition} by construction.

Finally,
we note that \Cref{thm:representer} immediately gives the following
representer theorem. 
\begin{cor} \label{cor:relurep}
Assume that, for every $ y \in \R$, the loss function $ L(y,\cdot) $ is
convex and coercive in  the second entry, and set $\bb_m$ as
in~\Cref{main2}. Then, the problem   
\begin{equation} \label{eq:problemReLU}
  \inf_{f\in \bb_m}\left(   \frac{1}{N} \sum_{i=1}^N L ( y_i , f(x_i) ) + \|f\|_{\bb_m} \right)
  \end{equation}
 always has minimizers of the form
  \begin{equation}\label{eq:solutionsReLU}
    f(x) = \sum_{k=1}^K \al_k \sigma_m ( n_k \cdot x - t_k ),
  \end{equation}
  where $ K \le N $, $(n_k,t_k)\in S^{d-1}\times \R$, $\al_k\in\R\setminus\{0\}$
  and
  \[\|f\|_{\bb_m}\leq \sum_{k=1}^K |\al_k|\beta(n_k,t_k)^{-1} . \]
\end{cor}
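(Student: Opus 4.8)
The plan is to recognize \Cref{cor:relurep} as the specialization of the general representer theorem \Cref{thm:representer} to the concrete integral RKBS $\bb_m$ of \eqref{eq:15}, so that the real work reduces to checking that the pair $(\rho_m,\beta)$ meets the working hypotheses of that theorem. First I would verify the key structural assumption \eqref{eq:2}, namely $\rho_m(x,\cdot)\beta\in\operatorname{C}_0(\Xi)$ for every $x\in\R^d$. Continuity of the product is immediate since $\sigma_m$ is continuous and $\beta\in\operatorname{C}_0(\Xi)$; the decay at infinity follows by combining the pointwise bound \eqref{eq:21},
\[
\sup_{n\in S^{d-1}}\rho_m(x,n,t)\le\frac{1}{(m-1)!}(|x|+|t|)^{m-1},
\]
with the weighted-decay requirement \eqref{eq:20} on $\beta$, which forces $(|x|+|t|)^{m-1}\sup_{n}\beta(n,t)\to 0$ as $|t|\to\infty$. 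This is exactly the observation already recorded in the discussion following \eqref{eq:31}, so this step is routine.

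Once \eqref{eq:2} is in hand, the remaining hypotheses of \Cref{thm:representer} are precisely the convexity and coercivity of $L(y,\cdot)$ in the second entry assumed in the corollary. I would then invoke \Cref{thm:representer} with $\xx=\R^d$, $\Th=\Xi=S^{d-1}\times\R$, $\rho=\rho_m$ and the chosen $\beta$. This directly produces a minimizer $f^*$ of \eqref{eq:problemReLU} of the form $f^*(x)=\sum_{k=1}^K\al_k\,\rho_m(x,\th_k)$ with $K\le N$, $\al_k\in\R\setminus\{0\}$, $\th_k=(n_k,t_k)\in\Xi$ and $\beta(\th_k)\ne 0$, together with the bound $\|f^*\|_{\bb_m}\le\sum_{k=1}^K|\al_k\beta(\th_k)^{-1}|$.

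Finally I would translate back to the stated form. By definition \eqref{eq:rho_m} we have $\rho_m(x,\th_k)=\sigma_m(n_k\cdot x-t_k)$, which gives \eqref{eq:solutionsReLU}; and since the positivity condition \eqref{eq:30} guarantees $\beta(n,t)>0$ everywhere, the constraint $\beta(\th_k)\ne 0$ is automatic and $|\al_k\beta(n_k,t_k)^{-1}|=|\al_k|\beta(n_k,t_k)^{-1}$, yielding the claimed norm bound. I expect no genuine obstacle here: every nontrivial ingredient — existence of minimizers in the non-reflexive space $\bb_m$, sparsity with $K\le N$, and the explicit norm estimate — is supplied wholesale by \Cref{thm:representer}, and the only thing left to confirm is that $\beta$ was constructed to satisfy \eqref{eq:2}, which is precisely what \eqref{eq:31} ensures.
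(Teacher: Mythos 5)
Your proposal is correct and matches the paper's own (essentially one-line) argument: the paper notes that the decay condition \eqref{eq:20}, combined with the bound \eqref{eq:21}, guarantees \eqref{eq:2}, and then states that \Cref{thm:representer} immediately yields \Cref{cor:relurep}. Your additional observations — that the strict positivity \eqref{eq:30} makes the constraint $\beta(\th_k)\neq 0$ automatic and lets you drop the absolute value around $\beta(n_k,t_k)^{-1}$ — are exactly the routine bookkeeping the paper leaves implicit.
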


\begin{rmk}
As already observed in \cite[Lemma 25]{parhi2021banach}, the Radon regularization corresponds to several forms of coefficient regularization, such as 
$\ell^1$-path-norm \cite{NIPS2015_eaa32c96} and weight decay \cite{NIPS1991_8eefcfdf}. 
Indeed, if we take $f\in\mathcal{B}_m$ of the form
  \begin{equation}
    f(x) = \sum_{k=1}^K \al_k \sigma_m ( n_k \cdot x - t_k ),
  \end{equation}
  where $ K \in\N $, $(n_k,t_k)\in S^{d-1}\times \R$, $\al_k\in\R\setminus\{0\}$, a simple computation gives that 
\begin{align*}
\|P_{\mathcal{Q}_m} f\|_{\mathcal{B}_m}&=\nor{\frac{1}{2 (2\pi)^{d-1}\beta}\partial_t^{m} \Lambda^{d-1}\mathcal{R} f}{\TV}=\sum_{k=1}^K |\al_k|\beta(n_k,t_k)^{-1}.
\end{align*}
The proof follows directly by Lemma~\ref{lem:green} together with the fact that 
\begin{align*}
f=f_\mu,\qquad  \mu=\sum_{k=1}^K\alpha_k\beta(n_k,t_k)^{-1}\delta_{(n_k,t_k)},
\end{align*}
and
\begin{align*}
\left\|\frac{\mu+(-1)^m\mu^\vee}{2}\right\|_{\TV}
&=\sum_{k=1}^K |\al_k|\beta(n_k,t_k)^{-1}.
\end{align*}
\end{rmk}

In the next section we provide an alternative construction of RKBS for ReLU type neural networks
where the polynomial space $\mathcal P_m$ is avoided.

\subsubsection{An alternative construction} \label{rmk:section}
As $ \Th = \P^d $, the space of all hyperplanes in $\R^d$, which is the natural
domain of the Radon transform.
For every hyperplane
$\xi\in \P^d$ there exists $(n,t)\in \Xi$ such that 
\[
x\in\xi \Longleftrightarrow x\cdot n=t .
\]
see~\Cref{fig:1}.
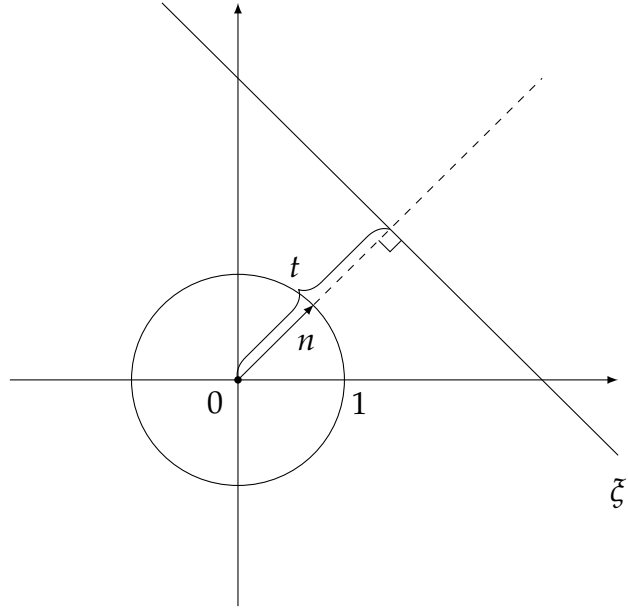
\begin{figure}[ht]
\centering
\begin{tikzpicture} [>=latex]
\draw [decorate,decoration={brace,amplitude=8pt}] (0,0) -- (2,2);
\node at (0.75,1.5) {$t$};
\draw plot [domain=-1:5] (\x, {-\x+4)}); 
\draw (0,0) circle (1.4 cm);
\draw [dashed] (1,1) - - (4,4); 
\draw [->] (-3,0) -- (5,0); 
\draw [->] (0,-3) -- (0,5); 
\fill (0,0) circle (1.4pt); 
\node at (0.9,0.5) {$n$}; 
\draw [->] (0,0) -- (1,1);
\node at (5,-1.5) {$\xi$}; 
\node at (1.6,-0.3) {$1$}; 
\node at (-0.3,-0.3) {$0$}; 
\draw (2,2) coordinate (b);
\draw[anchor=base,color=black]  (b.center) ++(-0.15,-0.15)  -- ++(0.15,-0.15) -- ++(0.15,0.15);
\end{tikzpicture}
\caption{The hyperplane $\xi$ with equation $n\cdot x=t$
  (two-dimensional case). \label{fig:1}}
\label{fig:radon}
\end{figure}
The space $\Xi$ is a double cover of $\P^d$ with covering map\footnote{A double cover of a topological space $X$
is a topological space $C$
together with a continuous surjective map
$ p : C \to X $, called covering map,
such that, for every $ x \in X $,
there exists an open neighborhood $U$ of $x$
such that $p^{-1}(U)$ is the union of two disjoint open sets in $C$,
each of which homeomorphic to $U$ via $p$.}
\[
\Psi\colon \Xi\to \P^d,\qquad \Psi(n,t)=\{x\in\R^d: x\cdot n=t\} ,
\]
and $ \Psi(n,t)=\Psi(n',t') $ if and only if  $(n',t')=(-n,-t)$. 
Therefore, we can identify $\P^d$ with the quotient space $\Xi/\sim$,
where $\sim$ is the equivalence relation on $\Xi$ given by 
  \begin{equation}
(n,t)\sim (n',t') \Longleftrightarrow (n',t') = (-n,-t) .\label{eq:22}
\end{equation}
We denote by $ [(n,t)] \in \P^d $ the equivalence class of $(n,t)\in\Xi$.
Note that $\rho_m$ given in \eqref{eq:rho_m} is not well-defined on $\P^d$ since $\rho_m(x,
n,t)\neq \rho_m(x,-n,-t)$. To overcome this problem, we fix a
measurable section
\[
s\colon \P^d\to \Xi ,
\qquad
s(\xi)=(n(\xi),t(\xi)),
\]
{\it i.e.} $s$ is a measurable map satisfying 
$$
\xi=[s(\xi)],
$$
for every $\xi\in\P^d$. Then, we define the feature map 
\[
\widetilde{\phi}_m : \R^d \to C_0(\P^d)\subset\mm(\P^d)'
\]
given, for every $x \in \R^d$ and $\xi\in\P^d$, by
\begin{align*}
\widetilde{\phi}_m(x)(\xi) &= \sigma_m( n(\xi) \cdot x - t(\xi))\beta(n(\xi),t(\xi)),
\end{align*}
where the smoothing function  $\beta$ satisfies~\eqref{eq:2} and is
strictly positive. Further, we suppose $\beta$ to be an even function if $m$ is even and an odd function if $m$ is odd. This last assumption ensures that the right-hand side in formula~\eqref{eq:greensection} has the right parity (cf. Remark~\ref{rmk:parity}). We thus define the RKBS $\widetilde{\bb}_m$ as the RKBS associated with the feature map $\widetilde{\phi}_m$ according to \Cref{prop:RKBS}.
As we will see,
a crucial point to characterize the norm of $\bb_m$ lies in \Cref{lem:green}.
For the corresponding characterization in the space $\widetilde{B}_m$, one can prove an alternative version of Lemma~\ref{lem:green}.
\begin{lem} \label{lem:greensection}
  For every $ f_\mu \in \widetilde{\bb}_m$,
  \begin{equation}
    \label{eq:greensection}
   \frac{1}{2(2\pi)^{d-1}} \partial_t^{m}
 \Lambda^{d-1}\mathcal{R} f_\mu = \beta\mu,
  \end{equation}
  where the equality holds in $\mathcal{S}_0'(\Xi)$.
\end{lem}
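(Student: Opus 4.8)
The plan is to reduce \eqref{eq:greensection} to the corresponding identity for $\bb_m$ (\Cref{lem:green}, equivalently \eqref{eq:24}) by transporting the measure $\mu\in\mm(\P^d)$ to $\Xi$ through the section $s$, and then to absorb the two-to-one nature of the covering $\Psi\colon\Xi\to\P^d$ using the parity of $\beta$. First I would set $\tilde\mu=s_*\mu\in\mm(\Xi)$, the pushforward of $\mu$ along $s$, which is a bounded measure supported on the fundamental domain $s(\P^d)\subset\Xi$. By the change-of-variables formula and the definition of $\widetilde\phi_m$, for every $x\in\R^d$,
\begin{equation*}
f_\mu(x)=\int_{\P^d}\sigma_m(n(\xi)\cdot x-t(\xi))\,\beta(n(\xi),t(\xi))\,\D\mu(\xi)=\int_{\Xi}\sigma_m(n\cdot x-t)\,\beta(n,t)\,\D\tilde\mu(n,t),
\end{equation*}
so that $f_\mu$ is exactly the element of $\bb_m$ attached to $\tilde\mu$. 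In particular, by item \ref{item:5} of \Cref{main2}, $f_\mu$ is continuous and obeys the growth bound \eqref{eq:growth_condition}, whence $f_\mu\in\mathcal{S}_0'(\R^d)$ and $\mathcal{R}f_\mu\in\mathcal{S}_0'(\Xi)$ is well defined.

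Next I would isolate the single-neuron Green identity that underlies \Cref{lem:green}. Writing $\rho_m(\cdot,n_0,t_0)=\sigma_m(n_0\cdot\,\bullet\,-t_0)$ and using the one-dimensional identity $\partial_t^{m}\sigma_m=\delta$ together with the distributional Radon inversion formula (\Cref{cor:inversionformuladistributions}), I expect
\begin{equation*}
\frac{1}{2(2\pi)^{d-1}}\,\partial_t^{m}\Lambda^{d-1}\mathcal{R}\,\rho_m(\cdot,n_0,t_0)=\tfrac12\bigl(\delta_{(n_0,t_0)}+(-1)^m\delta_{(-n_0,-t_0)}\bigr)\quad\text{in }\mathcal{S}_0'(\Xi),
\end{equation*}
the symmetric combination and the sign $(-1)^m$ reflecting that $\mathcal{R}f$ is always even while $\partial_t^{m}$ reverses parity $m$ times (\Cref{rmk:parity}); the normalization is fixed by consistency with \eqref{eq:24}. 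Since $\partial_t^{m}\Lambda^{d-1}\mathcal{R}$ is linear and continuous from $\mathcal{S}_0'(\R^d)$ into $\mathcal{S}_0'(\Xi)$, I would then commute it with the vector-valued (weak-$*$, equivalently Bochner) integral representation \eqref{eq:8} of $f_\mu=f_{\tilde\mu}$ and integrate the single-neuron identity against $\beta\,\D\tilde\mu$, obtaining
\begin{equation*}
\frac{1}{2(2\pi)^{d-1}}\,\partial_t^{m}\Lambda^{d-1}\mathcal{R}f_\mu=\tfrac12\,\beta\tilde\mu+\tfrac{(-1)^m}{2}\,(\beta\tilde\mu)^\vee .
\end{equation*}

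Finally I would collapse this symmetric expression using the parity hypothesis on $\beta$. As $\beta^\vee=(-1)^m\beta$, both sides above are distributions of parity $(-1)^m$, so it suffices to test against Lizorkin functions $\psi$ with $\psi^\vee=(-1)^m\psi$; for such $\psi$ one computes $\langle(\beta\tilde\mu)^\vee,\psi\rangle=(-1)^m\langle\beta\tilde\mu,\psi\rangle$, and hence $\tfrac12(\beta\tilde\mu+(-1)^m(\beta\tilde\mu)^\vee)$ pairs with $\psi$ exactly as $\int_{\P^d}\beta(s(\xi))\,\psi(s(\xi))\,\D\mu(\xi)$, which is the meaning of $\beta\mu$ in $\mathcal{S}_0'(\Xi)$ under the identification of $\mm(\P^d)$ with the parity-$(-1)^m$ distributions on $\Xi$ induced by $\Psi$. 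This yields \eqref{eq:greensection}. I expect the main obstacle to be twofold: justifying rigorously the exchange of the nonlocal operator $\partial_t^{m}\Lambda^{d-1}\mathcal{R}$ with the integral over the parameter space (which relies on the continuity of $\mathcal{R}$ and $\Lambda^{d-1}$ on Lizorkin distributions proved in \Cref{sec:radon}), and getting the parity bookkeeping exactly right, since it is precisely the hypothesis $\beta^\vee=(-1)^m\beta$ that lets the two antipodal Dirac contributions recombine into the single object $\beta\mu$ rather than into its symmetrization.
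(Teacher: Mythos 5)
Your argument is correct, and it organizes the proof somewhat differently from what the paper intends. The paper omits the proof of \Cref{lem:greensection} with the indication that one should repeat, verbatim for the new feature map, the direct computation used for \Cref{lem:green} (test against even/odd Lizorkin functions, unfold the integral, apply Fubini, \Cref{prop:propReLUdirac} and the inversion formula). You instead \emph{reduce} the statement to \Cref{lem:green} by pushing $\mu\in\mm(\P^d)$ forward along the section $s$ to $\tilde\mu=s_*\mu\in\mm(\Xi)$, observing that $f_\mu=f_{\tilde\mu}$ as elements of $\bb_m$, and then resolving the parity bookkeeping. This buys you a shorter proof that reuses the already-established analysis rather than re-deriving it; the price is that you must be careful on two points, both of which you handle correctly. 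First, for odd $m$ the smoothing function in the alternative construction is \emph{odd}, so condition \eqref{eq:23} fails and \Cref{lem:green} cannot be quoted literally; one must use the variant recorded in \Cref{rmk:parity}, whose right-hand side $(\beta\tilde\mu+(-1)^m\beta^\vee\tilde\mu^\vee)/2$ coincides with your $\tfrac12\beta\tilde\mu+\tfrac{(-1)^m}{2}(\beta\tilde\mu)^\vee$ since $(\beta\tilde\mu)^\vee=\beta^\vee\tilde\mu^\vee$. Second, the meaning of ``$\beta\mu$'' as an element of $\cS_0'(\Xi)$ has to be pinned down; your identification of $\mm(\P^d)$ with the parity-$(-1)^m$ part via $\langle\beta\mu,\psi\rangle=\int_{\P^d}\beta(s(\xi))\psi(s(\xi))\,\D\mu(\xi)$ is exactly what makes the statement consistent with the norm identity in \Cref{cor:alternative}, because $\tilde\mu$ and $\tilde\mu^\vee$ have disjoint supports so that $\|(\tilde\mu+(-1)^m\tilde\mu^\vee)/2\|_{\TV}=\|\mu\|_{\TV}$. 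The only step you state without full detail, the commutation of $\partial_t^m\Lambda^{d-1}\rr$ with the integral over the parameter space, is precisely the Fubini argument carried out in the proof of \Cref{lem:green}, so nothing new is needed there.
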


We skip the proof of Lemma~\ref{lem:greensection} since it is similar to the the proof of Lemma~\ref{lem:green}. Then, one can prove the following result.

\begin{cor} \label{cor:alternative}
The problem   
\begin{equation*} 
  \inf_{f\in \widetilde{\bb}_m}\left(   \frac{1}{N} \sum_{i=1}^N L ( y_i , f(x_i) ) + \|f\|_{\widetilde{\bb}_m} \right)
  \end{equation*}
 always has minimizers of the form
  \begin{equation*} 
    f(x) = \sum_{k=1}^K \al_k \sigma_m ( n_k \cdot x - t_k ),
  \end{equation*}
  where $ K \le N $, $(n_k,t_k)\in S^{d-1}\times \R$, $\al_k\in\R\setminus\{0\}$
  and
  \[\|f\|_{\widetilde{\bb}_m} = \sum_{k=1}^K |\al_k|\beta(n_k,t_k)^{-1} .\]
 Furthermore, the map $ \mu \mapsto f_\mu $
 is an isometry from $ \mm(\P^d) $ onto $\widetilde{\bb}_m$, and
 $$
\|f_{\mu}\|_{\widetilde{\bb}_m}=\|\mu\|_{\TV}=\left\|  \frac{1}{2(2\pi)^{d-1}\beta}\partial_t^{m} \Lambda^{d-1}\mathcal{R} f_\mu\right\|_{\TV},\qquad \mu\in\mm(\P^d).
$$
\end{cor}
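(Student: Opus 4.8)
The plan is to deduce all three assertions from \Cref{lem:greensection} together with the strict positivity of $\beta$. The conceptual heart of the statement is that, by passing from $\Xi$ to the quotient $\P^d$, the parametrization $\mu\mapsto f_\mu$ becomes \emph{injective}, so that the infimum defining the RKBS norm in item \ref{it:rkbs-norm} of \Cref{prop:RKBS} collapses onto a single measure. First I would establish this injectivity: if $\mu\in\mm(\P^d)$ satisfies $f_\mu=0$, then \Cref{lem:greensection} gives
\[
\beta\mu=\frac{1}{2(2\pi)^{d-1}}\partial_t^{m}\Lambda^{d-1}\mathcal{R} f_\mu=0
\qquad\text{in }\cS_0'(\Xi),
\]
and since $\beta>0$ everywhere by \eqref{eq:30}, the vanishing of $\beta\mu$ forces $\mu=0$. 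At this point I would make explicit the identification, induced by the fixed measurable section $s$ and the parity of $\beta$, between a measure on $\P^d$ and the corresponding fixed-parity measure on $\Xi$ appearing on the right-hand side of \eqref{eq:greensection}, and check that this identification preserves the total variation norm.

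The isometry and the norm identities then follow quickly. By item \ref{it:rkbs-norm} of \Cref{prop:RKBS} one has $\|f_\mu\|_{\widetilde{\bb}_m}=\inf\{\|\nu\|_{\TV}:\nu\in\mm(\P^d),\,f_\nu=f_\mu\}$, and injectivity forces $\nu=\mu$ in the infimum, whence $\|f_\mu\|_{\widetilde{\bb}_m}=\|\mu\|_{\TV}$. Surjectivity of $\mu\mapsto f_\mu$ onto $\widetilde{\bb}_m$ is immediate from item \ref{it:rkbs-rep_prop}, so the map is an isometric isomorphism of $\mm(\P^d)$ onto $\widetilde{\bb}_m$. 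The remaining equality in the displayed norm formula is simply \eqref{eq:greensection} divided by $\beta$: it identifies $\mu$ with $\frac{1}{2(2\pi)^{d-1}\beta}\partial_t^{m}\Lambda^{d-1}\mathcal{R} f_\mu$ and then takes $\|\cdot\|_{\TV}$.

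For the representer statement I would invoke \Cref{thm:representer} with parameter space $\Th=\P^d$ and kernel $\rho(x,\xi)=\sigma_m(n(\xi)\cdot x-t(\xi))$, after checking that the decay requirement in \eqref{eq:31} on $\beta$ secures the hypothesis \eqref{eq:2} for the feature map $\widetilde{\phi}_m$. This yields, exactly as in \Cref{cor:relurep}, minimizers of the form $f(x)=\sum_{k=1}^K\al_k\sigma_m(n_k\cdot x-t_k)$ with $K\le N$ and $(n_k,t_k)=s(\xi_k)$, together with the bound \eqref{eq:29}, namely $\|f\|_{\widetilde{\bb}_m}\le\sum_k|\al_k|\beta(n_k,t_k)^{-1}$. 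The upgrade from this inequality to the claimed equality is precisely where the isometry enters: writing $f=f_\mu$ with $\mu=\sum_k\al_k\beta(n_k,t_k)^{-1}\delta_{[(n_k,t_k)]}$ (distinct classes $\xi_k$ may be assumed after merging coincident atoms) and using $\|f_\mu\|_{\widetilde{\bb}_m}=\|\mu\|_{\TV}=\sum_k|\al_k|\beta(n_k,t_k)^{-1}$ from the isometry.

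I expect the main obstacle to lie in the bookkeeping of the very first step, rather than in any deep estimate: making \eqref{eq:greensection} fully rigorous requires a careful, norm-preserving identification between $\mm(\P^d)$ and the subspace of prescribed-parity measures on $\Xi$, and one must verify that the \emph{measurable} (rather than continuous) section $s$ still produces a feature map for which \Cref{thm:representer} is applicable, i.e.\ that the sampling operator $\mu\mapsto(f_\mu(x_i))_i$ remains weak$^*$ continuous so that the coercivity and compactness arguments behind the representer theorem go through. Once this identification and the continuity hypothesis \eqref{eq:2} are in place, injectivity and hence the isometry are immediate, and the representer statement transfers essentially verbatim from the analysis of $\bb_m$, the only genuine novelty being that the polynomial component $\mathcal P_m$ is now absent.
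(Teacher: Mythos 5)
Your proposal is correct and follows essentially the same route as the paper, whose own justification is only the brief remark after the statement: injectivity of $\mu\mapsto f_\mu$ on $\mm(\P^d)$ via \Cref{lem:greensection}, the strict positivity of $\beta$, and \Cref{embedding}, after which the infimum in \eqref{eq:normrkbsrelu} collapses and the representer part is imported from \Cref{thm:representer}. You are in fact more explicit than the paper on two points it leaves implicit --- the upgrade of the bound \eqref{eq:29} to an equality via the isometry, and the question of whether the merely measurable section $s$ still yields a feature map satisfying \eqref{eq:2} so that the sampling operator is weak$^*$ continuous --- and the latter is a genuine subtlety that the paper itself does not address.
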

The last part of the statement follows by showing that the map $ \mu \mapsto f_\mu $ is injective and by \eqref{eq:normrkbsrelu}. The injectivity of the map is a consequence of Lemma~\ref{lem:greensection} together with the fact that $\beta\mu=0$ in $\mathcal{S}_0'(\Xi)$ implies $\mu=0$ in $\mathcal{M}(\P^d)$, see Lemma~\ref{embedding}. In other words, taking the feature map with values in $\mm(\P^d)'$ avoids the redundant parametrization of the RKBS caused by the odd measures.
\begin{rmk}
The introduction of the section $s$ is technically crucial. A natural alternative to make the feature map well defined on $\P^d$ is to symmetrize
the feature map, {\it i.e.} 

\[
\widetilde{\phi}_m(x)(\xi) = \frac{\sigma_m( n \cdot x - t)\beta(n,t)+ \sigma_m(
  -n \cdot x + t)\beta(-n,-t)}{2},\qquad \xi=[(n,t)].
\]

However, this would result in a representation with symmetrized
activation functions. For instance, for $m=2$ we would obtain neural
networks with absolute value activation function instead of the ReLU,
{\it i.e.}
\begin{equation*}
  f(x) = \sum_{k=1}^K \al_k |n_k \cdot x - t_k | ,
\end{equation*}
since 
\[
\sigma_m(n \cdot x - t)+ \sigma_m( -n \cdot x + t) =|n \cdot x - t| .
  \]
This is roughly the strategy followed in \cite{parhi2021banach}, where
the authors obtain representations with symmetrized activation
functions, but with an additional polynomial term (see \cite[Definition~5 with Remarks 6 and 7]{parhi2021banach}).
Note that
\[
\sigma_m(n \cdot x - t)- \sigma_m( -n \cdot x + t) = -n \cdot x + t ,
\]
which is a polinomial of degree 1 in $x$. In this view, the use of the
section $s$ provides a more transparent construction.
\end{rmk}

\begin{rmk}
In \Cref{cor:alternative} we obtain the same representation as in \Cref{cor:relurep},
but with a simplified regularization compared to \Cref{main2}. Moreover,
the norm of a solution $f_\mu$ is equal to (and not only controlled by) the $\ell^1$ norm of the representation coefficients.
\end{rmk}


\subsection{Discussion: a comparison with previous results}
\label{sec:discussion-3}

In \cite{parhi2021banach} the authors build a family of function spaces $\mathcal{F}_m$, and seminorms $\phi_m\colon \mathcal{F}_m\to\R_+$ in terms of the Radon transform, such that the minimization problem    
\begin{equation*}
  \inf_{f\in \mathcal{F}_m}\left(   \frac{1}{N} \sum_{i=1}^N L ( y_i , f(x_i) ) + \phi_m(f) \right)
  \end{equation*}
 always has minimizers of the form
  \begin{equation}\label{eq:solutionparhinowak}
    f(x) = \sum_{k=1}^K \al_k (\sigma_m ( n_k \cdot x - t_k )+(-1)^m\sigma_m ( -n_k \cdot x + t_k ))+p(x),
  \end{equation}
  where $ K \le N $, $(n_k,t_k)\in S^{d-1}\times \R$, $\al_k\in\R\setminus\{0\}$
  and $p$ is a polynomial of order less than $m$. We refer to Theorem 1 in \cite{parhi2021banach} for the precise statement. If we compare equations~\eqref{eq:solutionsReLU} and \eqref{eq:solutionparhinowak}, we can highlight our two main contributions. The first one consists in getting rid of the polynomial term by considering a norm, instead of a seminorm, as regularization term. A second issue that we are able to overcome with our approach is to avoid solutions with symmetrized activation functions as in \eqref{eq:solutionparhinowak}. In particular, we choose the feature map with values either in $\mathcal{M}(S^{d-1}\times\mathbb{R})'$, or in $\mathcal{M}(\mathbb{P}^d)'$ but pre-composing the feature map with a measurable section $s\colon\P^d\to S^{d-1}\times\mathbb{R}$ (see \Cref{rmk:section} for full details). In view of Theorem~\ref{thm:representer}, we first define the hypothesis space as a RKBS. Then, we show an alternative approach to rigorously characterize the regularization term, and consequently the hypothesis space, in terms of the Radon transform, which is the content of Theorem~\ref{main2}. Conversely, in \cite{parhi2021banach} the authors start building {\it ad hoc} a family of seminorms in terms of the Radon transform, and consequently a family of hypothesis spaces. Their construction is motivated by Lemma~\ref{lem:green}. Then, in a second moment, they show the Banach space structure of the hypothesis spaces. A limitation of the approach in \cite{parhi2021banach} is that from their construction it is not evident how to identify new hypothesis spaces for other types of activation functions. In our approach, the identification of the hypothesis space follows straightforwardly by Theorem~\ref{thm:representer}, and it is independent of the relation between the Radon transform and the truncated power activation functions. 
  Finally, it is worth observing that our approach provides an integral representation for all the elements of the hypothesis space. This latter result is achieved by introducing the smoothing regularizer $\beta$, that ensures the convergence of the integral~\eqref{eq:17} without modifying the desired form for the minimizers~\eqref{eq:solutionsReLU}. In previous works, where $\beta$ is not introduced, the authors need to require alternative assumptions, as discussed in Remark~\ref{rmk:beta}.

\section{Radon transform: review and extension}\label{sec:radon}

We start recalling the function spaces that will come into play. Let $d\in\N$, $d\geq 1$. We use the notation $\langle x \rangle=(1+|x|^2)^{\frac{1}{2}}$. We denote by $\mathcal{S}(\mathbb{R}^{d})$ the Schwartz space of rapidly decreasing functions. We recall that a function $\varphi\colon\R^d\to\C$ belongs to $\mathcal{S}(\mathbb{R}^{d})$ if $\varphi\in C^{\infty}(\R^d)$ and 
\begin{equation}\label{eq:43bis}
\rho_{m,\alpha} (\varphi ) = 
\sup_{x \in \R^d} \langle x \rangle^m |\partial^{\alpha} \varphi(x) |<+\infty,
\qquad \forall m, \alpha \in \N^d .
\end{equation}
We endow $\mathcal{S}(\mathbb{R}^{d})$ with the topology induced by the family of seminorms $\{\rho_{m,\alpha}\}_{m,\alpha\in\N^d}$,
which makes $\mathcal{S}(\mathbb{R}^{d})$ a Fr\'echet space. Its dual space $\mathcal{S}'(\mathbb{R}^{d})$ is known as the space of tempered distributions. We use the notation $\mathcal{P}(\R^d)$ for the space of all polynomials on $\R^d$ and we denote by $\mathcal{S}_0 (\mathbb{R}^{d})$ the space of functions in $\mathcal{S}(\mathbb{R}^{d})$ that are orthogonal to all 
polynomials, {\it i.e.} 
\begin{equation}\label{eq:25}
  \mathcal{S}_0 (\mathbb{R}^{d}) = \left\{\varphi\in
\mathcal{S}(\mathbb{R}^{d}): \: \int_{\R^d} \varphi(x) p(x) {\rm d}x= 0,\ \forall p\in\mathcal{P}(\R^d)\right\}.
\end{equation}
The space $\mathcal{S}_0 (\mathbb{R}^{d})$ is called the Lizorkin test function space. It is a closed subspace of $\mathcal{S}(\mathbb{R}^{d})$ and we endow it with the relative topology inherited from $\mathcal{S}(\mathbb{R}^{d})$. Its dual space $\cS_0'(\R^d)$ of Lizorkin distributions is topologically isomorphic to the quotient space $\mathcal{S}'(\mathbb{R}^{d}) / \mathcal{P}(\R^d)$, see {\it e.g.} \cite[Chapter 1, Section 25]{hol95}.
\begin{lem}[{\cite[Lemma 6.0.4]{hol95}}]\label{lem:vanishingmomentsequivalent}
Let $\varphi\in\mathcal{S}(\R)$. Then $\varphi\in\mathcal{S}_0(\R)$ if and only if, for every $k\in\N$,
\begin{equation*}
\lim_{\omega\to0}\frac{\cF\varphi(\omega)}{|\omega|^k}=0.
\end{equation*}
\end{lem}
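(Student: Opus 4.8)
The plan is to reduce the membership $\varphi\in\mathcal{S}_0(\R)$ to a statement about the Taylor coefficients of $\cF\varphi$ at the origin, and then to translate the vanishing of those coefficients into the stated decay condition. First I would observe that, since the polynomials are spanned by the monomials, the orthogonality condition defining $\mathcal{S}_0(\R)$ in~\eqref{eq:25} is equivalent to the vanishing of all moments, namely $\int_\R \varphi(x)\,x^k\,\D x=0$ for every $k\in\N$.

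Next I would connect these moments to the derivatives of the Fourier transform at the origin. Since $\varphi\in\mathcal{S}(\R)$, the function $x\mapsto x^k\varphi(x)$ is integrable for every $k$, so one may differentiate under the integral sign to obtain $(\cF\varphi)^{(k)}(\omega)=\frac{(-i)^k}{\sqrt{2\pi}}\int_\R \varphi(x)\,x^k e^{-i x\omega}\,\D x$, and in particular $(\cF\varphi)^{(k)}(0)=\frac{(-i)^k}{\sqrt{2\pi}}\int_\R \varphi(x)\,x^k\,\D x$. Hence $\varphi\in\mathcal{S}_0(\R)$ if and only if $(\cF\varphi)^{(k)}(0)=0$ for every $k\in\N$.

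The core of the argument is then to show that, for the smooth function $\psi:=\cF\varphi$, the vanishing of all derivatives at the origin is equivalent to $\lim_{\omega\to0}\psi(\omega)/|\omega|^k=0$ for every $k$. For the forward direction I would fix $k$, use that $\psi(0)=\cdots=\psi^{(k)}(0)=0$, and apply Taylor's theorem with Lagrange remainder to write $\psi(\omega)=\frac{\psi^{(k+1)}(\xi)}{(k+1)!}\,\omega^{k+1}$ with $\xi$ between $0$ and $\omega$; since $\psi^{(k+1)}$ is continuous, hence bounded near $0$, dividing by $|\omega|^k$ leaves the bounded quantity $\psi^{(k+1)}(\xi)/(k+1)!$ times $\omega^{k+1}/|\omega|^k$, which tends to $0$. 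For the converse I would argue by induction on $k$: assuming $\psi^{(j)}(0)=0$ for $j<k$, the Peano form of Taylor's theorem gives $\psi(\omega)=\frac{\psi^{(k)}(0)}{k!}\omega^k+o(\omega^k)$, so that $\psi(\omega)/\omega^k\to \psi^{(k)}(0)/k!$; comparing with the hypothesis $\psi(\omega)/|\omega|^k\to0$ (and noting $\omega^k/|\omega|^k=(\sgn\omega)^k$ is bounded) forces $\psi^{(k)}(0)=0$.

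I do not expect a serious obstacle here, as the result is elementary once the moment--derivative dictionary is in place; the only points requiring care are the justification of differentiation under the integral sign, which is immediate from the Schwartz decay of $\varphi$, and the bookkeeping of the signs arising from $|\omega|^k$ versus $\omega^k$ in the Taylor comparison. Combining the three reductions yields the claimed equivalence.
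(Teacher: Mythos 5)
Your proof is correct. The paper does not prove this lemma at all --- it is imported verbatim as \cite[Lemma 6.0.4]{hol95} --- and your argument (moments of $\varphi$ $\leftrightarrow$ derivatives of $\cF\varphi$ at $0$ via differentiation under the integral, then the Taylor/Peano equivalence between flatness at the origin and the decay $\cF\varphi(\omega)/|\omega|^k\to 0$) is precisely the standard proof one finds in that reference, with all the delicate points (justifying differentiation under the integral by Schwartz decay, the $\omega^k$ versus $|\omega|^k$ bookkeeping, and the $k=0$ base case of the induction) handled correctly.
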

As a consequence of~\Cref{lem:vanishingmomentsequivalent}, the Fourier transform maps $\mathcal{S}_0(\R)$ into the space $\hat{\mathcal{S}}_0(\R)$ of rapidly decreasing functions that vanish in zero together with all of their partial derivatives, {\it i.e.}
\begin{equation*}
\hat{\mathcal{S}}_0 (\mathbb{R})
= \left\{\varphi\in \mathcal{S}(\mathbb{R}): \: \partial ^{m} \varphi (0) = 0,\ \forall m \in \mathbb{N} \right\}.
\end{equation*}
Recall that  $\Xi=S^{d-1}\times \R$. In analogy with $\cS(\R^d)$, we denote by $\cS(\Xi)$ the space of functions in $C^\infty(\Xi)$ such that
\[
\rho_{k,l,D}(\psi)=\sup_{n\in S^{d-1}, t\in\R} \langle t\rangle^k\left|\frac{{\rm d}^l}{{\rm d} t^l}D \psi (n,t)\right|<+\infty,
\]
for every $k,l\in\N$ and for every differentiable operator $D$ on
$S^{d-1}$. We endow  $\cS(\Xi)$ with the topology induced by the
family of seminorms $\rho_{k,l,D}$, and we denote by $\cS'(\Xi)$ its
topological dual space.  In analogy with the Lizorkin test function
space, $\cS_0(\Xi)$ denotes the set of functions $\psi\in\cS(\Xi)$
such that  
\begin{equation}\label{eq:vanisnhingmomentscondition}
\int_{\R}\psi(n,t) p(t) {\rm d}t=0,\qquad \forall p\in\mathcal{P}(\R), n\in S^{d-1}.
\end{equation}
Note that the integrals in \eqref{eq:vanisnhingmomentscondition} are finite since the functions $t\mapsto t^k\psi(n,t)$ belong to $L^1(\R)$ for every $k\in\N$ and $n\in S^{d-1}$. Then, by~\Cref{lem:vanishingmomentsequivalent}, condition \eqref{eq:vanisnhingmomentscondition} is equivalent to requiring that
\begin{equation*}
\lim_{\omega\to0}\frac{\cF\psi(n,\omega)}{|\omega|^k}=0,
\qquad \forall k\in\N , n\in S^{d-1} ,
\end{equation*}
where $\mathcal{F}$ denotes the Fourier transform acting on the second
variable. We further refer to \cite{helgason99} for a complete exposition of the
function spaces introduced above.

  \begin{rmk}\label{strange}
    Usually, the Radon transform $\rr f$ of a function $f:\R^d\to\C$
    is defined on the space $\P^d$ of all hyperplanes in $\R^d$. As
    seen in~\Cref{rmk:section}, $\Xi$ is the double covering of $\P^d$
    with respect to the equivalence relation~\eqref{eq:22}. Hence, we
    can identify functions and distributions on $\P^d$ with even
    functions and even distributions on $\Xi$ and we can define the
    distribution Radon transform as a map from $\cS_0'(\R^d)$ onto
    $\cS_0'(\Xi)_{\rm even}\simeq \cS_0'(\P^d)$ and its dual $\rr^*$
    as a map from $\cS_0'(\Xi)_{\rm even}\simeq \cS_0'(\P^d)$ into
    $\cS_0'(\R^d)$.  We adopt this setting since the space
    $\cS_0'(\P^d)$ is replaced by $\cS_0'(\Xi)_{\rm odd}$ to deal with
    odd $m$, see~\Cref{main2}.
  \end{rmk}
We briefly recall the notion of even and odd distributions. For all functions $ \psi:\Xi\to\C$,  we set
\[
\psi^\vee:\Xi\to\C,\qquad \psi^\vee(n,t) = \psi(-n,-t), \qquad (n,t)\in \Xi .
  \]
  It is easy to check that
  \[
    \cS_0(\Xi)\ni \psi \mapsto \psi^\vee \in  \cS_0(\Xi)
    \]
is a well-defined continuous involution and, by duality,  it defines an
involution on $\cS_0'(\Xi)$
\[
    \cS_0'(\Xi)\ni g \mapsto g^\vee \in  \cS_0'(\Xi).
    \]
We set
\begin{align*}
    & \cS_0(\Xi)_{\rm even} =\{ \psi\in \cS_0(\Xi)  : \psi^\vee=\psi\},
  && \cS_0(\Xi)_{\rm odd} =\{ \psi\in \cS_0(\Xi)  : \psi^\vee=-\psi \},\\ 
    & \cS_0'(\Xi)_{\rm even} =\{ g\in \cS_0'(\Xi)  : g^\vee=g\},  &&\cS_0'(\Xi)_{\rm odd} =\{ g\in \cS_0'(\Xi) : g^\vee=- g \} ,
  \end{align*}
  which are closed subsets of  $\cS_0(\Xi)$ and $\cS_0'(\Xi)$,
  respectively. Moreover,
    \begin{align*}
  & \cS_0(\Xi)  =\cS_0(\Xi)_{\rm even}+ \cS_0(\Xi)_{\rm odd},
      &&\cS_0(\Xi)_{\rm even}\cap \cS_0(\Xi)_{\rm odd} =\{0\}, \\
 & \cS_0'(\Xi)  =\cS_0'(\Xi)_{\rm even}+ \cS_0'(\Xi)_{\rm odd},
      &&\cS_0'(\Xi)_{\rm even}\cap \cS_0'(\Xi)_{\rm odd} =\{0\} ,
    \end{align*}
    where the maps
    \begin{align*}
      & \cS_0(\Xi)_{\rm even}\times  \cS_0(\Xi)_{\rm odd}\ni
        (\psi_{\rm even} ,\psi_{\rm odd})\mapsto \psi_{\rm even}
        +\psi_{\rm odd} \in \cS_0(\Xi),\\
      & \cS_0'(\Xi)_{\rm even}\times  \cS_0'(\Xi)_{\rm odd}\ni
        (g_{\rm even} ,g_{\rm odd})\mapsto g_{\rm even}
        +g_{\rm odd} \in \cS_0'(\Xi)
    \end{align*}
    are topological isomorphisms. A simple calculation shows that
    \begin{align*}
      &  \cS_0'(\Xi)_{\rm even} \simeq \left(\cS_0(\Xi)_{\rm even}\right)', \\
      &  \cS_0(\Xi)'_{\rm odd} \simeq \left(\cS_0(\Xi)_{\rm odd}\right)', 
    \end{align*}
which implies that $\cS_0'(\Xi)_{\rm even}\simeq \cS_0'(\P^d)$ under
the identification $\cS_0(\Xi)_{\rm even}=\cS_0(\P^d)$, as claimed
in~\Cref{strange}.  

With this setting, we can recall the definition of the Radon transform and its dual.
\begin{dfn} \label{defn:classicalradon}
The Radon transform of $\varphi\in L^1(\R^d)$ is the function $\mathcal{R}\varphi\colon \Xi\to \C$ defined by 
\begin{equation*} 
\mathcal{R}\varphi(n,t)=\int_{n\cdot x=t}\varphi(x)\D m(x),
\qquad \text{ for a.e. } (n,t) \in \Xi ,
\end{equation*}
where $m$ is the Euclidean measure on the hyperplane with equation $n\cdot x=t$.
\end{dfn}
Since the pairs $(n,t)$ and $(-n,-t)$ define the same
hyperplane, clearly the Radon transform is an even function, {\it
  i.e.} 
\begin{equation}\label{eq:symmetrycondition}
(\mathcal{R}\varphi)^\vee = \mathcal{R}\varphi.
\end{equation}

\begin{thm}[{\cite[Corollary 4.2]{helgason65}}]\label{thm:continuitybijectionradon}
The Radon transform is a continuous injective operator from
$\mathcal{S}_0(\R^d)$ onto $\mathcal{S}_0(\Xi)_{\rm even}$. 
\end{thm}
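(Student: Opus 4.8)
The plan is to reduce everything to the Fourier side via the \emph{Fourier slice theorem}. With the normalizations fixed in the introduction, a direct application of Fubini's theorem to \Cref{defn:classicalradon} gives, for $\varphi\in\mathcal{S}_0(\R^d)$,
\begin{equation*}
\mathcal{F}_t(\mathcal{R}\varphi)(n,\omega)=(2\pi)^{(d-1)/2}\,\mathcal{F}\varphi(\omega n),\qquad (n,\omega)\in\Xi,
\end{equation*}
where $\mathcal{F}_t$ denotes the one-dimensional Fourier transform in the variable $t$. Indeed, integrating $\mathcal{R}\varphi(n,t)$ against $e^{-it\omega}$ and unfolding the decomposition of $\R^d$ into the hyperplanes $\{n\cdot x=t\}$ turns the weight $e^{-i(n\cdot x)\omega}$ into $e^{-ix\cdot(\omega n)}$, which is exactly the stated identity. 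This relation is the engine for all the assertions.

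First I would dispatch the easy parts. Evenness of $\mathcal{R}\varphi$ is purely geometric and already recorded in \eqref{eq:symmetrycondition}, since $(n,t)$ and $(-n,-t)$ index the same hyperplane. For injectivity, if $\mathcal{R}\varphi=0$ then the slice theorem forces $\mathcal{F}\varphi(\omega n)=0$ for all $(n,\omega)$; as every point of $\R^d$ is of the form $\omega n$, we get $\mathcal{F}\varphi\equiv0$, hence $\varphi=0$. To see that $\mathcal{R}\varphi$ actually lands in $\mathcal{S}_0(\Xi)_{\rm even}$, note that $\mathcal{F}\varphi$ is a Schwartz function vanishing to infinite order at the origin (the $d$-dimensional analogue of \Cref{lem:vanishingmomentsequivalent}); restricting it to the ray $\omega\mapsto\omega n$ preserves both rapid decay and infinite-order flatness at $\omega=0$, so by \Cref{lem:vanishingmomentsequivalent} each slice $\mathcal{R}\varphi(n,\cdot)$ lies in $\mathcal{S}_0(\R)$. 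Smoothness and the Schwartz estimates in the $n$-variable require more bookkeeping: angular differential operators $D$ on $S^{d-1}$ correspond under the slice theorem to tangential derivatives of $\mathcal{F}\varphi$ along spheres, and the seminorms $\rho_{k,l,D}(\mathcal{R}\varphi)$ are thereby bounded by finitely many Schwartz seminorms of $\mathcal{F}\varphi$, hence of $\varphi$. This simultaneously yields continuity of $\mathcal{R}$.

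Surjectivity is the crux, and I would prove it by inverting the slice theorem. Given $\psi\in\mathcal{S}_0(\Xi)_{\rm even}$, define a function on $\R^d$ by
\begin{equation*}
g(\omega n)=(2\pi)^{-(d-1)/2}\,\mathcal{F}_t\psi(n,\omega),\qquad n\in S^{d-1},\ \omega\in\R .
\end{equation*}
Here evenness of $\psi$ is precisely what makes $g$ well defined: the two representatives $(n,\omega)$ and $(-n,-\omega)$ of a point $\omega n$ give the same value because $\psi^\vee=\psi$ implies $\mathcal{F}_t\psi(-n,-\omega)=\mathcal{F}_t\psi(n,\omega)$, and at the origin $g(0)=0$ since $\int_\R\psi(n,t)\,{\rm d}t=0$. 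Away from the origin $g$ is smooth, as $(n,\omega)\mapsto\omega n$ is a diffeomorphism there and $\mathcal{F}_t\psi$ is smooth. The hard part is regularity \emph{at} the origin: I must show $g$ extends to a Schwartz function on $\R^d$ all of whose derivatives vanish at $0$, so that $\varphi=\mathcal{F}^{-1}g\in\mathcal{S}_0(\R^d)$ and $\mathcal{R}\varphi=\psi$ by the slice theorem. This is exactly where the Lizorkin condition enters: by \Cref{lem:vanishingmomentsequivalent} each $\mathcal{F}_t\psi(n,\cdot)$ is flat to infinite order at $\omega=0$, uniformly in $n$, and this infinite-order flatness is what converts the polar data $(n,\omega)\mapsto\mathcal{F}_t\psi(n,\omega)$ into a genuinely smooth Cartesian function, via an iterated Hadamard/Taylor argument. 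Rapid decay of $g$ at infinity follows from the rapid decay of $\psi$ and its $t$-derivatives, and the same seminorm estimates run backward to show that the inverse map is continuous.

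The main obstacle is therefore the origin regularity in the surjectivity step: passing from the angular/radial data on $\Xi$ to a smooth function on $\R^d$ is the point at which one cannot work with $\mathcal{S}(\Xi)$ alone, and the vanishing-moment (Lizorkin) constraint---encoded analytically by \Cref{lem:vanishingmomentsequivalent}---is indispensable. Everything else is a continuity bookkeeping exercise built on the slice theorem.
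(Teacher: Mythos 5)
The paper does not prove this statement at all: it is imported verbatim from the literature (Helgason's Corollary 4.2), so there is no internal proof to compare against. Your argument via the Fourier slice theorem $\mathcal{F}_t(\mathcal{R}\varphi)(n,\omega)=(2\pi)^{(d-1)/2}\mathcal{F}\varphi(\omega n)$ is the standard route taken in the cited source and is correct in outline, with the two genuinely delicate points --- smoothness at the origin when converting polar data back to a Cartesian function in the surjectivity step, and the seminorm estimates giving continuity --- correctly identified, though only sketched rather than carried out.
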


We now introduce the dual Radon transform, also known as back-projection. While the Radon transform is defined for any pair $(n,t)$ as the integral over the set of points belonging to the hyperplane with equation $n\cdot x=t$, the dual Radon transform is defined for any given point $x\in\R^d$ as the integral over the set of hyperplanes passing through $x$, which corresponds to the set of pairs $\{(n,n\cdot x):n\in S^{d-1}\}\subseteq \Xi$. 
\begin{dfn}\label{defn:classicaldualradon}
The dual Radon transform (or back-projection) of $ \psi \in
L^\infty(\Xi) $ is the $L^\infty$ function $\rr^* \psi: \R^d\to\C$
defined by  
\begin{equation*}
\rr^* \psi(x)=\int_{S^{d-1}} \psi(n,n\cdot x) \D n,\qquad x\in\R^d,
\end{equation*}
where $\D n$ is the spherical measure on $S^{d-1}$. 
\end{dfn}
Note that, if $\psi$ is an odd function, clearly  $\rr^*\psi=0$ since
$\D n$ is invariant under reflection. 
\begin{thm}[{\cite[Corollary 4.2]{helgason65}}]\label{thm:continuitybijectiondualradon}
The dual Radon transform is a continuous injective operator from
$\mathcal{S}_0(\Xi)_{\rm even}$ onto $\mathcal{S}_0(\R^d)$.
\end{thm}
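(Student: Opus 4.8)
The plan is to reduce the statement to the already-established bijectivity of the Radon transform $\rr$ (\Cref{thm:continuitybijectionradon}), using the adjoint relation together with an explicit Fourier-multiplier computation of the composition $\rr^*\rr$. First I would record the projection--slice identity
\[
\cF_t[\rr\varphi](n,\omega) = (2\pi)^{(d-1)/2}\,\cF\varphi(\omega n), \qquad \varphi\in\cS_0(\R^d),
\]
where $\cF_t$ denotes the one-dimensional Fourier transform in the variable $t$; this follows from \Cref{defn:classicalradon} by writing $t=n\cdot x$ and applying Fubini. Next I would establish the formal adjoint relation
\[
\int_{\R^d}(\rr^*\psi)(x)\,\varphi(x)\,\D x = \int_{S^{d-1}}\int_\R \psi(n,t)\,\rr\varphi(n,t)\,\D t\,\D n ,
\]
valid for $\varphi\in\cS_0(\R^d)$ and $\psi\in\cS_0(\Xi)_{\rm even}$, again by Fubini after slicing $\R^d$ into the hyperplanes $\{n\cdot x=t\}$ as in \Cref{defn:classicaldualradon}.

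The heart of the argument is then to compute $\rr^*\rr$ as a Fourier multiplier. Combining the slice identity with the adjoint relation and passing to polar coordinates $\xi = r\,\theta$ in the frequency domain, I expect to obtain, with a nonzero dimensional constant $c_d := 2(2\pi)^{d-1}$,
\[
\cF[\rr^*\rr\varphi](\xi) = c_d\,|\xi|^{-(d-1)}\,\cF\varphi(\xi), \qquad \varphi\in\cS_0(\R^d) ,
\]
so that $\rr^*\rr$ is $c_d$ times the Riesz potential of order $d-1$; this is exactly the multiplier inverse to the operator $\Lambda^{d-1}$ underlying the inversion formula, and is consistent with the identity $\varphi = \frac{1}{2(2\pi)^{d-1}}\rr^*\Lambda^{d-1}\rr\varphi$. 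The key auxiliary fact, which I would isolate as a lemma, is that the operator $\mathsf I$ defined by $\cF[\mathsf I\varphi](\xi)=|\xi|^{-(d-1)}\cF\varphi(\xi)$ is a continuous bijection of $\cS_0(\R^d)$ onto itself: by \Cref{lem:vanishingmomentsequivalent} the Fourier transform of a Lizorkin function vanishes to infinite order at the origin, and this infinite-order flatness dominates the homogeneous singularities of $|\xi|^{\pm(d-1)}$, so both $\mathsf I$ and its inverse (multiplier $|\xi|^{d-1}$) produce again smooth, rapidly decreasing functions flat at $0$.

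With these pieces in place the theorem follows cleanly. Since $\rr\colon\cS_0(\R^d)\to\cS_0(\Xi)_{\rm even}$ is a continuous linear bijection between Fr\'echet spaces, the open mapping theorem gives that $\rr^{-1}$ is continuous. The multiplier identity reads $\rr^*\rr = c_d\,\mathsf I$ on $\cS_0(\R^d)$, so for every $\psi\in\cS_0(\Xi)_{\rm even}$, writing $\varphi=\rr^{-1}\psi$, we get the factorization
\[
\rr^*\psi = \rr^*\rr\,\rr^{-1}\psi = c_d\,\mathsf I\,\rr^{-1}\psi \in \cS_0(\R^d) .
\]
As a composition of continuous bijections, $\rr^*$ is therefore continuous, injective, and surjective from $\cS_0(\Xi)_{\rm even}$ onto $\cS_0(\R^d)$, which is the claim. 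The hard part will be the multiplier computation for $\rr^*\rr$: the back-projection spreads mass over the whole family of hyperplanes through each point, so the naive Fourier transform of $\rr^*\psi$ is distributional (a sphere measure in the frequency variable), and one must use polar coordinates together with the even symmetry of $\psi$ to collapse it to the clean radial symbol $|\xi|^{-(d-1)}$ with the correct constant $c_d$. The secondary delicate point is the smoothness-at-the-origin statement for $\mathsf I$, which relies essentially on the infinite-order vanishing supplied by \Cref{lem:vanishingmomentsequivalent}.
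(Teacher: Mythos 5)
The paper does not actually prove this statement: it is imported verbatim from Helgason (\cite[Corollary 4.2]{helgason65}), with an alternative proof only referenced in \cite{kpsv2014}, so there is no in-paper argument to compare against. Your proposal is a legitimate self-contained derivation, and its skeleton is sound: the projection--slice identity and the duality relation (the latter is already \eqref{eq:dualityrelation} in the paper) combine, via Plancherel in $t$ and polar coordinates $\xi=\omega n$ (the factor $\tfrac12$ from the double cover $(n,\omega)\mapsto\omega n$ is what produces the $2$ in $c_d=2(2\pi)^{d-1}$), to give $\cF[\rr^*\rr\varphi]=c_d\,|\xi|^{-(d-1)}\cF\varphi$ on $\cS_0(\R^d)$; injectivity, surjectivity and continuity of $\rr^*$ then follow from the factorization $\rr^*=c_d\,\mathsf{I}\circ\rr^{-1}$, the open mapping theorem (both $\cS_0(\R^d)$ and $\cS_0(\Xi)_{\rm even}$ are closed subspaces of Fr\'echet spaces, hence Fr\'echet, so \Cref{thm:continuitybijectionradon} yields continuity of $\rr^{-1}$), and the bijectivity of the Riesz potential $\mathsf{I}$ on $\cS_0(\R^d)$. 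The two points you flag as delicate are exactly the ones that must be written out in full: (a) the multiplier identity is most safely established weakly against test functions and then upgraded to a pointwise identity of continuous functions; (b) the lemma on $\mathsf{I}$ requires checking that $|\xi|^{\pm(d-1)}\cF\varphi$ is smooth and flat at the origin (the infinite-order vanishing from \Cref{lem:vanishingmomentsequivalent} absorbs the homogeneous singularity, including the non-smoothness of $|\xi|^{d-1}$ at $0$ when $d$ is even), remains rapidly decreasing, and that the associated seminorm estimates give continuity of both $\mathsf{I}$ and its inverse --- this is the same style of argument the paper itself uses for the Hilbert transform in \Cref{lem:LaS_0}. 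Modulo writing out these two lemmas, the argument is correct and arguably more informative than a bare citation, since it exhibits $\rr^*\rr$ as the inverse of the multiplier underlying $\La^{d-1}$.
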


We refer to \cite[Corollary 6.1]{kpsv2014} for an alternative proof of the continuity of the operators $\mathcal{R}\colon \mathcal{S}_0(\R^d)\to \mathcal{S}_0(\Xi)_{\rm even}$ and $\mathcal{R}^*\colon \mathcal{S}_0(\Xi)_{\rm even}\to \mathcal{S}_0(\R^d)$ based on the relation existing between Radon, ridgelet and wavelet transforms.

\begin{prop}[{\cite[Chapter II]{natterer}}]
For every $\varphi\in L^1(\R^d)$ and $\psi\in L^\infty(\Xi)$,
\begin{equation}\label{eq:dualityrelation}
\int_{\R^d} \varphi(x)\rr^* \psi(x)\D x=\int_{\Xi}\rr \varphi(n,t)
\psi(n,t)\D n\D t.
\end{equation}
\end{prop}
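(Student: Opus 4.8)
The plan is to obtain \eqref{eq:dualityrelation} as a consequence of Fubini's theorem, combined with the slicing of the Lebesgue measure on $\R^d$ adapted to each direction $n\in S^{d-1}$. I would start from the right-hand side and unfold the definition of $\rr\varphi$, writing
\begin{equation*}
\int_{\Xi}\rr\varphi(n,t)\psi(n,t)\D n\D t
= \int_{S^{d-1}}\left(\int_{\R}\left(\int_{n\cdot x=t}\varphi(x)\D m(x)\right)\psi(n,t)\D t\right)\D n .
\end{equation*}
The crucial observation is that, on the hyperplane $\{n\cdot x=t\}$ over which the innermost integral is taken, one has $n\cdot x=t$, so that $\psi(n,t)=\psi(n,n\cdot x)$ and the factor $\psi(n,t)$ may be pulled inside the surface integral.

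For each fixed $n\in S^{d-1}$, I would then use the coarea (slicing) formula for the linear functional $x\mapsto n\cdot x$, whose gradient has unit norm since $|n|=1$. Concretely, choosing orthonormal coordinates with first axis $n$ decomposes the Lebesgue measure as $\D x=\D t\,\D m$, where $\D m$ is the Euclidean measure on the level set $\{n\cdot x=t\}$, so that
\begin{equation*}
\int_{\R}\left(\int_{n\cdot x=t}\varphi(x)\psi(n,n\cdot x)\D m(x)\right)\D t
= \int_{\R^d}\varphi(x)\psi(n,n\cdot x)\D x .
\end{equation*}
Substituting this identity, the expression reduces to $\int_{S^{d-1}}\int_{\R^d}\varphi(x)\psi(n,n\cdot x)\D x\,\D n$.

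Finally, I would apply Fubini once more to interchange the integrals over $S^{d-1}$ and $\R^d$, and recognize the inner spherical integral as the dual Radon transform of \Cref{defn:classicaldualradon}:
\begin{equation*}
\int_{\R^d}\varphi(x)\left(\int_{S^{d-1}}\psi(n,n\cdot x)\D n\right)\D x
= \int_{\R^d}\varphi(x)\,\rr^*\psi(x)\D x ,
\end{equation*}
which is precisely the left-hand side. The only point requiring care --- and the main, though mild, obstacle --- is the justification of the two applications of Fubini's theorem. This reduces to the absolute integrability of $(n,x)\mapsto\varphi(x)\psi(n,n\cdot x)$ on $S^{d-1}\times\R^d$, which follows from the bound
\begin{equation*}
\int_{S^{d-1}}\int_{\R^d}|\varphi(x)|\,|\psi(n,n\cdot x)|\D x\D n
\le |S^{d-1}|\,\nor{\psi}{\infty}\,\nor{\varphi}{L^1(\R^d)}<\infty ,
\end{equation*}
using $\varphi\in L^1(\R^d)$ and $\psi\in L^\infty(\Xi)$. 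The same estimate legitimizes the rewriting $\psi(n,t)=\psi(n,n\cdot x)$ inside the slicing step, where the surface integrals defining $\rr\varphi(n,t)$ are finite for a.e.\ $t$.
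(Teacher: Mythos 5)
Your proof is correct: the paper does not reprove this proposition but cites \cite[Chapter II]{natterer}, and your argument --- unfolding $\rr\varphi$, replacing $\psi(n,t)$ by $\psi(n,n\cdot x)$ on the level set $\{n\cdot x=t\}$, collapsing the slicing $\D x=\D t\,\D m$ for each fixed unit vector $n$, and justifying both interchanges via the bound $|S^{d-1}|\,\nor{\psi}{\infty}\,\nor{\varphi}{L^1(\R^d)}$ --- is precisely the standard derivation given there. No gaps.
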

The duality relation \eqref{eq:dualityrelation} can be exploited to extend $\rr$ and $\rr^*$ on distribution spaces \cite{helgason99, hertle84, kpsv2014}.
\begin{dfn}\label{defn:radonanddualdistributions}
The Radon transform of $ f \in \cS_0'(\R^d) $ is the continuous linear functional $ \rr f $ on $\mathcal{S}_0(\Xi)_{\rm even}$ defined by
$$
\langle\rr f,\psi\rangle=\langle f,\rr^{*} \psi\rangle ,
\qquad  \psi\in\mathcal{S}_0(\Xi)_{\rm even}.
$$
Analogously, the dual Radon transform of $ g \in \mathcal{S}_0'(\Xi)_{\rm even}$ is the continuous linear functional on $ \cS_0(\R^d) $ defined by 
$$
\langle\rr^{*}g,\varphi\rangle=\langle g,\rr \varphi\rangle ,
\qquad \varphi\in\cS_0(\R^d) .
$$
\end{dfn}
Note that $ \rr : \cS_0'(\R^d) \to \mathcal{S}_0'(\Xi)_{\rm even} $ and $ \rr^* : \mathcal{S}_0'(\Xi)_{\rm even} \to \cS_0'(\R^d) $ are well defined and weakly continuous
thanks to~\Cref{thm:continuitybijectiondualradon} and \Cref{thm:continuitybijectionradon}, respectively.

We next recall the most commonly used inversion formula for the Radon transform, known as Filtered Back Projection.
To state the formula, we first need to introduce the positive symmetric operator
$ \La^{d-1} : \cS(\Xi) \to C^\infty(\Xi) $ defined by
  \begin{equation}
\Lambda^{d-1} \psi(n,t)=
\begin{cases}
(-1)^{\frac{d-1}{2}}\partial_t^{d-1}\psi(n,t) & \text{$d$ odd}\\
(-1)^{\frac{d-2}{2}}\mathscr{H}\partial_t^{d-1}\psi(n,t) & \text{$d$ even}
\end{cases} ,\label{eq:27}
\end{equation}
where the Hilbert transform $\mathscr{H}$ acts only on the second variable. The operator $\La^{d-1}$ is also known as ramp filter. 

\begin{thm}[ {\cite[Chapter I, Theorems 3.6 and 3.5]{helgason99}} ]
\label{teo:bagkprojectionformulaclassical}
For every $\varphi\in \cS(\R^d)$,
\begin{equation}\label{bagkprojectionformulaclassical}
\varphi=\frac{1}{2(2\pi)^{d-1}}\rr^*\Lambda^{d-1}\rr \varphi .
\end{equation}
For every $g\in\mathcal{S}_0(\Xi)_{\rm even}$,
\begin{equation}\label{bagkprojectionformulaclassical2}
g=\frac{1}{2(2\pi)^{d-1}}\Lambda^{d-1}\rr \rr^*g .
\end{equation}
\end{thm}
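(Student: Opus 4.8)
The plan is to reduce both inversion formulae to a one-dimensional Fourier computation by means of the Fourier slice (projection–slice) theorem, once $\La^{d-1}$ has been recognized as the Fourier multiplier $|\omega|^{d-1}$ acting on the variable $t$. First I would establish the slice identity: for $\varphi\in\cS(\R^d)$, writing $\cF_t$ for the one-dimensional Fourier transform in the second variable,
\[
\cF_t(\rr\varphi)(n,\omega)=(2\pi)^{(d-1)/2}\,\cF\varphi(\omega n),\qquad (n,\omega)\in\Xi .
\]
This follows by Fubini: integrating $\rr\varphi(n,t)=\int_{n\cdot x=t}\varphi\,\D m$ against $e^{-i\omega t}$ in $t$ recombines the hyperplane integrals into the full integral $\int_{\R^d}\varphi(x)e^{-i\omega(n\cdot x)}\D x=(2\pi)^{d/2}\cF\varphi(\omega n)$, and the $(2\pi)$ conventions give the stated factor.

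Next I would identify $\La^{d-1}$ as the operator multiplying $\cF_t\psi(n,\cdot)$ by $|\omega|^{d-1}$. For $d$ odd this is immediate, since $(-1)^{(d-1)/2}\partial_t^{d-1}$ has symbol $(-1)^{(d-1)/2}(i\omega)^{d-1}=\omega^{d-1}=|\omega|^{d-1}$ (as $d-1$ is even). For $d$ even, using that $\mathscr H$ has symbol $-i\,\sgn(\omega)$ and that $d-1$ is odd, the symbol of $(-1)^{(d-2)/2}\mathscr H\partial_t^{d-1}$ is
\[
(-1)^{(d-2)/2}(-i\,\sgn\omega)(i\omega)^{d-1}=\sgn(\omega)\,\omega^{d-1}=|\omega|^{d-1} .
\]

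With these two facts the first formula follows by direct computation. Applying the multiplier and the slice identity,
\[
\La^{d-1}\rr\varphi(n,t)=(2\pi)^{(d-2)/2}\int_\R |\omega|^{d-1}\cF\varphi(\omega n)\,e^{i\omega t}\,\D\omega ,
\]
and then integrating over $n\in S^{d-1}$ with $t=n\cdot x$ (that is, applying $\rr^*$) gives
\[
\rr^*\La^{d-1}\rr\varphi(x)=(2\pi)^{(d-2)/2}\int_{S^{d-1}}\!\!\int_\R |\omega|^{d-1}\cF\varphi(\omega n)\,e^{i\omega(n\cdot x)}\,\D\omega\,\D n .
\]
On the other hand, Fourier inversion of $\varphi$ together with the polar decomposition $\int_{\R^d}F(\xi)\,\D\xi=\tfrac12\int_{S^{d-1}}\int_\R F(\omega n)|\omega|^{d-1}\,\D\omega\,\D n$ (valid since $(n,\omega)\mapsto\omega n$ double-covers $\R^d\setminus\{0\}$ with Jacobian $|\omega|^{d-1}$) produces exactly the same integral times $\tfrac{1}{2(2\pi)^{d/2}}$. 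Comparing the two constants yields $\rr^*\La^{d-1}\rr\varphi=2(2\pi)^{d-1}\varphi$, which is \eqref{bagkprojectionformulaclassical}.

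Finally, the projection formula \eqref{bagkprojectionformulaclassical2} follows by injectivity. Given $g\in\cS_0(\Xi)_{\rm even}$, apply the identity just proved to $\varphi=\rr^*g\in\cS_0(\R^d)$ (\Cref{thm:continuitybijectiondualradon}) to obtain $\rr^*\bigl[\tfrac{1}{2(2\pi)^{d-1}}\La^{d-1}\rr\rr^*g-g\bigr]=0$; since the bracket lies in $\cS_0(\Xi)_{\rm even}$ ($\rr\rr^*g$ is even and $\La^{d-1}$ preserves parity) and $\rr^*$ is injective there, the bracket vanishes, giving the claim. The main obstacle is the careful bookkeeping of the symbol of $\La^{d-1}$ in the even-dimensional case, where the Hilbert-transform sign conventions enter, together with the tracking of the $(2\pi)$ constants through the slice theorem and the polar decomposition; the integral manipulations themselves are routine for Schwartz data, convergence being guaranteed by the rapid decay of $\cF\varphi$.
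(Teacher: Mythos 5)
Your proposal is correct: the constants work out (the slice identity contributes $(2\pi)^{(d-1)/2}$, the inverse Fourier transform in $t$ a further $(2\pi)^{-1/2}$, and the polar decomposition the factor $\tfrac12(2\pi)^{-d/2}$, giving exactly $2(2\pi)^{d-1}$), the symbol computation for $\La^{d-1}$ in both parities is right, and the deduction of \eqref{bagkprojectionformulaclassical2} from \eqref{bagkprojectionformulaclassical} via the injectivity of $\rr^*$ on $\cS_0(\Xi)_{\rm even}$ (\Cref{thm:continuitybijectiondualradon}) is sound. The paper itself gives no proof of this theorem, importing it from Helgason, and your argument is essentially the standard projection--slice derivation used in that reference.
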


In \cite[Proposition~4.3]{hertle84}, the inversion formula \eqref{bagkprojectionformulaclassical} has been extended to the space $\mathcal{D}'_{L^1}(\R^d)$ of Schwartz integrable distributions \cite{schwartz66}, which embeds densely in $\mathcal{S}_0'(\R^d)$.
We will now provide extensions of \eqref{bagkprojectionformulaclassical} and \eqref{bagkprojectionformulaclassical2} to $\mathcal{S}_0'(\R^d)$ and $\mathcal{S}_0'(\Xi)_{\rm even}$, respectively.  

It is worth observing that the Hilbert transform appears in the expression of the operator $\Lambda^{d-1}$ only when the dimension $d$ is even. This difference is crucial in the Radon transform theory. For odd dimension $d$, 
$\Lambda^{d-1}$ is a differential operator and it is therefore clear
that it maps $\cS(\Xi)$ continuously into itself. This no longer holds
if $d$ is even, because the Hilbert transform maps $\mathcal{S}(\R)$
into $C^\infty(\R)$, but not into $\mathcal{S}(\R)$ \cite{ludwig1966radon}. A more satisfactory situation is obtained if we restrict our attention to the smaller space of functions  $\mathcal{S}_0(\Xi)$.

\begin{lem} \label{lem:LaS_0}
 The Hilbert transform maps $\cS_0(\R)$ continuously into itself,
 and therefore $ \La^{d-1} $ maps $\mathcal{S}_0(\Xi)_{\rm even}$ continuously into itself for every $d\ge1$.
\end{lem}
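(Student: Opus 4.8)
The plan is to prove the two assertions separately, deriving the second from the first. For the statement about the Hilbert transform I would pass to the Fourier side, where $\mathscr{H}$ acts (up to an irrelevant normalization constant) as multiplication by the bounded multiplier $-i\sgn(\omega)$, so that $\mathscr{H}=\mathcal{F}^{-1}\circ M_{-i\sgn}\circ\mathcal{F}$, with $M_{-i\sgn}$ the corresponding multiplication operator. By \Cref{lem:vanishingmomentsequivalent} and the discussion following it, $\mathcal{F}$ is a topological isomorphism from $\cS_0(\R)$ onto the space $\hat{\mathcal{S}}_0(\R)$ of Schwartz functions all of whose derivatives vanish at the origin, with $\mathcal{F}^{-1}$ continuous as well. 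Hence it suffices to show that $M_{\sgn}$ maps $\hat{\mathcal{S}}_0(\R)$ continuously into itself.

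The heart of the matter is the following smoothing observation, which I expect to be the conceptual crux: although $\sgn$ is discontinuous at the origin, its product with a function $\hat\varphi\in\hat{\mathcal{S}}_0(\R)$ is smooth, precisely because $\hat\varphi$ vanishes to infinite order at $0$. Indeed, for $\omega\neq0$ the function $g:=\sgn\cdot\hat\varphi$ is smooth with $g^{(k)}(\omega)=\sgn(\omega)\,\hat\varphi^{(k)}(\omega)$, and since $\hat\varphi^{(k)}(\omega)\to0$ as $\omega\to0$ from either side, all one-sided derivatives of $g$ at the origin exist and vanish; by the standard criterion for $C^\infty$ extension this makes $g\in C^\infty(\R)$ with $g^{(k)}(0)=0$ for every $k$. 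Moreover $|g^{(k)}(\omega)|=|\hat\varphi^{(k)}(\omega)|$ for $\omega\neq0$, so every Schwartz seminorm of $g$ equals that of $\hat\varphi$; thus $g\in\hat{\mathcal{S}}_0(\R)$ and $M_{\sgn}$ is continuous. Composing with $\mathcal{F}$ and $\mathcal{F}^{-1}$ yields the continuity of $\mathscr{H}$ on $\cS_0(\R)$.

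For the second assertion I would split according to the parity of $d$, recording first that both $\partial_t$ and $\mathscr{H}$ (acting in the $t$ variable) anti-commute with the involution $\psi\mapsto\psi^\vee$, and that both preserve the Lizorkin vanishing-moment condition in $t$ — for $\partial_t$ by integration by parts, and for $\mathscr{H}$ because the first assertion already places $\mathscr{H}\psi(n,\cdot)$ in $\cS_0(\R)$ fiberwise. When $d$ is odd, $\La^{d-1}=\pm\partial_t^{d-1}$ with $d-1$ even, so it maps even functions to even functions and is plainly continuous on $\cS_0(\Xi)$. When $d$ is even, $\La^{d-1}=\pm\mathscr{H}\partial_t^{d-1}$ with $d-1$ odd, so $\partial_t^{d-1}$ turns an even function into an odd one and $\mathscr{H}$ then returns it to even, as required.

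The remaining — and technically most delicate — step is to upgrade the first assertion to a bound that is uniform in the sphere variable. Since $\mathscr{H}$ acts only in $t$, it commutes with $\partial_t$ and with every spherical differential operator $D$, so that $\partial_t^l D\,\mathscr{H}\psi=\mathscr{H}\,\partial_t^l D\psi$, and it is enough to estimate $\sup_{n,t}\langle t\rangle^k|\mathscr{H}g(n,t)|$ for $g=\partial_t^l D\psi$. Applying the continuity established in the first assertion to the single-variable function $g(n,\cdot)\in\cS_0(\R)$ gives $\sup_t\langle t\rangle^k|\mathscr{H}g(n,t)|\le C\sum_{\mathrm{finite}}\sup_t\langle t\rangle^{k'}|\partial_t^{l'}g(n,t)|$ with a constant $C$ depending only on the seminorm indices and not on $n$ (continuity of a linear map between Fréchet spaces yields exactly such a universal constant); taking the supremum over $n$ bounds the left-hand side by finitely many seminorms $\rho_{k',l'+l,D}(\psi)$. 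Smoothness in $n$ is retained because the multiplier $-i\sgn(\omega)$ is independent of $n$ and $\mathcal{F}_t\psi(n,\omega)$ vanishes to infinite order at $\omega=0$ for each $n$. This establishes that $\La^{d-1}$ maps $\cS_0(\Xi)_{\rm even}$ continuously into itself.
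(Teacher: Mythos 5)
Your proposal is correct and follows essentially the same route as the paper: pass to the Fourier side, use that the multiplier $-i\sgn(\omega)$ composed with a function vanishing to infinite order at the origin stays in $\hat{\mathcal{S}}_0(\R)$ with unchanged seminorms, and then transfer to $\La^{d-1}$ on $\cS_0(\Xi)_{\rm even}$. You are in fact somewhat more explicit than the paper on the parity bookkeeping and on the uniformity of the seminorm bounds in the sphere variable, points which the paper dispatches with a citation to Helgason and an ``it is clear from the definition''.
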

\begin{proof}
We start showing that $\mathscr{H}$ maps $\mathcal{S}_0(\R)$ into $\mathcal{S}(\R)$.
Let  $\varphi\in\mathcal{S}_0(\R)$.
We already know that $\mathscr{H}\varphi\in C^{\infty}(\R)$.
Thus, it remains to show that $\mathscr{H}\varphi$ is a rapidly decreasing function,
or equivalently that $\cF[\mathscr{H}\varphi]\in \cS(\R)$.
We recall that $\mathscr{H}$ maps $\cS(\R)$ into $L^2(\R)$, and for every $\varphi\in\mathcal{S}(\R)$ it satisfies 
\begin{equation*}
\mathcal{F}[\mathscr{H}\varphi](\omega) = -i\, \text{sgn}(\omega)\mathcal{F}\varphi(\omega),\quad \text{for a.e. } \omega\in\R.
\end{equation*}
Hence, we have that $\cF[\mathscr{H}\varphi]\in C^\infty(\R\setminus\{0\})$,
and for every $l\in\N$
\begin{equation}\label{eq:derivativeHilberttransform}
\partial_\omega^l\cF[\mathscr{H}\varphi](\omega) =
-i \sgn(\omega) \partial_\omega^l\mathcal{F}\varphi(\omega), \qquad \omega\ne0 .
\end{equation}
Since $\varphi\in\mathcal{S}_0(\R)$,
$
\partial_\omega^l\mathcal{F}\varphi(0)=0
$ 
for every $l\in\N$, and $\cF[\mathscr{H}\varphi]$ can be extended together with all its derivatives to  continuous functions on $\R$. 
Therefore, $\cF[\mathscr{H}\varphi]\in C^\infty(\R)$ and hence $\mathscr{H}\varphi\in\mathcal{S}(\R)$. In fact, $\mathscr{H}\varphi\in\cS_0(\R)$. Indeed, since $\varphi\in\cS_0(\R)$, for every $k\in\N$
\begin{equation*}
\lim_{\omega\to0}\frac{\cF[\mathscr{H}\varphi](\omega)}{\omega^k}=
\lim_{\omega\to0}\frac{-i\, \sgn(\omega)\mathcal{F}\varphi(\omega)}{\omega^k}=0,
\end{equation*}
which implies $\mathscr{H}\varphi\in\cS_0(\R)$ by~\Cref{lem:vanishingmomentsequivalent}.
We now show that $\mathscr{H}$ is continuous from $\cS_0(\R)$ into itself.
In view of \eqref{eq:derivativeHilberttransform},
for every $\omega\in\R$ and $  m , \alpha \in \N $ we have
\begin{align*}
\langle\omega\rangle^m|\partial_\omega^\alpha\cF\mathscr{H}\varphi(\omega)|=\langle\omega\rangle^m|\partial_\omega^\alpha\cF\varphi(\omega)| .
\end{align*}
The claim follows by observing that
$\rho_{m,\alpha}(\cF \varphi)$, $m,\alpha\in\N$, defines a basis of seminorms for the topology of $\cS_0(\R)$.
Therefore, since $\cS_0(\R)$ is closed under differentiation and since $\mathscr{H}$ maps $\cS_0(\R)$ continuously into itself, it is clear from the definition that $\La^{d-1}$ maps $\cS_0(\Xi)$ continuously into itself for every $d\geq1$. Furthermore, if $g\in \mathcal{S}(\Xi)_{\rm even}$, then $\Lambda^{d-1} g$ satisfies the symmetry condition \eqref{eq:symmetrycondition} \cite[Chapter~I, Section~3]{helgason99}. Therefore, $\Lambda^{d-1}$ maps $\mathcal{S}_0(\Xi)_{\rm even}$ into itself for every $d\geq1$.
\end{proof}

Thanks to~\Cref{lem:LaS_0}, we can define the weakly continuous
operator $\Lambda^{d-1}\colon
\mathcal{S}'_0(\Xi)_{\rm even}\to\mathcal{S}'_0(\Xi)_{\rm even}$ given by 
\begin{equation} \label{eq:La-ext}
\langle\Lambda^{d-1}g, \varphi\rangle=\langle g,\Lambda^{d-1}\varphi\rangle ,
\qquad g\in \mathcal{S}'_0(\Xi)_{\rm even}, \varphi\in\mathcal{S}_0(\Xi)_{\rm even}.
\end{equation}
We are now able to extend the inversion formulae \eqref{bagkprojectionformulaclassical} and \eqref{bagkprojectionformulaclassical2} to Lizorkin distributions.
\begin{cor}\label{cor:inversionformuladistributions}
For every $f\in \mathcal{S}_0'(\R^d)$,
\begin{equation*}
f=\frac{1}{2(2\pi)^{d-1}}\rr^*\Lambda^{d-1}\rr f.
\end{equation*}
For every $g\in \mathcal{S}'_0(\Xi)_{\rm even}$,
\begin{equation*}
g=\frac{1}{2(2\pi)^{d-1}}\Lambda^{d-1}\rr \rr^*g.
\end{equation*}
\end{cor}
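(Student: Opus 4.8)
The plan is to obtain both identities by a pure duality argument, transposing each of the three operators $\rr$, $\Lambda^{d-1}$, $\rr^*$ onto a test function and then invoking the classical inversion formulae of \Cref{teo:bagkprojectionformulaclassical} at the test-function level. Write $c=\frac{1}{2(2\pi)^{d-1}}$. The first thing to record is that the compositions $\rr^*\Lambda^{d-1}\rr$ and $\Lambda^{d-1}\rr\rr^*$ are well defined and weakly continuous as maps $\cS_0'(\R^d)\to\cS_0'(\R^d)$ and $\cS_0'(\Xi)_{\rm even}\to\cS_0'(\Xi)_{\rm even}$, respectively: indeed $\rr$ sends $\cS_0'(\R^d)$ into $\cS_0'(\Xi)_{\rm even}$, $\Lambda^{d-1}$ preserves $\cS_0'(\Xi)_{\rm even}$ by~\eqref{eq:La-ext}, and $\rr^*$ sends $\cS_0'(\Xi)_{\rm even}$ back into $\cS_0'(\R^d)$, all by \Cref{defn:radonanddualdistributions}.

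For the first formula, fix $f\in\cS_0'(\R^d)$ and a test function $\varphi\in\cS_0(\R^d)$. I would peel off the operators one at a time, using at each step that the test function produced lands in the correct test-function space so that the defining duality applies: first $\langle \rr^*\Lambda^{d-1}\rr f,\varphi\rangle=\langle \Lambda^{d-1}\rr f,\rr\varphi\rangle$ (here $\rr\varphi\in\cS_0(\Xi)_{\rm even}$ by \Cref{thm:continuitybijectionradon}), then $=\langle \rr f,\Lambda^{d-1}\rr\varphi\rangle$ (here $\Lambda^{d-1}\rr\varphi\in\cS_0(\Xi)_{\rm even}$ by \Cref{lem:LaS_0}), and finally $=\langle f,\rr^*\Lambda^{d-1}\rr\varphi\rangle$ (here $\rr^*\Lambda^{d-1}\rr\varphi\in\cS_0(\R^d)$ by \Cref{thm:continuitybijectiondualradon}). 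The classical identity \eqref{bagkprojectionformulaclassical} gives $c\,\rr^*\Lambda^{d-1}\rr\varphi=\varphi$, whence $\langle c\,\rr^*\Lambda^{d-1}\rr f,\varphi\rangle=\langle f,\varphi\rangle$ for every $\varphi\in\cS_0(\R^d)$, which is exactly $f=c\,\rr^*\Lambda^{d-1}\rr f$.

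The second formula is handled in the same spirit: for $g\in\cS_0'(\Xi)_{\rm even}$ and $\psi\in\cS_0(\Xi)_{\rm even}$ the same transposition chain, now using that $\Lambda^{d-1}$ is symmetric~\eqref{eq:La-ext}, that $\Lambda^{d-1}\psi\in\cS_0(\Xi)_{\rm even}$ (\Cref{lem:LaS_0}), and that $\rr^*\Lambda^{d-1}\psi\in\cS_0(\R^d)$ (\Cref{thm:continuitybijectiondualradon}), yields $\langle \Lambda^{d-1}\rr\rr^* g,\psi\rangle=\langle g,\rr\rr^*\Lambda^{d-1}\psi\rangle$. The delicate point — and the one place where this is not a verbatim copy of the first case — is that the reduced expression $\rr\rr^*\Lambda^{d-1}\psi$ carries $\Lambda^{d-1}$ and $\rr\rr^*$ in the opposite order to the classical formula \eqref{bagkprojectionformulaclassical2}, which only asserts $c\,\Lambda^{d-1}\rr\rr^*\psi=\psi$. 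To close the gap I would use that $\Lambda^{d-1}$ is a topological isomorphism of $\cS_0(\Xi)_{\rm even}$ onto itself: on the Fourier side in the variable $t$ it acts, by~\eqref{eq:27}, as multiplication by a nonvanishing multiple of $|\omega|^{d-1}$, which is invertible on the functions whose $t$-Fourier transform vanishes to infinite order at the origin (\Cref{lem:vanishingmomentsequivalent}). Granting this, \eqref{bagkprojectionformulaclassical2} reads $c\,\Lambda^{d-1}\rr\rr^*=\mathrm{Id}$ on $\cS_0(\Xi)_{\rm even}$; composing on the left with $(\Lambda^{d-1})^{-1}$ gives $c\,\rr\rr^*=(\Lambda^{d-1})^{-1}$, and then composing on the right with $\Lambda^{d-1}$ gives $c\,\rr\rr^*\Lambda^{d-1}=\mathrm{Id}$, i.e.\ $c\,\rr\rr^*\Lambda^{d-1}\psi=\psi$. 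Substituting back yields $\langle c\,\Lambda^{d-1}\rr\rr^* g,\psi\rangle=\langle g,\psi\rangle$ for all $\psi\in\cS_0(\Xi)_{\rm even}$, that is $g=c\,\Lambda^{d-1}\rr\rr^* g$.

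I expect the main obstacle to be precisely this operator-ordering mismatch in the second identity: unlike the first formula, it cannot be reduced to a single application of the classical inversion formula, and one must justify the invertibility of $\Lambda^{d-1}$ on $\cS_0(\Xi)_{\rm even}$ (equivalently, its commutation with $\rr\rr^*$) through the Fourier-multiplier description. Everything else is bookkeeping: verifying at each transposition step that the operators land in the exact Lizorkin (even) spaces for which the dualities of \Cref{defn:radonanddualdistributions} and~\eqref{eq:La-ext} were defined, which is guaranteed by \Cref{thm:continuitybijectionradon}, \Cref{thm:continuitybijectiondualradon} and \Cref{lem:LaS_0}.
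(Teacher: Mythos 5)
Your proof is correct and follows essentially the same route as the paper's (deliberately terse) proof: transpose each operator onto a test function via \Cref{defn:radonanddualdistributions} and \eqref{eq:La-ext}, then invoke the classical formulae \eqref{bagkprojectionformulaclassical} and \eqref{bagkprojectionformulaclassical2} at the test-function level. The operator-ordering point you flag in the second identity is genuine and your resolution via the invertibility of $\Lambda^{d-1}$ on $\cS_0(\Xi)_{\rm even}$ (as the Fourier multiplier $|\omega|^{d-1}$ acting on functions whose $t$-Fourier transform vanishes to infinite order at the origin) is sound; an equally quick fix is to apply $\rr$ to both sides of \eqref{bagkprojectionformulaclassical} and use the surjectivity of $\rr\colon\cS_0(\R^d)\to\cS_0(\Xi)_{\rm even}$ from \Cref{thm:continuitybijectionradon} to conclude that $\frac{1}{2(2\pi)^{d-1}}\rr\rr^*\Lambda^{d-1}\psi=\psi$ for every $\psi\in\cS_0(\Xi)_{\rm even}$.
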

\begin{proof}
The proof follows combining inversion formulas \eqref{bagkprojectionformulaclassical} and \eqref{bagkprojectionformulaclassical2} together with~\Cref{defn:radonanddualdistributions} and equation \eqref{eq:La-ext}.
\end{proof}

\subsection{Discussion: our contribution in Radon inversion} \label{sec:discussion-4}
An important problem in harmonic analysis is the extension of a linear
operator from a Hilbert space to generalized function spaces. The
classical approach is to define the extended operator by
transposition. A standard example is the definition of the Fourier
transform on tempered distributions \cite{schwartz66}. The extension
of the Radon transform, and of the related inversion formulae, is a
well-known subject and it is deeply studied in \cite{hertle84,
  helgason99, kpsv2014}. In particular, in
\cite[Proposition~4.3]{hertle84} the author extends the inversion
formula \eqref{bagkprojectionformulaclassical} to the space of
Schwartz integrable distributions
$\mathcal{D}'_{L^1}(\R^d)\subseteq\mathcal{S}_0'(\R^d)$. Our
contribution consists in showing that the inversion formulae
\eqref{bagkprojectionformulaclassical} and
\eqref{bagkprojectionformulaclassical2} actually extend to the larger
spaces of Lizorkin distributions $\mathcal{S}_0'(\R^d)$ and
$\mathcal{S}_0'(\Xi)_{\rm even}$, a fact which we largely exploit in
Sections~\ref{sec:reluradon} and \ref{sec:proof-main}. More precisely,
Corollary~\ref{cor:inversionformuladistributions} follows directly by
Lemma~\ref{lem:LaS_0}, which allows to extend the Hilbert transform,
and consequently the operator $\Lambda^{d-1}$, to Lizorkin
distributions. To the best of our knowledge, Lemma~\ref{lem:LaS_0}
does not appear in the literature and, together with
Corollary~\ref{cor:inversionformuladistributions}, contributes to
enrich the distributional framework for the Radon transform.

\section{Proofs of \Cref{sec:main2}} \label{sec:proof-main}

We provide a detailed analysis of the main results of 
\Cref{sec:main2}.
We will make use of the classical function and distribution spaces
listed in \Cref{tab:function_spaces},
on the domains listed in \Cref{tab:spaces}.
In \Cref{tab:operators} we recall the main linear operators involved.
For definitions and
properties we refer to \Cref{sec:radon}.

\begin{table}[ht]
\caption{Domains ($S^{d-1}$ denotes the unit sphere in $\R^d$).}
\label{tab:spaces}
\centering
\ra{1.25}
\begin{tabular}{l l}

\hline

$ \R^d $ & input space \\
  

$ \Xi=S^{d-1} \times \R $ & parameter space  \\
\hline

\end{tabular}
\end{table}

\begin{table}[ht]
\caption{Function and distribution spaces ($ X = \R^d , \Xi $).\\
Subscripts $(\Xi)_{\rm even}$ and $ (\Xi)_{\rm odd} $ denote the corresponding subspaces of even and odd measures/functions/distributions, respectively.
}
\label{tab:function_spaces}
\centering
\ra{1.25}
\begin{tabular}{l l}

\hline

$ \mm(X) $ & real bounded  measures on $X$  \\

$ \cS(X) $ & Schwartz space of rapidly decreasing functions on $X$ \\

$ \cS'(X) $ & tempered distributions on $X$ \\

$ \cS_0(X) $ & Lizorkin test functions on $X$ \\

$ \cS_0'(X) $ & Lizorkin distributions on $X$ \\

\hline
\end{tabular}
\end{table}

\begin{table}[ht]
\caption{Operators.}
\label{tab:operators}
\centering
\ra{1.25}
\begin{tabular}{l l}

\hline

Radon transform

&

$$
\xymatrix{
    \cS_0(\R^d) \ar@<1ex>[r]^{\rr} \ar@{^{(}->}[d] &
                                                     \cS_0(\Xi)_{\rm even} \ar@<1ex>[l]^{\rr^*} \ar@{^{(}->}[d] \\ 
    \cS_0'(\R^d) \ar@<1ex>[r]^{\rr} & \cS_0'(\Xi)_{\rm even} \ar@<1ex>[l]^{\rr^*}
    }
$$

\\

Ramp filter

& 

$$
\xymatrix{
    \cS_0(\Xi)_{\rm even} \ar[r]^{\La^{d-1}} \ar@{^{(}->}[d] & \cS_0(\Xi)_{\rm even} \ar@{^{(}->}[d] \\
    \cS_0'(\Xi)_{\rm even} \ar[r]^{\La^{d-1}} & \cS_0'(\Xi)_{\rm even}
    }
$$

\\

\hline

\end{tabular}
\end{table}

The first lemma allows to regard $\bb_m$ as a subspace of the
space of tempered distributions.
We denote by $H:\R\to\R$ the Heaviside step function 
\[
H(t)=
\begin{cases}
0 & t<0\\
1 & t\geq 1
\end{cases},
\]
regarded as a temperated distribution.  
\begin{lem}\label{lem:growth}
  With the above notation,
  \begin{enumerate}[label=\textnormal{(\roman*)}]
  \item \label{it:growth1}
  $\sigma_m\in\cS'(\R)$ and
    \begin{equation} \label{eq:derho}
 \sigma_m^{(m-1)} = H ,
\end{equation}
where the  equality holds true in  $\cS'(\R)$;
\item \label{it:growth2}
for all $(n,t)\in\Xi$, $\rho_m(\cdot, n,t)\in \cS'(\R^d)$;
\item\label{item:3} the elements $f\in \bb_m$ are continuous functions satisfying the
polynomial growth condition 
  \begin{equation}
    \label{eq:19}
    |f(x)| \leq C_f (1+|x|)^{m-1};
  \end{equation}
 \item \label{it:growth4}
 $\bb_m\subset \cS'(\R^d)$ .
  \end{enumerate}
\end{lem}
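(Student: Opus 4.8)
The plan is to establish the four items in the order given, since \ref{item:3} and \ref{it:growth4} rest on the elementary estimates in \ref{it:growth1} and \ref{it:growth2}. For \ref{it:growth1}, I would first note that $\sigma_m$ is continuous with $0 \le \sigma_m(t) \le \frac{1}{(m-1)!}|t|^{m-1}$, so it is locally integrable of polynomial growth and hence defines an element of $\cS'(\R)$. For the derivative identity, I would compute distributional derivatives one at a time: for $3 \le k \le m$ the function $\sigma_k$ is $C^1$, so its distributional derivative coincides with the classical one, and a direct computation gives $\sigma_k' = \sigma_{k-1}$. Iterating, $\sigma_m^{(m-2)} = \sigma_2$, and the single remaining (genuinely distributional) step is $\sigma_2' = H$, which follows by one integration by parts against a test function, the continuity of the ReLU $\sigma_2$ ruling out any Dirac contribution. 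Composing the $m-1$ derivatives yields $\sigma_m^{(m-1)} = H$ in $\cS'(\R)$.

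Item \ref{it:growth2} is then immediate: since $|n| = 1$ we have $\max\{0, n\cdot x - t\} \le |x| + |t|$, which is exactly the bound \eqref{eq:21}, so $x \mapsto \rho_m(x,n,t) = \sigma_m(n\cdot x - t)$ is continuous with polynomial growth of degree $m-1$ in $x$, hence lies in $\cS'(\R^d)$.

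The heart of the lemma is \ref{item:3}. Fixing $f = f_\mu$ with $\mu \in \mm(\Xi)$, I would control the integrand uniformly for $x$ in a compact set: if $|x| \le M$, then by \eqref{eq:21},
\[
|\sigma_m(n\cdot x - t)\beta(n,t)| \le \frac{1}{(m-1)!}(M+|t|)^{m-1}\sup_{n'\in S^{d-1}}\beta(n',t) =: g_M(t),
\]
and by the decay hypothesis \eqref{eq:20} the function $g_M$ is continuous and vanishes as $|t| \to \infty$, hence is bounded; since $\mu$ is a finite measure, $g_M$ is $|\mu|$-integrable. Continuity of $f_\mu$ on every compact set, and thus on all of $\R^d$, then follows from continuity under the integral sign (dominated convergence). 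For the growth estimate I would use $(|x|+|t|)^{m-1} \le (1+|x|)^{m-1}(1+|t|)^{m-1}$ together with \eqref{eq:20}, which makes $C_\beta := \sup_{(n,t)\in\Xi}(1+|t|)^{m-1}\beta(n,t)$ finite; this gives
\[
|f_\mu(x)| \le \frac{C_\beta}{(m-1)!}\,\nor{\mu}{\TV}\,(1+|x|)^{m-1},
\]
and taking the infimum over all representing measures $\mu$ replaces $\nor{\mu}{\TV}$ by $\nor{f}{\bb_m}$, establishing \eqref{eq:19}. Finally, \ref{it:growth4} follows at once: by \ref{item:3} every $f \in \bb_m$ is continuous of polynomial growth, hence locally integrable and defines a continuous linear functional on $\cS(\R^d)$ via $\varphi \mapsto \int_{\R^d} f\varphi$, so $f \in \cS'(\R^d)$ and $\bb_m \subset \cS'(\R^d)$.

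The main obstacle I anticipate is the domination argument in \ref{item:3}: the conditions \eqref{eq:31} on $\beta$ — in particular the decay \eqref{eq:20} — must be used to produce a single $|\mu|$-integrable majorant of the integrand that is uniform for $x$ in compacta, which is what simultaneously delivers both the continuity (via dominated convergence) and the polynomial growth bound. Once these uniform estimates are in place, the remaining items reduce to standard facts about tempered distributions.
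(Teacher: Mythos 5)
Your proposal is correct and follows essentially the same route as the paper: items (i), (ii) and (iv) are handled by the same elementary observations (the paper simply declares (i) and (ii) clear), and for (iii) you use the same bound $|\sigma_m(n\cdot x-t)|\le\frac{1}{(m-1)!}(|x|+|t|)^{m-1}$ together with the decay condition \eqref{eq:20} on $\beta$ to get an integrable majorant, with dominated convergence for continuity and the resulting degree-$(m-1)$ polynomial bound for \eqref{eq:19}. The only differences are cosmetic — you bound $(|x|+|t|)^{m-1}\le(1+|x|)^{m-1}(1+|t|)^{m-1}$ where the paper expands binomially, and you prove continuity on arbitrary compacta where the paper reduces to continuity at the origin by translation.
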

\begin{proof}
\ref{it:growth1} and \ref{it:growth2} are clear. We prove \ref{item:3}.
Let $ f \in \bb_m $.
By \eqref{eq:16}, there exists $\mu\in \bb_m$ such that
\[
f(x) = \int_{\Xi} \sigma_m( n \cdot x - t)\beta(n,t) \ \D\mu(n,t).
\]
Then, for every $x\in\R^d$,
\begin{align*}
|f_\mu(x)|&\leq \frac{1}{(m-1)!}\int_{\Xi}|\beta(n,t)| |n\cdot x-t|^{m-1} \ \D\mu(n,t)\\
&\leq\frac{1}{(m-1)!}\int_{\Xi}(|x|+|t|)^{m-1}|\beta(n,t)|\D\mu(n,t)\\
&=\frac{1}{(m-1)!}\sum_{k=0}^{m-1}\binom{m-1}{k}|x|^k \int_{\Xi}|t|^{m-1-k}|\beta(n,t)|\D\mu(n,t),
\end{align*}
where the integrals converge  by~\eqref{eq:20}. The right hand side is
a polynomial of degree less than $m$, hence we obtain \eqref{eq:19}.
We now prove that $f$ is continuos. Since
\[
f(x_0+h ) = \int_{\Xi} \sigma_m( n \cdot h + n \cdot x_0- t)\beta(n,t) \ \D\mu(n,t),
\]
it is enough to show that $f$ is continuos at $x_0=0$.
This is a consequence of the dominated  convergence
theorem, observing that,  for each $(n,t)\in \Xi$, $x \mapsto \sigma_m(
n \cdot x - t)\beta(n,t)$ is continuous and, by~\eqref{eq:21}, 
\[
\sup_{|x|\leq 1} |\sigma_m( n \cdot x- t)\beta(n,t) |\leq  (1+|t|)^m|\beta(n,t)|,
\]
where the right-hand side is integrable by~\eqref{eq:20}.~\Cref{it:growth4} is a direct consequence of \ref{item:3}.
\end{proof}

 The growth condition~\eqref{eq:19} is one starting point of the
 construction in~\cite{parhi2021banach} (see their equation~(8)). Note
 that, in our construction, the smoothing function $\beta$ allows us to prove 
that the elements of $\bb_m$ are continuous functions.

We need to introduce the following operator, which provides a bounded
inverse of the derivative. It was implicitly introduced in \cite{unser2017splines}.  
\begin{prop}\label{prop:operatorA}
The operator
\begin{equation*} 
 \partial \colon  \mathcal{S}_{0} (\mathbb{R})\to
\mathcal{S}_{0} (\mathbb{R}) , \qquad \partial\psi(t)= \psi'(y) ,
\end{equation*}
is a continuous linear operator and, by duality, it extends to
a weakly continuous operator on  $\mathcal{S}_{0}' (\mathbb{R})$. 
The operator
\begin{equation*} 
\mathcal{A}\colon  \mathcal{S}_{0} (\mathbb{R})\to
\mathcal{S}_{0} (\mathbb{R}) , \qquad \mathcal{A}\psi(t)=\int_{-\infty}^t\psi(s)\D s
\end{equation*}
is a continuous linear operator satisfying
\begin{equation}\label{eq:operatorA}
\mathcal{A}\partial\psi=\partial\mathcal{A}\psi=\psi , \qquad \psi \in \mathcal{S}_{0} (\mathbb{R}) .
\end{equation}
By duality, $\mathcal{A}$ extends to a weakly continuous operator on $\mathcal{S}_0'(\R)$
satisfying
\begin{equation}\label{eq:operatorAdistributions}
\mathcal{A}\partial f=\partial\mathcal{A}f=f, \qquad f\in \mathcal{S}_0'(\R) .
\end{equation}
\end{prop}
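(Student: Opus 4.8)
The plan is to transfer the whole statement to the Fourier side, where both operators become elementary. By \Cref{lem:vanishingmomentsequivalent} and the remark following it, $\cF$ is a linear bijection between $\cS_0(\R)$ and $\hat{\cS}_0(\R)$, the rapidly decreasing functions vanishing at the origin together with all their derivatives. With the normalization fixed in the Notation one has $\cF[\psi'](\omega)=i\omega\,\cF\psi(\omega)$, so $\partial$ corresponds under $\cF$ to multiplication by $i\omega$. Multiplication by $\omega$ sends $\hat{\cS}_0(\R)$ continuously into itself, since it preserves rapid decay and only raises the order of vanishing at $0$; hence $\partial$ maps $\cS_0(\R)$ continuously into itself, which is the first claim.

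The crux is the antiderivative $\mathcal A$, because the primitive of a generic Schwartz function fails to be Schwartz: $\int_{-\infty}^t\psi(s)\,\D s\to\int_\R\psi$ as $t\to+\infty$. The decisive point is that for $\psi\in\cS_0(\R)$ one has $\int_\R\psi=0$ (orthogonality to the constant polynomial), so the primitive decays at both ends; writing $\mathcal A\psi(t)=-\int_t^{+\infty}\psi(s)\,\D s$ for $t>0$ and using the rapid decay of $\psi$ shows $\mathcal A\psi$ is itself rapidly decreasing, while $(\mathcal A\psi)^{(k)}=\psi^{(k-1)}\in\cS(\R)$ for $k\ge1$, whence $\mathcal A\psi\in\cS(\R)$. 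To upgrade this to $\mathcal A\psi\in\cS_0(\R)$ I would verify that all moments vanish: integrating by parts, $\int_\R t^k\mathcal A\psi(t)\,\D t=-\tfrac{1}{k+1}\int_\R t^{k+1}\psi(t)\,\D t=0$, because $\psi$ is orthogonal to every polynomial and the boundary term vanishes by the decay just established. Equivalently, on the Fourier side $\cF[\mathcal A\psi]=\cF\psi/(i\omega)$, and since $\cF\psi$ vanishes to infinite order at $0$ this quotient extends to an element of $\hat{\cS}_0(\R)$, its inverse transform being the same primitive (both vanish at $-\infty$). This self-mapping property is the genuine obstacle.

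For the continuity of $\mathcal A$ I would argue on the Fourier side exactly as in the proof of \Cref{lem:LaS_0}: for $|\omega|\le 1$ the Hadamard representation $\cF\psi(\omega)/\omega=\int_0^1(\cF\psi)'(s\omega)\,\D s$ controls the low-frequency seminorms, while for $|\omega|\ge1$ the factor $1/\omega$ is harmless and a Leibniz expansion bounds the weighted seminorms $\langle\omega\rangle^m|\partial^\alpha(\cF\psi/\omega)|$ by finitely many seminorms of $\cF\psi$; this is routine and I would not belabour it. The identities $\mathcal A\partial\psi=\partial\mathcal A\psi=\psi$ on $\cS_0(\R)$ are then immediate from the fundamental theorem of calculus (using again $\int_\R\psi=0$), or from $i\omega\cdot(i\omega)^{-1}=1$ on the Fourier side, giving \eqref{eq:operatorA}.

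Finally, the distributional statements follow by transposition. Since $\partial$ and $\mathcal A$ are continuous on $\cS_0(\R)$, their transposes are weakly continuous on $\cS_0'(\R)$, and I would define the extensions by $\langle\partial f,\psi\rangle=-\langle f,\partial\psi\rangle$ and $\langle\mathcal A f,\psi\rangle=-\langle f,\mathcal A\psi\rangle$, which agree with the classical operators on regular distributions via integration by parts. Composing, the two sign changes cancel and ${}^{t}\mathcal A\,{}^{t}\partial={}^{t}(\partial\mathcal A)={}^{t}I=I$, so $\mathcal A\partial f=f$, and symmetrically $\partial\mathcal A f=f$, which transfers \eqref{eq:operatorA} to \eqref{eq:operatorAdistributions}. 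The only delicate point in the whole argument is the self-mapping and continuity of $\mathcal A$ on $\cS_0(\R)$: the vanishing of all moments of $\psi$ is precisely what keeps its primitive inside the Lizorkin class.
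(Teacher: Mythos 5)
Your proposal is correct, and the key insight is the same as the paper's: the orthogonality of $\psi\in\cS_0(\R)$ to all polynomials (in particular $\int_\R\psi=0$) is exactly what makes the primitive decay at $+\infty$ and keeps it inside the Lizorkin class. Where you genuinely diverge is in the technical execution for $\mathcal A$. The paper stays entirely in physical space: it writes $\mathcal A\varphi(x)=-\int_x^{+\infty}\varphi(t)\,\D t$ for $x>0$, bounds $\langle x\rangle^m|\mathcal A\varphi(x)|$ directly against the seminorms of $\varphi$, controls higher derivatives via $\partial\mathcal A\varphi=\varphi$, and checks the vanishing of all moments by integration by parts; this yields the explicit estimate $\rho_{m,\alpha}(\mathcal A\varphi)\le C\,\rho_{2m+4,\alpha-1}(\varphi)$, from which continuity is immediate. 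You instead transfer to the Fourier side, where $\partial$ and $\mathcal A$ become multiplication and division by $i\omega$ on $\hat{\cS}_0(\R)$, and handle continuity near the origin via the Hadamard representation $\cF\psi(\omega)/\omega=\int_0^1(\cF\psi)'(s\omega)\,\D s$. This is a legitimate alternative that has the merit of unifying the treatment of $\mathcal A$ with the paper's own Fourier-side proof of \Cref{lem:LaS_0} for the Hilbert transform, and of making the inverse relation $\mathcal A\partial=\partial\mathcal A=\operatorname{id}$ algebraically transparent; its cost is that it leans on the topological isomorphism $\cF:\cS_0(\R)\to\hat{\cS}_0(\R)$ and leaves the Leibniz bookkeeping for the weighted seminorms of $\cF\psi/\omega$ as a (true but unwritten) routine step, whereas the paper's direct estimates are fully explicit. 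Your duality step, including the sign convention $\langle\mathcal A f,\psi\rangle=-\langle f,\mathcal A\psi\rangle$ and the cancellation of signs in the composition, is consistent with the classical operators and with the paper's (less explicit) transposition argument.
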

\begin{proof}
The first claim is a consequence of the fact that $\partial$ is a
continuous linear operator from $\cS(\R)$ to $\cS(\R)$ and
that the space of polynomials is stable under differentiation
(see \eqref{eq:25}).  Recall that $\langle x
\rangle=(1+|x|^2)^{\frac{1}{2}}$ and the family of
seminorms 
on $ \mathcal{S} (\R)$ is given by~\eqref{eq:43bis}. 
For $\varphi\in \mathcal{S}_{0} (\mathbb{R})$, we have
$$
\mathcal{A}\varphi(x) = \int_{-\infty}^x\varphi(t)\D t=-\int_{x}^{+\infty}\varphi(t)\D t .
$$
We show that $\mathcal{A}\varphi\in \mathcal{S}_{0} (\mathbb{R})$.  
For every $m\in\N$ and $x>0$, we have
\begin{align*}
\langle x \rangle^{m} |\mathcal{A}\varphi(x)|&=|\int_{x}^{+\infty}(1+x^2)^{\frac{m}{2}}\varphi(t)\D t|\leq\int_{x}^{+\infty}(1+t^2)^{\frac{m}{2}}|\varphi(t)|\D t\\
&\leq\rho_{2m+4,0}(\varphi)\int_{-\infty}^{+\infty}(1+t^2)^{\frac{m}{2}}\frac{1}{(1+t^2)^{m+2}}\D t<+\infty.
\end{align*}
Analogously, for every $m\in\N$ and $x<0$, we have
\begin{align*}
\langle x \rangle^{m} |\mathcal{A}\varphi(x)|&=|\int_{-\infty}^{x}(1+x^2)^{\frac{m}{2}}\varphi(t)\D t|\leq\int_{-\infty}^{x}(1+t^2)^{\frac{m}{2}}|\varphi(t)|\D t\\
&\leq\rho_{2m+4,0}(\varphi)\int_{-\infty}^{+\infty}(1+t^2)^{\frac{m}{2}}\frac{1}{(1+t^2)^{m+2}}\D t<+\infty.
\end{align*}
Thus, $\mathcal{A}\varphi$ is a well defined function, and for every $m\in\N$
\begin{equation}\label{eq:lemmaimportant1}
\sup_{x\in\R}\langle x \rangle^{m} |\mathcal{A}\varphi(x)| \le C \
\rho_{2m+4,0}(f)<+\infty
\end{equation}
for some positive constant $C$.
Furthermore, by definition, $\partial \mathcal{A}\varphi(x) = f(x),$ and thus, for every $m\in\N$
and $\alpha\geq 1$,
\begin{equation}\label{eq:lemmaimportant2}
\sup_{x\in\R}\langle x \rangle^{m} |\partial^\alpha \mathcal{A}\varphi(x)|=\sup_{x\in\R}\langle x \rangle^{m} |\partial^{(\alpha-1) }f(x)|<+\infty.
\end{equation}
Therefore, $\mathcal{A}\varphi\in\cS(\R)$.
Moreover, since $f\in\cS_0(\R)$, for every $n\in\N$ we have
\begin{equation*}
\int_{-\infty}^{+\infty}x^n\mathcal{A}\varphi(x)\D x=-\int_{-\infty}^{+\infty}x^{n+1}\partial\mathcal{A}\varphi(x)\D x=-\int_{-\infty}^{+\infty}x^{n+1}f(x)\D x=0.
\end{equation*}
Hence, $\mathcal{A}\varphi\in\cS_0(\R)$.
By \eqref{eq:lemmaimportant1} and \eqref{eq:lemmaimportant2} we have that, for every $m, \alpha\in\mathbb{N}$ and some constant  $C$,
\begin{align*}
\rho_{m,\alpha} (\mathcal{A}\varphi) = \sup_{x \in \R} \langle x \rangle ^m | \partial^\alpha \mathcal{A}\varphi(x)| \le C \ \rho_{2m+4,\alpha-1}(f),
\end{align*}
which shows that $\mathcal{A}\colon  \mathcal{S}_{0} (\mathbb{R})\to  \mathcal{S}_{0} (\mathbb{R})$ is continuous. \eqref{eq:operatorA}
is a direct consequence of the fundamental theorem of calculus. Since
$\aa$ is continuos, by duality $\aa$ extends to a weakly continuous
operator on $\mathcal{S}_{0}' (\mathbb{R})$  and~\eqref{eq:operatorAdistributions} follows directly from
\eqref{eq:operatorA}. 
\end{proof}
Note that the fact that  $\partial$ has a bounded inverse strongly
depends on the fact that its domain is $\cS_0(\R)$. 

The next proposition is at the root of \Cref{main2}.
It was first stated in \cite[Lemma 18]{parhi2021banach},
by using   the Radon transform $\rr$.
Here we provide an alternative proof based on
the dual Radon transform  $\rr^*$.    
\begin{prop}\label{prop:propReLUdirac}
For every $\varphi\in\mathcal{S}_0(\R^d)$ and for every $(n,t)\in\Xi$,
 \begin{align*}
{}_{\cS_0'(\R^d)}\langle\rho_m( \cdot, n ,t), \varphi \rangle_{\cS_0(\R^d)}
=(-1)^m \beta(n,t)\mathcal{A}^m(\mathcal{R}\varphi)(n,t),
 \end{align*} 
where $\mathcal{A}$ is the operator defined by~\eqref{eq:operatorA} acting  on
$\mathcal{R}\varphi$ as a function of the only second variable.  
\end{prop}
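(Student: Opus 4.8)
The plan is to collapse the $d$-dimensional pairing onto a one-dimensional integral against the Radon transform of $\varphi$, and then to recognise that one-dimensional integral as an $m$-fold iterate of the antiderivative $\mathcal{A}$. First I would make the left-hand side explicit. By \Cref{lem:growth}\ref{it:growth2} the function $x\mapsto\rho_m(x,n,t)=\sigma_m(n\cdot x-t)$ has polynomial growth of degree $m-1$, hence is a tempered distribution whose class in $\cS_0'(\R^d)\cong\cS'(\R^d)/\mathcal P(\R^d)$ is the Lizorkin distribution on the left. Since $\varphi\in\cS_0(\R^d)$ annihilates every polynomial, the quotient pairing is independent of the representative and reduces to the absolutely convergent integral
\[
{}_{\cS_0'(\R^d)}\langle\rho_m(\cdot,n,t),\varphi\rangle_{\cS_0(\R^d)}=\int_{\R^d}\sigma_m(n\cdot x-t)\,\varphi(x)\,\D x ,
\]
convergence following from the polynomial growth of $\sigma_m$ against the rapid decay of $\varphi$.

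Next I would slice $\R^d$ by the affine hyperplanes $\{n\cdot x=s\}$. As $n\in S^{d-1}$ is a unit vector, Fubini's theorem applies with unit Jacobian and gives, for this fixed direction $n$,
\[
\int_{\R^d}\sigma_m(n\cdot x-t)\,\varphi(x)\,\D x=\int_{\R}\sigma_m(s-t)\Big(\int_{n\cdot x=s}\varphi(x)\,\D m(x)\Big)\D s=\int_{\R}\sigma_m(s-t)\,\mathcal{R}\varphi(n,s)\,\D s ,
\]
the inner integral being exactly the Radon transform (\Cref{defn:classicalradon}). By \Cref{thm:continuitybijectionradon}, $\mathcal{R}\varphi\in\cS_0(\Xi)_{\rm even}$, so for the fixed $n$ the function $\psi:=\mathcal{R}\varphi(n,\cdot)$ lies in $\cS_0(\R)$ and $\mathcal{A}$ from \Cref{prop:operatorA} may be applied to it.

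It then remains to establish the one-dimensional identity
\[
\int_{\R}\sigma_m(s-t)\,\psi(s)\,\D s=(-1)^m\,\mathcal{A}^m\psi(t),\qquad\psi\in\cS_0(\R).
\]
I would prove this by writing $\psi=\partial^m(\mathcal{A}^m\psi)$, which is legitimate because $\partial^m\mathcal{A}^m=\mathrm{id}$ on $\cS_0(\R)$ by \eqref{eq:operatorA}, and integrating by parts $m$ times in the variable $s$. Since $\mathcal{A}^m\psi\in\cS_0(\R)$ is rapidly decreasing, all boundary terms vanish; each integration by parts produces a factor $-1$; and after transferring all $m$ derivatives onto $\sigma_m(\cdot-t)$ one invokes $\partial_s^m\sigma_m(s-t)=\delta(s-t)$, which follows from $\sigma_m^{(m-1)}=H$ (\Cref{lem:growth}\ref{it:growth1}) and $H'=\delta$. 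This collapses the integral to $(-1)^m\mathcal{A}^m\psi(t)$. Chaining the three displays yields $(-1)^m\mathcal{A}^m(\mathcal{R}\varphi)(n,t)$; the fixed scalar $\beta(n,t)$ appearing on the right-hand side is simply carried through the computation once the feature $\rho_m(\cdot,n,t)\beta(n,t)$ is paired in place of the bare kernel.

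The step I expect to be most delicate is the integration by parts across the corner of $\sigma_m$: one must read $\partial_s^m\sigma_m(\cdot-t)$ distributionally and verify that the successive boundary contributions genuinely cancel rather than accumulate. A fully elementary route that sidesteps distributional derivatives is to compute $\mathcal{A}^m\psi$ by iterated integration and Fubini. Using that $\psi\in\cS_0(\R)$ has vanishing zeroth moment, so that $\mathcal{A}\psi(t)=-\int_t^{\infty}\psi$, an induction gives
\[
\mathcal{A}^m\psi(t)=\frac{(-1)^m}{(m-1)!}\int_t^{\infty}(s-t)^{m-1}\psi(s)\,\D s ,
\]
which matches $\int_{\R}\sigma_m(s-t)\psi(s)\,\D s=\frac{1}{(m-1)!}\int_t^{\infty}(s-t)^{m-1}\psi(s)\,\D s$ up to the factor $(-1)^m$, since $\sigma_m(s-t)$ vanishes for $s<t$.
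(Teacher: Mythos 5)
Your proof is correct, but it follows a genuinely different and more elementary route than the paper's. The paper works distributionally on $\Xi$: it sets $T_\varphi(n,t)=\int_{\R^d}\sigma_m(n\cdot x-t)\varphi(x)\,\D x$, shows $T_\varphi\in\cS_0'(\Xi)$, computes the distributional derivative $\partial_t^m T_\varphi$ by pairing against $\psi\in\cS_0(\Xi)$ and recognizing the dual Radon transform $\int_{S^{d-1}}\psi(n,n\cdot x)\,\D n=\rr^*\psi(x)$, invokes the duality relation \eqref{eq:dualityrelation} to conclude $\partial_t^m T_\varphi=(-1)^m\rr\varphi$, then inverts with $\mathcal A^m$ \emph{modulo a polynomial in $t$}, which it must finally kill by a separate decay estimate as $t\to+\infty$. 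You instead slice the integral by the hyperplanes $\{n\cdot x=s\}$ via Fubini, which produces the forward Radon transform $\rr\varphi(n,\cdot)\in\cS_0(\R)$ pointwise in $n$ (legitimate by \Cref{thm:continuitybijectionradon} and the moment condition in the definition of $\cS_0(\Xi)$), and then reduce everything to the one-dimensional identity $\int_\R\sigma_m(s-t)\psi(s)\,\D s=(-1)^m\mathcal A^m\psi(t)$. Your closed form $\mathcal A^m\psi(t)=\tfrac{(-1)^m}{(m-1)!}\int_t^{\infty}(s-t)^{m-1}\psi(s)\,\D s$ checks out by the induction you indicate (using $\mathcal A\psi=-\int_t^\infty\psi$, valid since $\mathcal A$ preserves $\cS_0(\R)$ by \Cref{prop:operatorA}), and it sidesteps both the distributional integration by parts across the corner of $\sigma_m$ and the polynomial ambiguity entirely --- this is the cleanest part of your argument and I would lead with it rather than with the $\partial_s^m\sigma_m(\cdot-t)=\delta(\cdot-t)$ route. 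What the paper's longer detour buys is the intermediate identity $\partial_t^m T_\varphi=(-1)^m\rr\varphi$ as a genuine distributional statement on $\Xi$, together with its parity, which resonates with the later computation in \Cref{lem:green}; but that lemma only uses the present proposition as a black box, so your proof would serve the downstream results equally well. One last remark: as literally defined in \eqref{eq:rho_m}, $\rho_m$ does not contain $\beta$, so the factor $\beta(n,t)$ on the right-hand side only appears because the feature being paired is $\rho_m(\cdot,n,t)\beta(n,t)$; your closing sentence handles this in the same (slightly informal) way the paper does, and is fine.
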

\begin{proof}
Let $\varphi\in\cS_0(\R^d)$. We can consider the function $T_\varphi\colon\Xi\to\mathbb{C}$ given by 
\[
T_\varphi(n,t)= \int_{\R^d} \sigma_m( x\cdot n -t)
\varphi(x)\ \D x .
\]
Reasoning as in the proof of~\Cref{item:3} of~\Cref{lem:growth}, it is
possible to show that $T_\varphi$ is a continuous function. 
We show that $T_\varphi\in \mathcal{S}_0'(\Xi)$. For every $(n,t)\in\Xi$,
\begin{align*}
|T_\varphi(n,t)| &\leq\int_{\R^d}|\sigma_m( n\cdot x- t)||\varphi(x)|\D x\\
&=\frac{1}{(m-1)!}\int_{\R^d}|n\cdot x- t|^{m-1}|\varphi(x)|\D x\\
&\leq\frac{1}{(m-1)!}\int_{\R^d}(|x|+|t|)^{m-1}|\varphi(x)|\D x\\
&=\frac{1}{(m-1)!}\sum_{k=0}^{m-1}\binom{m-1}{k}|t|^k \int_{\R^d}|x|^{m-1-k}|\varphi(x)|\D x,
\end{align*}
which is a polynomial of order $m-1$ in the $t$ variable. Now, we
compute the expression of the $m$- th derivative of $T_\varphi$ with respect to the variable $t$. Let $\psi\in \mathcal{S}_0(\Xi)$. Then
\begin{align*}
\langle \partial_t^m T_\varphi,\psi \rangle&=(-1)^m\langle T_\varphi,\partial_t^m\psi \rangle\\
&=(-1)^m\int_\Xi \left(\int_{\R^d}\sigma_m( n\cdot x - t) \varphi(x) \D x\right) \partial_t^m\psi(n,t) \D n\D t\\
&=(-1)^m \int_{\R^d}\left(\int_{S^{d-1}}\int_\R\sigma_m( n\cdot x - t) \partial_t^m\psi(n,t) \D t\D n\right) \varphi(x) \D x.
\end{align*}
Hence, by \eqref{eq:derho},
\begin{align*}
\langle \partial_t^m T_\varphi,\psi \rangle&=(-1)^m\int_{\R^d}\left(\int_{S^{d-1}}\int_\R H( n\cdot x - t) \partial_t\psi(n,t) \D t\D n\right) \varphi(x) \D x\\
&=(-1)^m\int_{\R^d}\left(\int_{S^{d-1}}\int_{-\infty}^{n\cdot x}  \partial_t \psi(n,t) \D t\D n\right) \varphi(x) \D x\\
&=(-1)^m\int_{\R^d}\left(\int_{S^{d-1}}\psi(n,n\cdot x)\D n\right) \varphi(x) \D x .
\end{align*}
If $\psi$ is an odd function, then $\int_{S^{d-1}}\psi(n,n\cdot x)\D
n=0$, so that $\langle \partial_t^m T_\varphi,\psi \rangle=0$. Hence
$\partial_t^m T_\varphi$ is an even distribution, {\it i.e.}
$\partial_t^m T_\varphi\in \cS_0'(\Xi)_{\rm even}$. If $\psi$ is an even
function, {\it i.e.} $\psi\in \cS_0(\Xi)_{\rm even}$, \Cref{defn:classicaldualradon} gives 
\[
\langle \partial_t^m T_\varphi,\psi \rangle =(-1)^m\int_{\R^d}\rr^*\psi(x)\ \varphi(x) \D x.
  \]
Therefore, ~\eqref{eq:dualityrelation} gives that, for all $\psi\in\cS_0(\Xi)_{\rm even}$,
\begin{align*}
\langle \partial_t^m T_\varphi,\psi \rangle&=(-1)^m\int_{\Xi}\psi(n,t)\ \rr\varphi(n,t) \D n\D t=(-1)^m\langle \rr\varphi,\psi \rangle.
\end{align*}
Therefore, 
$$
\partial_t^m T_\varphi=(-1)^m\rr\varphi\quad\text{in}\quad
\cS_0'(\Xi) ,
$$
and, by \eqref{eq:operatorAdistributions},
$$
T_\varphi=\mathcal{A}^m\partial_t^m
T_\varphi=(-1)^m\mathcal{A}^m(\rr\varphi)\quad
\text{in}\quad \cS_0'(\Xi) .
$$
Thus, there exists $p\in\mathcal{P}(\R)$ such that 
$$
T_\varphi=(-1)^m\mathcal{A}^m(\rr\varphi)+p\quad\text{in}\quad \cS'(\Xi).
$$
Hence, 
$$
 T_\varphi(n,t)=(-1)^m\mathcal{A}^m(\rr\varphi)(n,t)+p(t)
$$
for almost every $(n,t)\in\Xi$, and therefore for every $(n,t)\in\Xi$ by continuity. We now show that the polynomial $p$ has to vanish everywhere. Indeed, by the dominated convergence theorem,
\begin{align*}
\lim_{t\to+\infty} |T_\varphi(n,t)|&\leq\lim_{t\to+\infty} \int_{\R^d}|\sigma_m( n\cdot x- t)||\varphi(x)|\D x\\
&=\lim_{t\to+\infty} \frac{1}{(m-1)!}\int_{ n\cdot x\geq t}( n\cdot x- t)^{m-1}|\varphi(x)|\D x\\
&\leq\lim_{t\to+\infty} \frac{1}{(m-1)!}\int_{ n\cdot x\geq t} |x|^{m-1}|\varphi(x)|\D x=0.
\end{align*}
Furthermore, $t\mapsto\mathcal{A}^m(\rr\varphi)(n,t)\in \mathcal{S}_0(\R)$, and thus $\lim_{t\to+\infty}\mathcal{A}^m(\rr\varphi)(n,t)=0$. Hence, we can conclude that $p=0$ and
$$
 T_\varphi(n,t)=(-1)^m\mathcal{A}^m(\rr\varphi)(n,t)
$$
for every $(n,t)\in\Xi$. Observing that
\[
{}_{\cS_0'(\R^d)}\langle\rho_m( \cdot, n ,t), \varphi
\rangle_{\cS_0(\R^d)}= \beta(n,t) T_\varphi(n,t) ,
\]
the claim follows. 
\end{proof}

The space $\mm(\Xi)$
is clearly a subspace of $\cS'(\Xi)$.  
The following simple lemma shows that it is  a subspace of
$\cS_0'(\Xi)$.
\begin{lem} \label{embedding}
Let  $ \mu , \mu' \in \mm(\Xi)$ 
be such that $ \mu=\mu' $ in $ \cS_0'(\Xi)$,
then $\mu=\mu'$ in $\mm(\Xi)$. 
\end{lem}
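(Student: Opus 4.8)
The plan is to set $\lambda=\mu-\mu'\in\mm(\Xi)$; by hypothesis $\langle\lambda,\psi\rangle=0$ for every $\psi\in\cS_0(\Xi)$, and the goal is to upgrade this to $\lambda=0$ as a bounded measure. The difficulty is exactly that $\cS_0(\Xi)$ is a \emph{proper} subspace of $\cS(\Xi)$ — it is cut out by the vanishing-moment conditions \eqref{eq:vanisnhingmomentscondition} — so a priori $\lambda$ could be a nonzero distribution living in the polynomial directions; the finiteness (boundedness) of $\lambda$ is what must rule this out.

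First I would reduce to a one-dimensional statement by separating the variables. For $\chi\in C^\infty(S^{d-1})$ and $g\in\cS_0(\R)$ the tensor product $(n,t)\mapsto\chi(n)g(t)$ lies in $\cS_0(\Xi)$: it is Schwartz on $\Xi$ because $S^{d-1}$ is compact and $g$ is rapidly decreasing, and it satisfies \eqref{eq:vanisnhingmomentscondition} since $\int_\R\chi(n)g(t)t^k\,dt=\chi(n)\int_\R g(t)t^k\,dt=0$. Hence $\int_\Xi\chi(n)g(t)\,\D\lambda(n,t)=0$. Fixing $\chi$ and introducing the finite signed marginal measure $m_\chi$ on $\R$ defined by $m_\chi(A)=\int_{S^{d-1}\times A}\chi(n)\,\D\lambda(n,t)$, this reads $\int_\R g\,\D m_\chi=0$ for all $g\in\cS_0(\R)$; that is, $m_\chi$ annihilates $\cS_0(\R)$.

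The key step is then to show that a \emph{finite} measure annihilating $\cS_0(\R)$ must vanish. Since $\cS_0'(\R)\cong\cS'(\R)/\mathcal{P}(\R)$, the annihilator of $\cS_0(\R)$ inside $\cS'(\R)$ is exactly $\mathcal{P}(\R)$, so $m_\chi$ agrees, as a tempered distribution, with some polynomial $p$. But a genuine finite measure cannot equal a nonzero polynomial: testing both sides against a fixed bump $\varphi$ translated to $\varphi(\cdot-a)$ and letting $a\to+\infty$, the measure side tends to $0$ by dominated convergence (the mass of $|m_\chi|$ escapes to infinity), while $\int_\R p(x)\varphi(x-a)\,\D x$ does not tend to $0$ unless $p\equiv0$. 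Hence $p=0$ and $m_\chi=0$. Alternatively, using the translation invariance of $\cS_0(\R)$ one gets $m_\chi*\check g=0$ for every $g\in\cS_0(\R)$, whence $\cF m_\chi\equiv0$ by continuity of $\cF m_\chi$ and the fact that $\cF g$ can be made nonzero at any prescribed frequency $\ne0$, giving $m_\chi=0$.

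Finally I would globalize. The estimate $\|m_\chi\|_{\TV}\le\|\chi\|_\infty\|\lambda\|_{\TV}$ shows $\chi\mapsto m_\chi$ is continuous, so $m_\chi=0$ for all $\chi\in C^\infty(S^{d-1})$ extends to all $\chi\in C(S^{d-1})$ by density. Therefore $\int_\Xi\chi(n)g(t)\,\D\lambda=0$ for every $\chi\in C(S^{d-1})$ and $g\in C_0(\R)$. Since the linear span of such products is dense in $C_0(\Xi)=C_0(S^{d-1}\times\R)$ by Stone--Weierstrass, and $\lambda\in C_0(\Xi)'=\mm(\Xi)$, we conclude $\lambda=0$, i.e. $\mu=\mu'$. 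The main obstacle is the middle step: bridging the gap between $\cS_0'$ and $\cS'$ (the polynomial quotient) by exploiting that a bounded measure has finite total mass.
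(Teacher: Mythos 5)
Your proof is correct and rests on the same two facts as the paper's own argument: the annihilator of the Lizorkin space inside $\cS'$ consists of polynomials in $t$, and a finite measure cannot coincide with a nonzero polynomial. The paper's proof is a two-line version of this (it writes $\mu'=\mu+p$ in $\cS'(\Xi)$ with $p\in\pp(\R)$ and observes that $p=0$ by finiteness of the measures); your reduction to the one-dimensional marginals $m_\chi$, the translated-bump argument, and the closing Stone--Weierstrass step supply the details the paper leaves implicit, in particular handling cleanly the fact that on $\Xi$ the annihilator actually consists of polynomials in $t$ with $n$-dependent coefficients.
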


\begin{proof}
  Since  $ \cS_0'(\Xi) \simeq \cS'(\Xi) / \pp(\R) $ (see  \Cref{sec:radon}),
  the equality $\mu=\mu'$ in $\cS_0'(\Xi)$ means there exists a
  polynomial $ p \in \pp(\R) $ such that $ \mu' = \mu + p $ in
  $\cS'(\Xi)$. But $p$ must be $0$ since $ \mu, \mu' $ are
  finite measures. Hence, $ \mu' = \mu$ in $\cS'(\Xi)$ and, a
  fortiori, in $\mm(\Xi)$.
\end{proof}
 The next result shows that $\nor{\cdot}{\TV}$ is invariant under symmetrization. 

 \begin{lem}\label{lem:even-odd}
 Let $\mu\in\mm(\Xi)$. Then   
 \[
 \nor{\mu^\vee}{\TV}=\nor{\mu}{\TV}.
 \]
 \end{lem}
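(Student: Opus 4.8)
The plan is to use the supremum characterization~\eqref{eq:11} of the total variation norm together with the fact that the reflection $\iota:\Xi\to\Xi$, $\iota(n,t)=(-n,-t)$, is a homeomorphism of $\Xi$. First I would observe that $\iota$ is a continuous involution, hence a homeomorphism, and that $\mu^\vee$ is exactly the pushforward of $\mu$ under $\iota$: since $\iota^{-1}=\iota$, for every Borel set $E$ we have $\mu^\vee(E)=\mu(-E)=\mu(\iota^{-1}(E))=(\iota_*\mu)(E)$. Consequently, for every $\psi\in C_0(\Xi)$ the change-of-variables formula for pushforward measures yields
\[
\scal{\mu^\vee}{\psi}{}{}=\int_\Xi \psi\,\D\mu^\vee=\int_\Xi \psi\circ\iota\,\D\mu=\int_\Xi \psi^\vee\,\D\mu=\scal{\mu}{\psi^\vee}{}{},
\]
where $\psi^\vee=\psi\circ\iota$, and note that $\psi^\vee\in C_0(\Xi)$ because $\iota$ is a homeomorphism.

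Next I would record the two elementary facts about the map $\psi\mapsto\psi^\vee$ that make the argument work. It is a bijection of $C_0(\Xi)$ onto itself, being its own inverse since $\iota$ is an involution, and it is isometric for the sup norm: because $\iota$ is a bijection of $\Xi$, one has $\nor{\psi^\vee}{\infty}=\sup_{(n,t)\in\Xi}|\psi(-n,-t)|=\nor{\psi}{\infty}$. In particular $\psi\mapsto\psi^\vee$ maps the closed unit ball of $C_0(\Xi)$ bijectively onto itself.

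Combining these observations with~\eqref{eq:11}, I would then conclude by reindexing the supremum. Setting $\phi=\psi^\vee$, which ranges over the entire unit ball of $C_0(\Xi)$ as $\psi$ does, gives
\[
\nor{\mu^\vee}{\TV}=\sup_{\nor{\psi}{\infty}\leq 1}\scal{\mu^\vee}{\psi}{}{}=\sup_{\nor{\psi}{\infty}\leq 1}\scal{\mu}{\psi^\vee}{}{}=\sup_{\nor{\phi}{\infty}\leq 1}\scal{\mu}{\phi}{}{}=\nor{\mu}{\TV}.
\]
There is no real obstacle here: the entire content is that reflection is a measure-preserving change of variables, and the only points needing a word of justification are that $\psi^\vee$ remains in $C_0(\Xi)$ and has the same sup norm as $\psi$, both immediate from $\iota$ being a homeomorphism of $\Xi$ onto itself. (An equivalent route would be to pass to the total variation measure $|\mu|$ and check $|\mu^\vee|=|\mu|^\vee$, whence $\nor{\mu^\vee}{\TV}=|\mu|^\vee(\Xi)=|\mu|(-\Xi)=|\mu|(\Xi)=\nor{\mu}{\TV}$ using $-\Xi=\Xi$; I prefer the supremum version since it reuses~\eqref{eq:11} directly.)
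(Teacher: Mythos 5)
Your proof is correct and follows essentially the same route as the paper's: both establish the change-of-variables identity $\scal{\mu^\vee}{\psi}{}{}=\scal{\mu}{\psi^\vee}{}{}$ and then reindex the supremum in~\eqref{eq:11}, using that $\psi\mapsto\psi^\vee$ is a sup-norm isometric bijection of $C_0(\Xi)$. Your write-up merely spells out in more detail (via the pushforward under the reflection homeomorphism) the facts the paper takes as immediate.
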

 \begin{proof}
 Fix $\mu \in \mm(\Xi)$.
 By definition of $\mu^\vee$ and $\psi^\vee$, 
     \begin{equation}
 \int_{\Xi} \psi(n,t) \ \D\mu^\vee(n,t) =\int_{\Xi} \psi^\vee(n,t) \
 \D\mu(n,t).\label{eq:41}
 \end{equation}
 Indeed, using the above equality and
 $\|\psi^\vee\|_\infty=\|\psi\|_\infty$ for $\psi \in
 \operatorname{C}_0(\Xi)$,  we have
 \begin{align*}
   \nor{\mu^\vee}{\TV} & = \sup \{\scal{\mu^\vee}{\psi}{}{} \colon \psi\in\operatorname{C}_0(\Xi),\|\psi\|_\infty\leq 1 \} \\
   & = \sup \{\scal{\mu}{\psi^\vee}{}{} \colon \psi\in\operatorname{C}_0(\Xi) , \|\psi\|_\infty\leq 1\}  \\
   & = \sup \{\scal{\mu}{\psi}{}{} \colon \psi\in\operatorname{C}_0(\Xi) , \|\psi\|_\infty\leq 1\} =\nor{\mu}{\TV}. \qedhere
 \end{align*}
 \end{proof}

Equation \eqref{eq:17} shows that the functions $f\in \bb_m$ are
parametrized by the measures $\mu\in \mathcal M(\Xi)$. We now show
that  the even component of $\mu$ can by recovered by the Radon
trasform of $f$.
We recall that $\Lambda^{d-1}$ is the Fourier multiplier defined
by \eqref{eq:27} and \eqref{eq:La-ext}.
\begin{lem} \label{lem:green}
  For every $ f_\mu \in \bb_m$,
  \begin{equation}
    \label{eq:green}
   \frac{1}{2(2\pi)^{d-1}} \partial_t^{m}
 \Lambda^{d-1}\mathcal{R} f_\mu = \beta\ \frac{\mu+(-1)^m\mu^\vee }{2} ,
  \end{equation}
  where the equality holds in $\mathcal{S}_0'(\Xi)$.
\end{lem}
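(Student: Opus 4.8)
The plan is to prove the identity in $\cS_0'(\Xi)$ by testing both sides against Lizorkin test functions, after first matching parities. Since $f_\mu\in\cS_0'(\R^d)$ by \Cref{lem:growth}, the left-hand side is well defined. Its right-hand side equals $\beta\tau$ with $\tau=\tfrac12(\mu+(-1)^m\mu^\vee)$; because $\tau^\vee=(-1)^m\tau$ and $\beta^\vee=\beta$ by \eqref{eq:23}, we have $(\beta\tau)^\vee=(-1)^m\beta\tau$. On the left, $\rr f_\mu\in\cS_0'(\Xi)_{\rm even}$, the operator $\Lambda^{d-1}$ preserves this subspace (\Cref{lem:LaS_0}), and each factor $\partial_t$ reverses parity, so $\partial_t^{m}\Lambda^{d-1}\rr f_\mu$ again has parity $(-1)^m$. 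A distribution of fixed parity is determined by its pairings with test functions of the same parity, so it suffices to take $\psi\in\cS_0(\Xi)$ with $\psi^\vee=(-1)^m\psi$; for such $\psi$ the function $\partial_t^m\psi$ is \emph{even}, which is exactly the regularity needed to invoke the even-distributional definitions below.

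Next I would transfer all operators onto the test function. Using the distributional derivative (picking up a sign $(-1)^m$), the symmetry of $\Lambda^{d-1}$ in \eqref{eq:La-ext}, and the definition of the distributional Radon transform (\Cref{defn:radonanddualdistributions}), I obtain
\[
\Big\langle \partial_t^{m}\Lambda^{d-1}\rr f_\mu,\ \psi\Big\rangle=(-1)^m\big\langle f_\mu,\ \varphi\big\rangle,\qquad \varphi:=\rr^{*}\Lambda^{d-1}\partial_t^{m}\psi\in\cS_0(\R^d).
\]
Each step is licit precisely because $\partial_t^m\psi$ and $\Lambda^{d-1}\partial_t^m\psi$ are even Lizorkin functions, hence lie in the domains required by \eqref{eq:La-ext} and \Cref{defn:radonanddualdistributions}.

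Then I would evaluate $\langle f_\mu,\varphi\rangle$. Inserting the representation \eqref{eq:17} and interchanging the $x$- and $\mu$-integrations — which Fubini permits thanks to the bound \eqref{eq:21}, the decay \eqref{eq:20}, and the finiteness of $\mu$ — \Cref{prop:propReLUdirac} gives
\[
\langle f_\mu,\varphi\rangle=(-1)^m\int_{\Xi}\beta(n,t)\,\mathcal{A}^{m}(\rr\varphi)(n,t)\,\D\mu(n,t).
\]
The operator string then collapses: writing $g:=\Lambda^{d-1}\partial_t^m\psi\in\cS_0(\Xi)_{\rm even}$, the inversion formula \eqref{bagkprojectionformulaclassical2} yields $\Lambda^{d-1}\rr\rr^{*}g=2(2\pi)^{d-1}g=2(2\pi)^{d-1}\Lambda^{d-1}\partial_t^m\psi$, and the injectivity of $\Lambda^{d-1}$ on $\cS_0(\Xi)_{\rm even}$ (immediate from \Cref{lem:vanishingmomentsequivalent}, since on the $t$-Fourier side $\Lambda^{d-1}$ is multiplication by $c_d|\omega|^{d-1}$ acting on functions vanishing to infinite order at $\omega=0$) forces $\rr\varphi=\rr\rr^{*}g=2(2\pi)^{d-1}\partial_t^m\psi$. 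As $\mathcal{A}^m\partial_t^m=\mathrm{id}$ on $\cS_0(\R)$ (\Cref{prop:operatorA}), we get $\mathcal{A}^m\rr\varphi=2(2\pi)^{d-1}\psi$, and combining the displays gives $\big\langle \tfrac{1}{2(2\pi)^{d-1}}\partial_t^{m}\Lambda^{d-1}\rr f_\mu,\psi\big\rangle=\int_{\Xi}\beta\psi\,\D\mu$.

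Finally I would match the right-hand side: expanding $\tau$ and using \eqref{eq:41} together with $\beta^\vee=\beta$,
\[
\langle\beta\tau,\psi\rangle=\int_{\Xi}\beta\,\frac{\psi+(-1)^m\psi^\vee}{2}\,\D\mu=\int_{\Xi}\beta\psi\,\D\mu,
\]
the last equality because $\psi^\vee=(-1)^m\psi$. This coincides with the left-hand side for every admissible $\psi$, and by the parity reduction the identity holds in $\cS_0'(\Xi)$. I expect the main obstacle to be the operator-collapse step: keeping track of parities so that the even-distributional definitions of $\rr$ and $\Lambda^{d-1}$ legitimately apply, and justifying $\rr\rr^{*}\Lambda^{d-1}=2(2\pi)^{d-1}\mathrm{id}$ on $\cS_0(\Xi)_{\rm even}$ from the inversion formula \eqref{bagkprojectionformulaclassical2} via injectivity of $\Lambda^{d-1}$.
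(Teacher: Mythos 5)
Your proposal is correct and follows essentially the same route as the paper's proof: test against Lizorkin functions of parity $(-1)^m$, transfer $\partial_t^m$, $\Lambda^{d-1}$ and $\rr$ onto the test function by duality, apply \Cref{prop:propReLUdirac} under the integral, collapse the operator string via the inversion formula \eqref{bagkprojectionformulaclassical2} and $\mathcal{A}^m\partial_t^m=\mathrm{id}$, and match parities on the right-hand side using \eqref{eq:23} and \eqref{eq:41}. Your treatment is in fact slightly tidier in two spots where the paper is terse: you handle even and odd $m$ uniformly rather than by cases, and you justify $\rr\rr^*\Lambda^{d-1}\partial_t^m\psi=2(2\pi)^{d-1}\partial_t^m\psi$ explicitly via the injectivity of $\Lambda^{d-1}$ on $\cS_0(\Xi)_{\rm even}$, a step the paper applies without comment.
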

\begin{rmk} \label{rmk:parity}
Observe that $\Lambda^{d-1}\mathcal{R} f_\mu $ is an even distribution
on $\Xi$. Furthermore, it is easy to check that
  \begin{equation*}
\partial_t^m \mathcal{S}_0'(\Xi)_{\rm even} \subseteq
\begin{cases}
  \mathcal{S}_0'(\Xi)_{\rm even} & \text{if $m$ is even} \\
  \cS_0'(\Xi)_{\rm odd} & \text{if $m$ is odd}
\end{cases} . 
\end{equation*}
By~\eqref{eq:23} $\beta$ is even, so that $\beta\
(\mu+(-1)^m\mu^\vee )/2$ has the right parity.
 Without condition~\eqref{eq:23}, the statement of~\Cref{lem:green}
 holds true provided that the right hand side of~\eqref{eq:green} is
 replaced with   $(\beta\mu+(-1)^m \beta^\vee\mu^\vee )/2$, which would make
 the decomposition of~\eqref{eq:35} more involved. \end{rmk}
 
\begin{proof}
  Assume first that $m$ is even. As observed in \Cref{rmk:parity}, both sides
  of~\eqref{eq:green}  are even distributions. Thus, it is enough to check
  the equality on $\psi\in \mathcal{S}_0(\Xi)_{\rm even}$. We have
 \begin{align*}
  {}_{\cS_0'(\Xi)}\langle   \partial_t^{m} \Lambda^{d-1}\mathcal{R} f_\mu , \psi\rangle_{\cS_0(\Xi)}
  & = (-1)^m{}_{\cS_0'(\R^d)}\langle f_\mu ,  \mathcal{R}^*\Lambda^{d-1}\partial_t^{m} \psi \rangle_{\cS_0(\R^d)} \\
  & = (-1)^m\int_{\R^d} f_\mu(x) \ \mathcal{R}^*\Lambda^{d-1}\partial_t^{m} \psi(x) \ \D x \\
  & = (-1)^m\int_{\R^d}\left(\int_{\Xi} \rho_m(x,n,t) \ \D\mu(n,t) \right) \mathcal{R}^*\Lambda^{d-1}\partial_t^{m}  \psi(x) \ \D x \\
  & = (-1)^m\int_{\Xi} \int_{\R^d} \rho_m(x,n,t) \ \mathcal{R}^*\Lambda^{d-1}\partial_t^{m}  \psi(x) \ \D x \ \D\mu(n,t) \\
    & = (-1)^m\int_{\Xi} \langle \rho_m(\cdot,n,t), \mathcal{R}^*\Lambda^{d-1}\partial_t^{m} \psi \rangle \ \D\mu
    (n,t)
  .
\end{align*}
\Cref{prop:propReLUdirac}, the  inversion
formula~\eqref{bagkprojectionformulaclassical2}
and~\eqref{eq:operatorA} give that, for every $(n,t)\in\Xi$,
 \begin{align*}
  \langle \rho_m(\cdot,n,t), \mathcal{R}^*\Lambda^{d-1}\partial_t^{m}
   \psi \rangle
&=(-1)^m \beta(n,t) \mathcal{A}^m\mathcal{R}\mathcal{R}^*\Lambda^{d-1}\partial_t^{m} \psi \\
&=(-1)^m 2(2\pi)^{d-1} \beta(n,t)\mathcal{A}^m\partial_t^{m} \psi \\
&=(-1)^m  2(2\pi)^{d-1} \beta(n,t) \psi(n,t) .
 \end{align*} 
Thus, taking into account that both $\beta$ (see~\eqref{eq:23}) and $\psi$ are even
functions, we obtain
\begin{align*}
 {}_{\cS_0'(\Xi)}\langle   \partial_t^{m}
  \Lambda^{d-1}\mathcal{R} f_\mu , \psi \rangle_{\cS_0(\Xi)} 
  & = 2(2\pi)^{d-1}  \int_{\Xi} \beta(n,t)\psi(n,t) \ \D\mu(n,t) \\
    & = 2(2\pi)^{d-1} \int_{\Xi} \beta(n,t)\psi(n,t) \ \D\mu_{\rm even}(n,t) \\
 & =2(2\pi)^{d-1}  {}_{\cS_0'(\Xi)}\langle \beta\ \mu_{\rm even}, \psi \rangle_{\cS_0(\Xi)},
 \end{align*}
 which proves~\eqref{eq:green} for even $m$.
 If $m$ is odd, the proof is very similar, observing that both sides
  of~\eqref{eq:green}  are odd distributions, and thus checking
  the equality on $\psi\in \cS_0(\Xi)_{\rm odd}$.
  Furthermore,  $\partial_t^{m}
  \psi$ is an even function, so that $\partial_t^{m}
  \psi\in\cS_0(\P^d)$,  and $\beta\psi$ is an odd function, so that 
  \[
\int_{\Xi} \beta(n,t)\psi(n,t) \ \D\mu(n,t)=\int_{\Xi}
\beta(n,t)\psi(n,t) \ \D\mu_{\rm odd}(n,t) . \qedhere
    \]
\end{proof}
The map $\mu\mapsto f_\mu$ is not injective and next result
characterizes its kernel.
\begin{lem}\label{support}
Let $\mu\in  \mm(\Xi)$. Then:
  \begin{enumerate}[label=\textnormal{(\roman*)}] 
  \item\label{item:1} if $f _\mu=0$,  then
    \[
      \mu^\vee = (-1)^{m+1} \mu \qquad \Longleftrightarrow\qquad \mu\in
      \begin{cases}
        \cS_0'(\Xi)_{\rm odd}  & \text{ if $m$ is even} \\
        \cS_0'(\Xi)_{\rm even}  & \text{ if $m$ is odd} 
      \end{cases};
\]
\item\label{item:2} if  $\mu^\vee = (-1)^{m+1} \mu$,  then $f_\mu$ is a polynomial
  of degree less than $m$. 
\end{enumerate}
Furthermore,  
\[
\mathcal P_m=\{p:\R^d\to\R\colon p \text{ is a polynomial of degree at most }m-1\} ,
\]
where $ \mathcal P_m $  is the space defined in \Cref{main2}.
\end{lem}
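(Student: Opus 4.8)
The plan is to prove \ref{item:1} with \Cref{lem:green} and to prove \ref{item:2} together with the identification of $\mathcal P_m$ by a direct computation resting on the elementary identity
\begin{equation*}
\sigma_m(s) + (-1)^{m+1}\sigma_m(-s) = \frac{s^{m-1}}{(m-1)!}, \qquad s\in\R,
\end{equation*}
which one checks by distinguishing the cases $s\ge 0$ and $s<0$ and using $(-1)^{m+1}=(-1)^{m-1}$. The whole argument turns the parity condition $\mu^\vee=(-1)^{m+1}\mu$ into the statement that the $(-1)^m$-parity part $\tau=\tfrac12(\mu+(-1)^m\mu^\vee)$ of $\mu$ vanishes.

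For \ref{item:1}, I would assume $f_\mu=0$. Then \Cref{lem:green} gives $\beta\,\tau=\tfrac{1}{2(2\pi)^{d-1}}\partial_t^m\Lambda^{d-1}\mathcal{R} f_\mu=0$ in $\cS_0'(\Xi)$. Since $\beta\in C_0(\Xi)$ is bounded and $\tau\in\mm(\Xi)$, the product $\beta\tau$ is a finite measure, so \Cref{embedding} promotes this to $\beta\tau=0$ in $\mm(\Xi)$, i.e. $\int_\Xi\beta\,\D|\tau|=0$. As $\beta>0$ everywhere by \eqref{eq:30}, this forces $|\tau|=0$, hence $\tau=0$ and $\mu^\vee=(-1)^{m+1}\mu$ in $\mm(\Xi)$. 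Finally, because $\mu$ and $\mu^\vee$ are finite measures, \Cref{embedding} shows that this equality in $\mm(\Xi)$ is equivalent to the same equality in $\cS_0'(\Xi)$, that is, to $\mu\in\cS_0'(\Xi)_{\rm odd}$ when $m$ is even and $\mu\in\cS_0'(\Xi)_{\rm even}$ when $m$ is odd, which is exactly the stated dichotomy.

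For \ref{item:2}, I would assume $\mu^\vee=(-1)^{m+1}\mu$. Combining \eqref{eq:41} with this parity yields $\int_\Xi g\,\D\mu=(-1)^{m+1}\int_\Xi g^\vee\,\D\mu$ for every $g$; applying it to $g(n,t)=\sigma_m(n\cdot x-t)\beta(n,t)$ and using that $\beta$ is even by \eqref{eq:23}, so that $g^\vee(n,t)=\sigma_m(-n\cdot x+t)\beta(n,t)$, and then averaging the two resulting expressions for $f_\mu(x)$, I obtain
\begin{equation*}
f_\mu(x) = \frac{1}{2(m-1)!}\int_\Xi (n\cdot x-t)^{m-1}\,\beta(n,t)\,\D\mu(n,t).
\end{equation*}
For each fixed $x$ this integral converges by \eqref{eq:20} and the finiteness of $\mu$, exactly as in \Cref{lem:growth}, and expanding $(n\cdot x-t)^{m-1}$ in powers of $x$ exhibits $f_\mu$ as a polynomial of degree at most $m-1$.

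It remains to identify $\mathcal P_m$, where $\mathcal P_m$ is a linear subspace of $\bb_m$ since the parity condition $\nu^\vee=(-1)^{m+1}\nu$ cuts out a subspace of $\mm(\Xi)$ and $\nu\mapsto f_\nu$ is linear. The inclusion $\mathcal P_m\subseteq\{p:\deg p\le m-1\}$ is immediate from \ref{item:2}. For the converse, for $(n,t)\in\Xi$ the measure $\nu_{n,t}=\delta_{(n,t)}+(-1)^{m+1}\delta_{(-n,-t)}$ satisfies $\nu_{n,t}^\vee=(-1)^{m+1}\nu_{n,t}$, and by the displayed formula (or a direct evaluation using \eqref{eq:23}) one gets $f_{\nu_{n,t}}(x)=\tfrac{\beta(n,t)}{(m-1)!}(n\cdot x-t)^{m-1}$; since $\beta(n,t)>0$ this places $(n\cdot x-t)^{m-1}$ in $\mathcal P_m$ for every $(n,t)$. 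I expect the only genuinely non-routine step to be showing that these functions span all polynomials of degree at most $m-1$: fixing $n$ and letting $t$ range over $m$ distinct values, a Vandermonde argument puts each power $(n\cdot x)^j$, $0\le j\le m-1$, in the span, and for each fixed $j$ the powers $\{(n\cdot x)^j:n\in S^{d-1}\}$ span the homogeneous polynomials of degree $j$, because a polynomial $\sum_\alpha a_\alpha\binom{j}{\alpha}n^\alpha$ in $n$ that vanishes for all $n\in S^{d-1}$ vanishes identically and hence has all coefficients zero. Taking the union over $0\le j\le m-1$ gives the reverse inclusion, so $\mathcal P_m=\{p:\deg p\le m-1\}$.
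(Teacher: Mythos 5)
Your proof is correct, but for part \ref{item:2} it takes a genuinely different route from the paper. For \ref{item:1} you and the paper do essentially the same thing: apply \Cref{lem:green} to get $\beta\tau=0$ in $\cS_0'(\Xi)$ with $\tau=\tfrac12(\mu+(-1)^m\mu^\vee)$, then use the strict positivity \eqref{eq:30} and \Cref{embedding} to conclude $\tau=0$ in $\mm(\Xi)$ (you merely swap the order of the last two steps, which if anything is slightly cleaner, since you divide by $\beta$ at the level of measures rather than distributions). For \ref{item:2}, however, the paper stays inside its Radon machinery: from $\tau=0$ and \eqref{eq:green} it gets $\partial_t^m\Lambda^{d-1}\rr f_\mu=0$, uses injectivity of $\partial_t^m$ via \eqref{eq:operatorAdistributions} and the inversion formula of \Cref{cor:inversionformuladistributions} to conclude $f_\mu=0$ in $\cS_0'(\R^d)$, hence $f_\mu$ is a polynomial, and then invokes the growth bound \eqref{eq:19} to cap the degree at $m-1$. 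You instead symmetrize the integrand directly, using the parity of $\mu$, the evenness \eqref{eq:23} of $\beta$, and the pointwise identity $\sigma_m(s)+(-1)^{m+1}\sigma_m(-s)=s^{m-1}/(m-1)!$ to obtain the explicit formula $f_\mu(x)=\tfrac{1}{2(m-1)!}\int_\Xi(n\cdot x-t)^{m-1}\beta(n,t)\,\D\mu(n,t)$. This is more elementary (no distributional Radon inversion needed for this part), and it yields the degree bound for free from the explicit expansion rather than via the growth estimate; its only extra cost is the integrability check against \eqref{eq:20}, which you handle. For the identification of $\mathcal P_m$ both arguments coincide in substance (the same measures $\delta_{(n,t)}+(-1)^{m+1}\delta_{(-n,-t)}$ produce the ridge monomials $(n\cdot x-t)^{m-1}$), except that you actually prove the spanning claim via a Vandermonde argument in $t$ and a duality argument on $S^{d-1}$, where the paper simply cites it as known.
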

\begin{proof}
  Let $\tau=(\mu+(-1)^m \mu^\vee)/2$.  If $f_\mu=0$,
  then~\eqref{eq:green} implies that $\beta \tau=0$ in $\cS_0'(\Xi)$
  and,  by~\eqref{eq:30},  $\tau=0$ in $\cS_0'(\Xi)$ and, by
  \Cref{embedding}, $\tau=0$ in $\mm(\Xi)$.
  
  Assume that $\tau=0$.  Then~\eqref{eq:green} gives that 
$$
\partial_t^{m} \Lambda^{d-1}\mathcal{R} f_\mu = 0
$$
in  $\cS_0'(\Xi)$. Equation~\eqref{eq:operatorAdistributions} implies
that $\partial_t^{m}$ is injective, so that $\Lambda^{d-1}\mathcal{R} f_\mu
= 0$   in  $\cS_0'(\Xi)$. By construction
$\Lambda^{d-1}\mathcal{R} f_\mu\in \cS_0'(\Xi)_{\rm even}$. 
Then, by~\Cref{cor:inversionformuladistributions}, we have that
\begin{equation*}
f_\mu=\frac{1}{2(2\pi)^{d-1}}\rr^*\Lambda^{d-1}\rr f_\mu=0 \quad \text{in\quad $\cS_0'(\R^d) $},
\end{equation*}
or equivalently, there exists $p\in\mathcal{P}(\R)$ such that $f_\mu=p$ in $\cS'(\R^d) $. Hence, 
$$
f_\mu(x)=p(x)
$$
for almost every $x\in\R^d$, and thus for every $x\in\R^d$ by
continuity. But since the elements of $\bb_m$
are functions of at most $m-1$ polynomial growth (see \eqref{eq:19}),
we obtain that $f_\mu$ is a polynomial of degree less than $m$.
We now  prove the last claim.

By \cref{item:2}, $\mathcal   P_m$
is a subspace of the finite-dimensional vector space
of polynomials of degree smaller than $m$. 
Now, let $ \nu = (\delta_{(n,t)} + (-1)^{m+1}  \delta_{(-n,-t)} ) / 2 $ with $(n,t)\in\Xi$. Then, by \eqref{eq:17}  and \eqref{eq:23},
   \begin{align*}
   f_\nu (x) =
\int_{\Xi} \si_m(n' \cdot x - t') \beta(n',t') \ \D\nu(n',t')
     & = \beta(n,t) \frac{ (n\cdot x-t)^{m-1}}{2(m-1)!},
   \end{align*}
where in the last equality we used
\[
\max\{0,t\}^{m-1}+(-1)^{m+1} \max\{0,-t\}^{m-1}=t^{m-1}.
\]
Then 
\[
\operatorname{span}\{(n\cdot x-t)^{m-1}\colon (n,t)\in \Xi\} \subseteq \mathcal P_m.
\]
However, it is known that the left hand side of the above inequality is the space of polynomials of degree less or equal $m-1$, so that the claim is proved.
\end{proof} 
We are now ready to prove \Cref{main2} and \Cref{without}.
\begin{proof}[Proof of~\Cref{main2}]
We prove the statements for an even $m$ (if $m$ is odd the proof is similar).
We regard $\mathcal Q_m $ and $\mathcal P_m$  as reproducing kernel Banach spaces with the norms
 \begin{subequations}
 \begin{align}
 \| f \|_{\mathcal Q_m} & = \inf \{ \| \mu \|_{\TV} : \mu \in \mm(\Xi),  \mu^\vee=(-1)^m\mu, f
                     = f_\mu \}, \label{eq:1001} \\
    \| f \|_{\mathcal P_m} & = \inf \{ \| \mu \|_{\TV} : \mu \in \mm(\Xi),  \mu^\vee=(-1)^{m+1}\mu, f
                     = f_\mu \}.\label{eq:1002} 
  \end{align}  
 \end{subequations}
Note that in principle these norms induce respectively on $\mathcal Q_m$ and $\mathcal P_m$ a finer topology than the one induced by the norm~$\nor{\cdot}{\bb_m}$.
Fix $f\in\bb_m$.  By~\eqref{eq:16}, there exists $\mu\in\mm(\Xi)$
such that $f=f_\mu$. Define
\[
\tau= \frac{\mu+\mu^\vee }{2} \in\mm(\Xi)_{\rm
  even}  , \qquad \nu =\frac{\mu- \mu^\vee }{2} \in\mm(\Xi)_{\rm odd},
  \]
and compare with~\eqref{eq:38} taking into account that $m$ is even.
By linearity of the representation~\eqref{eq:17},
\[
f= f_{\tau} + f_{\nu},
\]
whereas~\cref{item:1} of~\Cref{support} gives
      \begin{equation}
\mathcal Q_m  \cap \mathcal P_m =\{0\}\label{eq:48},
\end{equation}
so that 
\[
\bb_m= \mathcal Q_m + \mathcal P_m,
    \]
and
\begin{equation}\label{eq:1003}
   f_{\tau} =P_{\mathcal Q_m} f,  \qquad  f_\nu= P_{\mathcal P_m} f,
 \end{equation}
 which shows \cref{item:6}.
The fact that $\mathcal   P_m $ is the space of polynomials of degree less or equal $m-1$ is
is the content of~\cref{item:2} of~\Cref{support}, whereas  
\cref{item:5} is the content of \cref{item:3} of \Cref{lem:growth}.
Since $\tau$ is the even part of $\mu$, ~\eqref{eq:green} gives
\[
\frac{1}{2 (2\pi)^{d-1}\beta}\partial_t^{m} \Lambda^{d-1}\mathcal{R} f
=\frac{\mu+\mu^{\vee}}{2} =\tau,
  \]
  hence \eqref{eq:24} holds true.

If  $f=f_{\mu'}$ for another $\mu'\in\mm(\Xi)$, by \Cref{support} we have
\[
\mu'= \tau + \nu' , \qquad  \tau=\frac{\mu'+(\mu')^\vee}{2},\qquad f_{\nu'}= f_\nu,
\]
for some  odd measure $\nu'$. 
Taking into account the above equalities, \eqref{eq:normrkbsrelu} gives
  \begin{align}
    \nor{f}{\bb_m} & =\inf \{ \| \tau+\nu'\|_{\TV} : \nu'\in
                     \mm(\Xi)_{\rm odd}  , f_{\nu'}=f_\nu  \} \nonumber\\
& \leq \inf\{ \nor{ \tau }{\TV} +\nor{ \nu'}{\TV}
                     \colon \nu'\in\mm(\Xi)_{\rm odd},
                     f_{\nu'}= f_\nu\} \nonumber\\
    & = \nor{ \tau }{\TV}  +\inf\{  \nor{ \nu'}{\TV}\colon 
                      \nu'\in\mm(\Xi)_{\rm odd}\nonumber,
      f_{\nu'}=f_\nu\} \\
& = \nor{f_\tau}{\mathcal Q_m} +  \nor{f_\nu}{\mathcal P_m} \label{eq:50},
  \end{align}
where the second inequality is a consequence of the triangular inequality, the third one is due to the fact that $\tau$ is even and $\nu'$ is odd, 
and the last equality is a consequence of~\eqref{eq:1001} and~\eqref{eq:1002} observing that 
$\tau$ is the unique even measure such that $f_\tau=P_{\mathcal Q_m} f$, so that
\begin{equation}
    \nor{f_\tau}{\mathcal Q_m} =\nor{\tau}{\TV} \label{eq:1005}.
\end{equation}
Furthermore, by Lemma~\ref{lem:even-odd} we have that
\[
\nor{f_\tau}{\mathcal Q_m}\leq \nor{\frac{\mu'+(\mu')^\vee}{2}}{\TV} \leq \nor{\mu'}{\TV}, \qquad \nor{f_{\nu}}{\mathcal P_m}\leq \nor{\frac{\mu'-(\mu')^\vee}{2}}{\TV} \leq \nor{\mu'}{\TV}.
\] 
Therefore, taking the infimum over all measures $\mu'$ such that $f_{\mu'}=f$, we get
 \begin{equation}
 \nor{f_\tau}{\mathcal Q_m}\leq \nor{f}{\bb_m}, \qquad \nor{f_\nu}{\mathcal P_m}\leq \nor{f}{\bb_m} \label{eq:strange},
 \end{equation}
which, together with \eqref{eq:50}, gives 
\begin{equation}\label{eq:inequalityalmostfinal}
\nor{f}{\bb_m}\leq \nor{f_\tau}{\mathcal Q_m} +  \nor{f_\nu}{\mathcal P_m}\leq 2\nor{f}{\bb_m}.
\end{equation}

If $f\in\mathcal Q_m$, then $f=f_\tau$ and by equations~\eqref{eq:inequalityalmostfinal} and~\eqref{eq:strange} we have that
\[
\nor{f}{\bb_m}\leq \nor{f}{\mathcal Q_m}\leq \nor{f}{\bb_m}.
\]
So that, by~\eqref{eq:1005}
\[
\nor{f}{\bb_m}= \nor{f}{\mathcal Q_m} =\nor{\tau}{\TV},
\]
which proves~\eqref{eq:46}. If
$f\in\mathcal P_m$, then $\tau=0$ and, as above, 
\[
\nor{f}{\bb_m} = \nor{f}{\mathcal P_m} =\inf\{  \nor{ \nu}{\TV}\colon \nu\in\mm(\Xi)_{\rm odd}, f_\nu=f\},
  \]
which is~\eqref{eq:47}. Finally, \eqref{eq:46} and \eqref{eq:47} together with \eqref{eq:inequalityalmostfinal} give equation~\eqref{eq:33}. This also implies that
$\mathcal Q_m$
and $\mathcal P_m$ are closed subspaces of
$\bb_m$. 

We finally prove~\cref{item:4}. Fix a distribution $T$ as in the
statement. By assumption~\eqref{eq:100} and~\Cref{embedding}, there exists a unique  even measure $\tau$ such that 
\begin{align*}
 \tau  =\frac{1}{2(2\pi)^{d-1}\beta}\partial_t^{m} \Lambda^{d-1}\mathcal{R} T ,
\end{align*}
hence $f_\tau\in
\mathcal Q_m$.  Equation \eqref{eq:101}
ensures that there exists $\nu\in\mm(\Xi)_{\rm odd}$ such that   
$T-f_\tau=f_\nu$.  Setting $\mu=\tau+\nu$, we get
\[
 T-f_\mu = (T- f_\tau)- f_\nu = 0,
\]
which proves \ref{item:4}.
\end{proof} 

\begin{proof}[Proof of~\Cref{without}]
  Reasoning as in the last part of the previous proof,
  and again assuming that $m$ is even,
  \eqref{eq:green} implies that
\[
\partial_t^{m} \Lambda^{d-1}\mathcal{R} (T - f_\tau)=0
\]
in $\cS_0'(\Xi)_{\rm even}$. The injectivity of the operator $\partial_t^{m}
\Lambda^{d-1}\mathcal{R}$ gives that $(T - f_{\tau})=0$ in $\cS_0'(\R^d)$,
{\it i.e.} there exists a polynomial $p$ such that 
$
T - f_\tau=p
$
in $\cS'(\R^d)$. 
\end{proof}

\appendix

\section{Sparse solutions in variational problems}\label{sec:bredies-carioni} \label{appendix}
In this section we collect some results from~\cite{MR4040623} that we use
in our paper. 
We start recalling the definition of extremal point.
\begin{dfn}\label{dfn:extr-points-theor}
Let $Q$ be a convex subset of a locally convex space.
A point $ q \in Q $ is called extremal if $ Q \setminus \{ q \} $ is convex.
We denote the set of extremal points of $Q$ by $ \Ext ( Q )$.
\end{dfn}
While extremal points are difficult to characterize in general,
the following result is fairly standard (see \cite[Proposition
4.1]{MR4040623}).   We report the proof for the reader's convenience.
\begin{lem} \label{lem:Ext(B)}
Let $\Th$ be a (Hausdorff) locally compact second countable topological space,
and let
$$
B = \{\mu\in \mm(\Th): \|\mu\|_{\TV} \le 1\}
$$
be the unit ball in $\mm(\Th)$ associated with the total variation norm.
Then
$$ 
\Ext ( B ) = \{ \pm \de_{\th} : \th \in \Th \} .
$$
\end{lem}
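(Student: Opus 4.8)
The plan is to prove the two inclusions separately, after recording that the definition of extremality in~\Cref{dfn:extr-points-theor} coincides with the usual one: a point $q\in Q$ fails to be extremal exactly when $q=t q_1+(1-t)q_2$ for some $t\in(0,1)$ and some $q_1\ne q_2$ in $Q$. Indeed, such a representation forces $q_1,q_2\in Q\setminus\{q\}$ (if $q_1=q$ then $q=q_2=q_1$, a contradiction) with the connecting segment leaving $Q\setminus\{q\}$; conversely, since $Q$ is convex, any segment witnessing non-convexity of $Q\setminus\{q\}$ must pass through $q$.

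First I would check that $\pm\de_\th$ are extremal. Suppose $\de_\th = t\mu_1 + (1-t)\mu_2$ with $t\in(0,1)$ and $\mu_1,\mu_2\in B$. Since $\Th$ is Hausdorff, $\{\th\}$ is closed, hence Borel, and $\mu_i(\{\th\})\le |\mu_i|(\{\th\}) \le \nor{\mu_i}{\TV}\le 1$. Evaluating on $\{\th\}$ gives
\[
1 = t\,\mu_1(\{\th\}) + (1-t)\,\mu_2(\{\th\}) \le t + (1-t) = 1,
\]
forcing $\mu_i(\{\th\})=1$ and, in turn, $|\mu_i|(\{\th\}) = |\mu_i|(\Th)=1$. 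Hence each $|\mu_i|$ is concentrated at $\th$, so $\mu_i=\pm\de_\th$, and $\mu_i(\{\th\})=1$ selects $\mu_i=\de_\th$. Thus $\mu_1=\mu_2=\de_\th$, and the case $-\de_\th$ follows by replacing $\mu_i$ with $-\mu_i$.

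For the reverse inclusion I would first note that an extremal $\mu$ must satisfy $\nor{\mu}{\TV}=1$: if $\nor{\mu}{\TV}<1$, then for small $r>0$ one has $\nor{\mu\pm r\de_{\th_0}}{\TV}\le 1$, so $\mu=\tfrac12(\mu+r\de_{\th_0})+\tfrac12(\mu-r\de_{\th_0})$ is a nontrivial convex combination (and $0=\tfrac12\de_{\th_0}+\tfrac12(-\de_{\th_0})$ handles $\mu=0$). Assume then $\nor{\mu}{\TV}=1$ and $\mu\ne\pm\de_\th$ for all $\th$. The crux is to produce a Borel set $E$ with $0<|\mu|(E)<1$. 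Granting this, put
\[
\mu_1 = |\mu|(E)^{-1}\,\mu(\,\cdot\,\cap E),\qquad \mu_2 = (1-|\mu|(E))^{-1}\,\mu(\,\cdot\,\cap(\Th\setminus E)),
\]
so that $\nor{\mu_1}{\TV}=\nor{\mu_2}{\TV}=1$, the two nonzero measures live on disjoint sets (hence differ), and $\mu = |\mu|(E)\,\mu_1 + (1-|\mu|(E))\,\mu_2$ contradicts extremality.

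The main obstacle is the existence of such $E$, i.e. showing that if every Borel set has $|\mu|$-measure in $\{0,1\}$ then $\mu=\pm\de_\th$. Here I would exploit that $\Th$ is second countable and Hausdorff: $|\mu|$ is a $\{0,1\}$-valued Borel probability measure, so letting $G$ be the union of all basic open sets of $|\mu|$-measure $0$ (a countable union, hence $|\mu|(G)=0$), the measure is carried by $\Th\setminus G$. If $\Th\setminus G$ contained two distinct points, Hausdorffness together with the countable base would supply disjoint basic open neighbourhoods, each necessarily of measure $1$, contradicting $|\mu|(\Th)=1$; therefore $|\mu|=\de_{\th_0}$ for a single $\th_0$, whence $\mu=\pm\de_{\th_0}$. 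This completes the characterization $\Ext(B)=\{\pm\de_\th:\th\in\Th\}$.
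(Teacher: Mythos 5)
Your proof is correct and follows essentially the same route as the paper's: both directions are handled by the same decompositions, with $\pm\delta_\theta$ shown extremal by forcing the total variations of $\mu_1,\mu_2$ to concentrate at $\theta$, and the reverse inclusion obtained by splitting $\mu$ along a Borel set $E$ with $|\mu|(E)\in(0,1)$. In fact your write-up is slightly more complete than the paper's on two points it leaves implicit, namely why an extremal point must have unit norm and why such a set $E$ exists when $\mu\ne\pm\delta_\theta$ (your support argument via the countable base is exactly where second countability and the Hausdorff property enter).
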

\begin{proof}
We start showing that $ \{ \pm \de_{\th} : \th \in \Th \}\subseteq\Ext ( B )$. Let $\th\in\Th$ and $\alpha\in\{-1,1\}$. We suppose that there exist $t\in(0,1)$, $\mu_1,\mu_2\in B$ such that 
\begin{equation}\label{eq:combinationDiracdelta}
\alpha\de_{\th}=t \mu_1+(1-t) \mu_2,
\end{equation}
and we want to show that necessarily $\alpha\de_{\th}=\mu_1=\mu_2$. We observe that the total variation measures $|\mu_1|$, $|\mu_2|$ are probability measures. Indeed, if we suppose on the contrary that $\|\mu_1\|_{\TV},\|\mu_2\|_{\TV}<1$, then 
\[
\|\alpha\de_{\th}\|_{\TV}\leq t \|\mu_1\|_{\TV}+(1-t) \|\mu_2\|_{\TV}<1,
\]
which yields a contradiction. Furthermore, 
\[
\de_{\th}=t |\mu_1|+(1-t) |\mu_2|.
\]
Indeed, we  first  observe that $(t |\mu_1|+(1-t) |\mu_2|)(\Th)=1$ and 
\[
\de_{\th}=|\de_{\th}|\leq t |\mu_1|+(1-t) |\mu_2|.
\]
Then, for every Borel set $E\subseteq\Th$, if $\th\in E$
$$
1=\de_{\th}(E)\leq (t |\mu_1|+(1-t) |\mu_2|)(E)\leq 1,
$$
and if $\th\in \Th\setminus E$
$$
(t |\mu_1|+(1-t) |\mu_2|)(E)=(t |\mu_1|+(1-t) |\mu_2|)(\Th)-(t |\mu_1|+(1-t) |\mu_2|)(\Th\setminus E)=0.
$$
Therefore, $ |\mu_1|= |\mu_2|=\de_{\th}$, which implies $\mu_1=\alpha_1 \de_{\th}$ and $\mu_2=\alpha_2 \de_{\th}$ with $|\alpha_1|=|\alpha_2|=1$, and equation \eqref{eq:combinationDiracdelta} becomes 
\begin{equation}\label{eq:combinationDiracdelta2}
\alpha\de_{\th}=(t \alpha_1 +(1-t) \alpha_2) \de_{\th}.
\end{equation}
Since $\alpha,\alpha_1,\alpha_2\in\{-1,1\}$, equation \eqref{eq:combinationDiracdelta2} is satisfied if and only if $\alpha=\alpha_1=\alpha_2$. So that, $\alpha\de_{\th}=\mu_1=\mu_2$, and then $\alpha\de_{\th}\in \Ext ( B )$. It remains to prove the opposite inclusion $\Ext ( B )\subseteq \{ \pm \de_{\th} : \th \in \Th \}$. We suppose that there exists $\mu\in\mm(\Th)$ such that $\mu\notin\{ \pm \de_{\th} : \th \in \Th \}$ but $\mu\in\Ext ( B )$. Then, $\|\mu\|_{\TV}=1$. We denote by $\chi_E$ the indicator function on a subset $E\subseteq \Th$. For every Borelian set $E$ such that $|\mu|(E)\in(0,1)$, we can rewrite $\mu$ as the linear combination
\[
\mu=\mu\cdot \chi_E + \mu\cdot \chi_{\Th\setminus E}=t\frac{\mu\cdot \chi_E}{|\mu|(E)}+(1-t)\frac{\mu\cdot \chi_{\Th\setminus E}}{|\mu|(\Th\setminus E)},
\]
where $t=|\mu|(E)\in(0,1)$. Since $\mu\notin\{ \pm \de_{\th} : \th \in \Th \}$, then it is possible to find a Borelian set $E$ such that $\mu\ne |\mu|(E)^{-1}\mu\cdot \chi_E$ and $\mu\ne |\mu|(\Th\setminus E)^{-1}\mu\cdot \chi_{\Th\setminus E}$. This shows that there exist $t\in(0,1)$, $\mu_1,\mu_2\in B$ such that $\mu=t \mu_1+(1-t) \mu_2$, which yields a contradiction. Therefore, we have shown that $\Ext ( B_{ \TV }(1) )\subseteq \{ \pm \de_\th : \th \in \Th \}$, which concludes the proof.
\end{proof}

To establish our representer theorem we recall the following  known result.
\begin{thm}[{\cite[Theorem 3.3]{MR4040623}}] \label{thm:bredies-carioni}
Consider the problem
\begin{equation} \label{eq:problem0}
 \inf_{u \in U} F( \aa u ) + G(u) ,
\end{equation}
where
$U$ is a locally convex topological vector space,
$ \aa : U \to H  $ is a continuous, surjective linear map with values in a finite-dimensional Hilbert space $H$,
$ F : H \to (-\infty,+\infty] $
is proper, convex, coercive and lower semi-continuous,
and $ G : U \to [0,+\infty) $ is a coercive and lower semi-continuous norm.
Then \eqref{eq:problem0} has solutions of the form
$ \sum_{i=1}^K \ga_i u_i $
with $ K \le \dim H $, $ \ga_i > 0 $, $ \sum_{i=1}^K \ga_i = G(u)  $,
and $ u_i \in \Ext( \{ u \in U : G(u) \le 1 \} ) $.
\end{thm}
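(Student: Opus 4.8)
The plan is to recover this result by the direct method followed by a reduction to finite dimensions: first secure existence of a minimizer, then carve out a sparse one by combining an integral (Choquet) representation over the extremal points of the norm ball with Carathéodory's theorem applied in the finite-dimensional data space $H$. First I would establish existence. Since $F$ is proper, convex, coercive and lower semi-continuous on the finite-dimensional $H$, it is bounded below, say $F \ge c$. Hence along any minimizing sequence $(u_k)$ the values $G(u_k)$ stay bounded by $m^* - c$, where $m^* = \inf_U (F\circ\aa + G)$; thus the $u_k$ lie in a fixed sublevel set $\{G \le R\}$, which is compact because $G$ is a coercive norm. The objective $F\circ\aa + G$ is lower semi-continuous ($\aa$ is continuous, $F$ and $G$ are l.s.c.), so passing to a convergent subnet yields a minimizer. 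Moreover the solution set $\mathcal S = \{u : F(\aa u)+G(u) = m^*\}$ is convex (the objective is convex) and, being a closed subset of $\{G \le R\}$, compact; in particular it is a nonempty compact convex set.

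Next I would produce the sparse solution. Fix any $u^* \in \mathcal S$ and set $t^* = G(u^*)$; if $t^* = 0$ then $u^* = 0$ and the empty combination works, so assume $t^* > 0$. The scaled ball $t^* B$, with $B = \{u : G(u) \le 1\}$, is compact convex and $\Ext(t^* B) = t^*\,\Ext(B)$. By Choquet's theorem there is a probability measure $\pi$ supported on $\Ext(t^* B)$ with barycenter $u^*$. Pushing the barycenter identity through the continuous linear map $\aa$ gives $w^* := \aa u^* = \int \aa v \, d\pi(v)$, so $w^*$ lies in the convex hull of $\aa(\Ext(t^* B)) \subseteq H$. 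Since $\dim H < \infty$, Carathéodory's theorem lets me rewrite $w^* = \sum_{i=1}^{K}\lambda_i\,\aa v_i$ with $v_i \in \Ext(t^* B)$, $\lambda_i > 0$, $\sum_i \lambda_i = 1$ and $K \le \dim H + 1$. Setting $u' = \sum_i \lambda_i v_i$ I obtain $\aa u' = w^*$ and $G(u') \le \sum_i \lambda_i G(v_i) \le t^*$, whence $F(\aa u') + G(u') \le F(w^*) + t^* = m^*$; thus $u'$ is again a minimizer. Writing $u_i = v_i/t^* \in \Ext(B)$ and $\gamma_i = \lambda_i t^* > 0$ gives the asserted form $u' = \sum_i \gamma_i u_i$.

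To match the sharp count $K \le \dim H$ and the identity $\sum_i \gamma_i = G(u')$, I would refine as follows. Because $u^*$ minimizes $G$ subject to $\aa u = w^*$ (any competitor with strictly smaller norm on the same fibre would lower the objective), the point $w^*/t^*$ is a relative boundary point of the compact convex set $\aa(B) \subseteq H$; hence it lies in a proper face of $\aa(B)$ of dimension at most $\dim H - 1$, and Carathéodory inside that face needs only $K \le \dim H$ points. Finally, minimality forces $F(w^*) + G(u') \ge m^* = F(w^*) + t^*$, so $G(u') = t^* = \sum_i \gamma_i$; since each $G(v_i) = t^*$, the decomposition is automatically norm-additive.

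The main obstacle is the invocation of Choquet's theorem: in full generality (non-metrizable compact $\mathcal S$) it only provides, via Bishop--de Leeuw, a measure pseudo-supported on the extremal points, and one must ensure the Carathéodory representatives $v_i$ are genuine extremal points of $B$ rather than mere limits. In the non-compact parameter settings relevant to us $\Ext(B)$ need not even be closed, which is exactly where the coercivity/weak-$*$ compactness hypothesis does essential work. A cleaner alternative that sidesteps these measure-theoretic subtleties, and which I would ultimately prefer, is an inductive argument that exposes faces of $\aa(B)$ one dimension at a time via supporting functionals, lowering $\dim H$ at each step and terminating with an exact combination of genuine extremal points.
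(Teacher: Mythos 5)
First, a point of context: the paper does not prove this statement at all. \Cref{thm:bredies-carioni} is imported as a known result from \cite[Theorem~3.3]{MR4040623} (in simplified form, with $G$ a norm rather than a seminorm), and the appendix only proves the ingredient specific to the paper's application, namely the identification of $\Ext(B)$ for the total variation ball (\Cref{lem:Ext(B)}). So your attempt can only be judged against the external proof and on its own merits. Your existence step is correct and standard: coercivity, properness and lower semicontinuity of $F$ on the finite-dimensional space $H$ give a lower bound, so minimizing nets stay in a $G$-sublevel set, which is compact by the paper's definition of coercivity of $G$; lower semicontinuity of $F\circ\aa+G$ then yields a minimizer, and the solution set is nonempty, convex and compact.

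The sparse-representation step, however, has genuine gaps. (i) Your central tool, Choquet's theorem, requires metrizability of the compact set $t^*B$, which is not among the hypotheses: $U$ is an arbitrary locally convex space (in the paper's application $U=\mm(\Theta)$ with the weak$^*$ topology the ball happens to be metrizable because $C_0(\Theta)$ is separable, but the theorem is stated in general). In the non-metrizable case Bishop--de Leeuw only gives pseudo-support on $\Ext(t^*B)$, and, as you say yourself, the Carath\'eodory representatives may then fail to be genuine extremal points; flagging this does not repair it, and your proposed alternative (inductively exposing faces of $\aa(B)$) is a one-sentence sketch, not an argument. (ii) Even granting a measure $\pi$ with $\pi\bigl(\Ext(t^*B)\bigr)=1$, your claim that the barycenter $w^*$ lies in the \emph{convex hull} of $\aa\bigl(\Ext(t^*B)\bigr)$, rather than in its closed convex hull, is asserted without proof; since $\Ext(t^*B)$ need not be closed, this is exactly where limits of extremal points could sneak in, and it requires a supporting-hyperplane induction of its own. (iii) Your refinement to $K\le\dim H$ applies Carath\'eodory inside a proper face of $\aa(B)$: this produces extremal points of $\aa(B)$, not elements of $\aa\bigl(\Ext(B)\bigr)$, so you still must lift each extremal point $e_i$ of $\aa(B)$ to some $v_i\in\Ext(B)$ with $\aa v_i=e_i$ --- for instance by noting that the fiber $B\cap\aa^{-1}(e_i)$ is a nonempty compact convex face of $B$ and applying Krein--Milman to it --- or else show that your representing measure charges only that face; neither is done. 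For comparison, the proof in \cite{MR4040623} avoids Choquet theory and its metrizability issues altogether: it applies Krein--Milman to the compact convex solution set to obtain an extremal minimizer and then identifies extremal minimizers with sparse combinations of extremal points of the ball through a purely finite-dimensional face/dimension-reduction argument --- essentially the route your closing sentence gestures at, and the one you should have carried out.
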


\Cref{thm:bredies-carioni} is a simplified version of \cite[Theorem 3.3]{MR4040623},
where $G$ is only assumed to be a seminorm.  In such a case, the statement needs to take care of the kernel of $G$.
A seminorm $G$ is called coercive if, for all $R>0$, the set
\[
  \{ [u]\in U/\mathcal N : G(u)\leq R\}
\]
is compact in $U/\mathcal N$, where $\mathcal N$ is the kernel of $G$
(see Assumption~[H1] in  \cite{MR4040623}). 
\begin{rmk}\label{rmk:sparse-solution}
  In \Cref{thm:bredies-carioni}, the space $U$ is endowed with a
  topology weaker than the topology induced by the norm $G$ in order
  to ensure that   the closed balls are compact.
\end{rmk}



\section*{Acknowledgements}
The authors would like to thank Filippo De Mari, and especially Jaouad Mourtada and Koen Sanders for useful discussions. Indeed, the ideas in this paper were initially explored as Koen's summer project under the co-supervision of Jaouad. L.R. acknowledges support from the Center for Brains, Minds and Machines (CBMM), funded by NSF STC award CCF-1231216. L.R. also acknowledges the financial support of the European Research Council (grant SLING 819789), the AFOSR projects FA9550- 18-1-7009, FA9550- 17-1-0390 and BAA-AFRL-AFOSR-2016-0007 (European Office of Aerospace Research and Development), and the EU H2020-MSCA-RISE project NoMADS - DLV-777826. E.D.V. is a member of the Gruppo Nazionale per l’Analisi Matematica, la Probabilit\`a e le loro Applicazioni (GNAMPA) of the Istituto Nazionale di Alta Matematica (INdAM).

\printbibliography

\end{document}